\theoremstyle{definition}
\newtheorem{assumption}{Assumption}
\newtheorem{definition}{Definition}
\newtheorem{theorem}{Theorem}
\title{FedSSG: Expectation-Gated and History-Aware Drift Alignment for Federated Learning}
\name{Zhanting Zhou$^{\ast}$\thanks{$^{\ast}$Equal Contribution.}, 
      Jinshan Lai$^{\ast}$, Fengchun Zhang, Zeqin Wu and Fengli Zhang}
\address{School of Information and Software Engineering\\
University of Electronic Science and Technology of China\\
Chengdu, Sichuan, China}
\begin{document}
\ninept
\maketitle
\begin{abstract}
Non-IID data and partial participation induce client drift and inconsistent local optima in federated learning, causing unstable convergence and accuracy loss. We present \textbf{FedSSG}, a \emph{Stochastic Sampling-Guided}, \emph{history-aware} drift alignment method. FedSSG maintains a per-client drift memory that accumulates local model differences as a lightweight sketch of historical gradients; crucially, it \emph{gates} both the memory update and the local alignment term by a smooth function of the observed/expected participation ratio (a phase-by-expectation signal derived from the server sampler). This statistically grounded gate stays weak and smooth when sampling noise dominates early, then strengthens once participation statistics stabilize, contracting the local–global gap without extra communication. Across CIFAR-10/100 with 100/500 clients and 2–15\% participation, FedSSG consistently outperforms strong drift-aware baselines and accelerates convergence; on our benchmarks it improves test accuracy by up to a few points (e.g., $\sim$+0.9 on CIFAR-10 and $\sim$+2.7 on CIFAR-100 on average over the Top-2 baseline) and yields $\sim$4.5$\times$ faster target-accuracy convergence on average. The method adds only $O(d)$ client memory and a constant-time gate, and degrades gracefully to a mild regularizer under near-IID or uniform sampling. FedSSG shows that sampling statistics can be turned into a principled, history-aware phase control to stabilize and speed up federated training. CODE: \url{https://github.com/itoritsu/FedSSG}
\end{abstract}
\begin{keywords}
Federated Learning, Statistical Heterogeneity, Optimazation.
\end{keywords}
\section{Introduction}

\begin{figure*}[ht]
    \centering
    \setlength{\belowcaptionskip}{10pt}
    \subfloat[]{\includegraphics[width=0.23\textwidth]{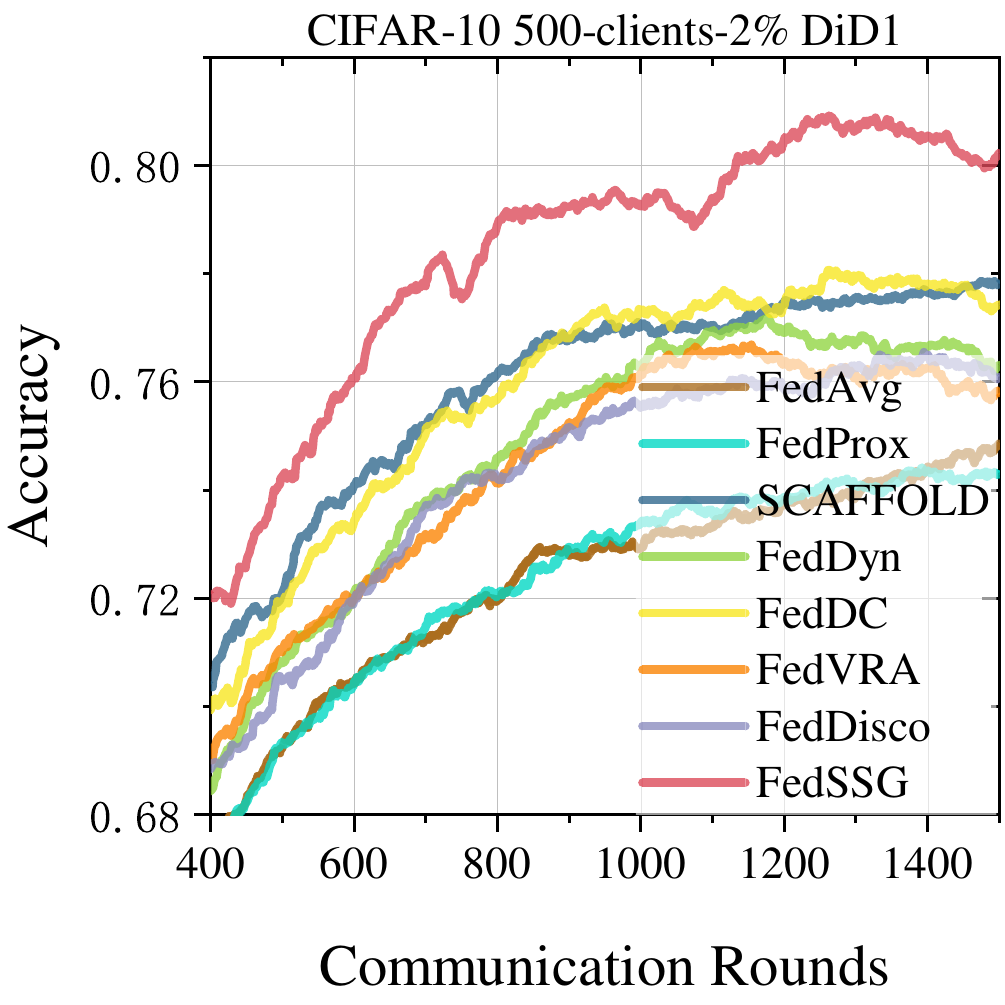}} \hfill
    \subfloat[]{\includegraphics[width=0.23\textwidth]{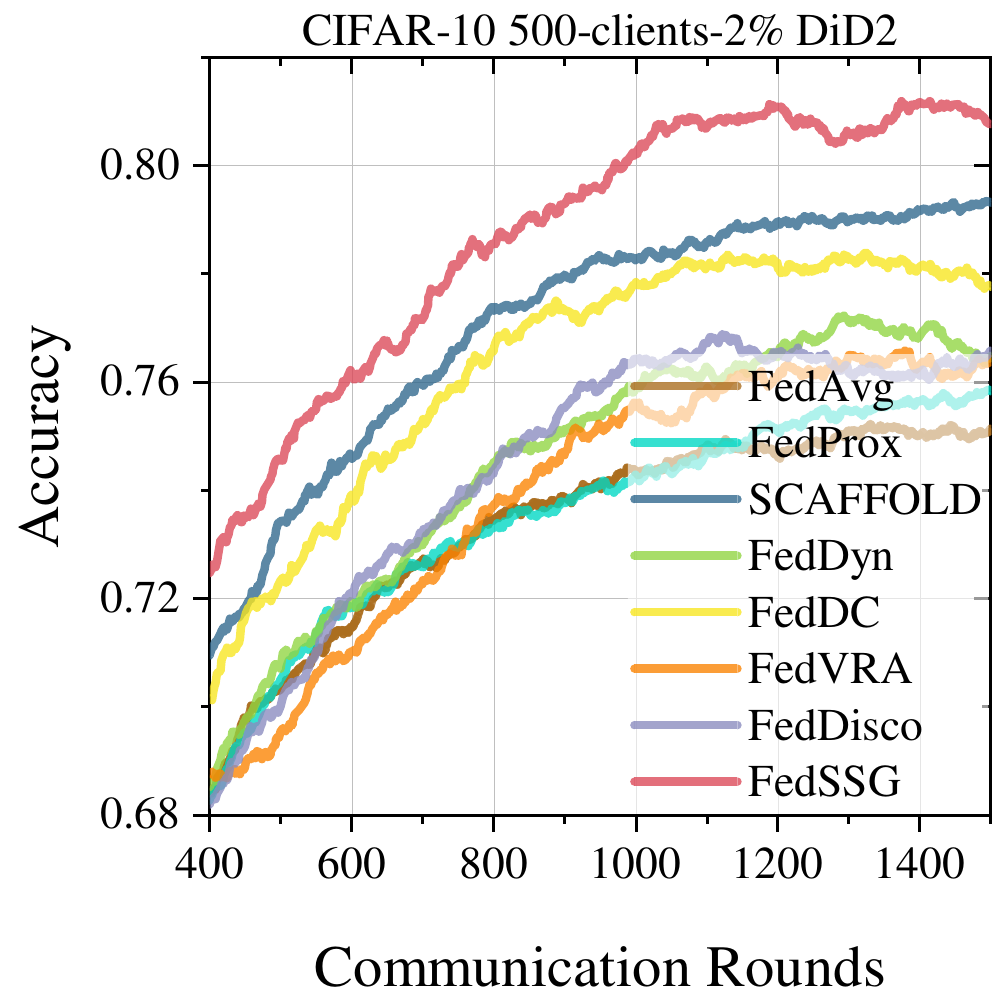}} \hfill
    \subfloat[]{\includegraphics[width=0.23\textwidth]{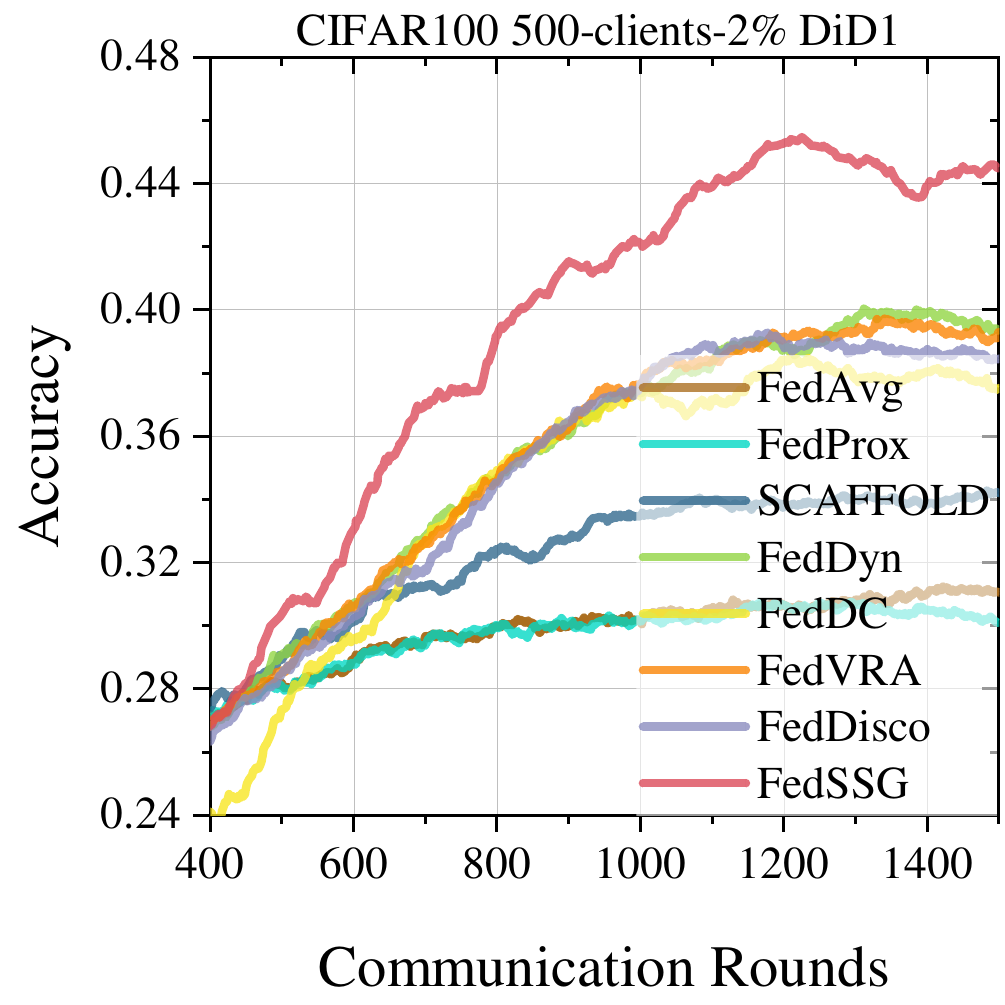}} \hfill
    \subfloat[]{\includegraphics[width=0.23\textwidth]{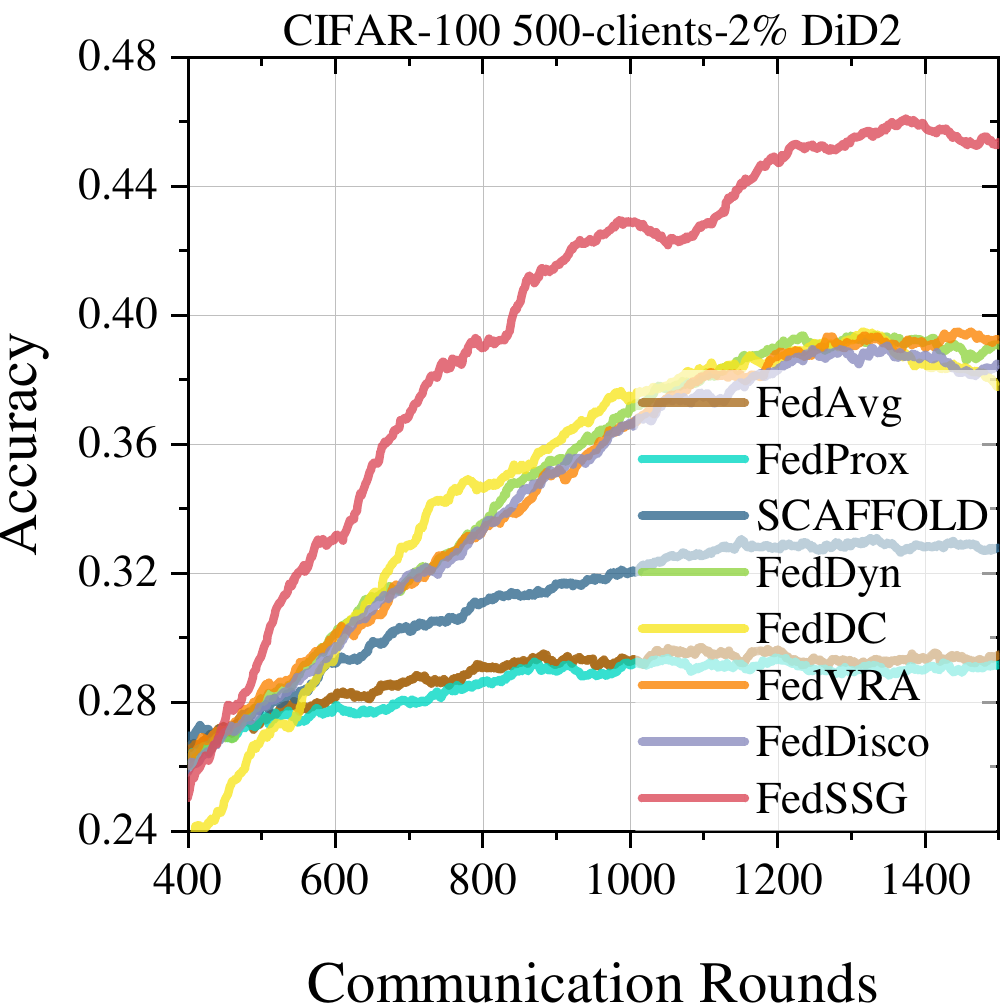}}
    \caption{\label{fig:performance}The learning curves of experiments on CIFAR-10/CIFAR-100 with 500-clients-2\%-participate-in DiD1/DiD2 settings. The test accuracy and convergence speed of FedSSG performs much superior than baselines. We use DiD1 and DiD2 to present the Dirichlet parameters are 0.3 and 0.6 respectively. }
        \setlength{\belowcaptionskip}{10pt}
\end{figure*}



\textbf{Background \& Challenge.}
Federated learning (FL) trains a global model on decentralized data while keeping raw data local~\cite{fedavg,kairouz2021openproblems}. 
In practical deployments, \emph{statistical heterogeneity} (non-IID) and \emph{partial participation} are ubiquitous and jointly induce \emph{client drift}—a persistent local--global mismatch that slows convergence and degrades final accuracy~\cite{scaffold,fedprox}. 
This phenomenon is exacerbated by stochastic client sampling: uneven selections across rounds change how “large” vs.\ “small” local updates should be interpreted across phases and inflate aggregation variance~\cite{wang2022unified,oort2021,tifl2020,fraboni2021clustered,rizk2021isfl}.

\textbf{Limitations of existing methods (inline related work).}
FedAvg and FedProx either ignore client drift or damp it using a \emph{fixed} proximal term~\cite{fedavg,fedprox}. 
SCAFFOLD reduces drift with control variates but incurs extra state and communication for these variates~\cite{scaffold}. 
Alignment-based methods such as FedDyn and FedDC introduce dynamic regularizers or explicit drift variables; however, the \emph{alignment strength} is typically kept fixed across training and thus ignores the statistical signal in the server's stochastic sampling (who participates how often)~\cite{feddyn,feddc}. 
Normalization-style fixes (e.g., FedNova) address objective inconsistency from unequal local steps, yet still do not leverage participation-phase statistics to adapt alignment strength~\cite{fednova}.

\textbf{Problem Setup.}
Let the global objective be
\begin{equation}
\min_{\omega\in\mathbb{R}^d}~ f(\omega)=\frac{1}{N}\sum_{i=1}^{N} f_i(\omega),\quad
f_i(\omega)=\mathbb{E}_{\zeta_i}\!\left[\ell(\omega;\zeta_i)\right],
\end{equation}
and define a drift proxy $\delta_i(\omega)=\big\|\mathbb{E}_{\zeta_i}\nabla f_i(\omega;\zeta_i)-\nabla f(\omega)\big\|$ (non-IID $\Rightarrow \delta_i$ large).
Following the ADMM/dynamic-regularization view~\cite{feddyn,feddc}, the local objective at round $t$ can be written as
\begin{equation}
\min_{\theta_i}\; f_i(\theta_i)\;+\;R_i\!\big(\theta_i;\,\omega^{t-1},\,h_i^{t-1}\big),
\end{equation}
where $h_i$ is a \emph{local drift variable} updated from local model differences and $R_i$ penalizes local--global inconsistency (e.g., inner-product or proximal forms).

\textbf{Our observation \& motivation.}
Under partial participation, update magnitudes have phase-specific meaning: early large steps often reflect global \emph{catch-up}, whereas later small but systematic steps reveal \emph{true heterogeneity}. 
A fixed alignment either over-regularizes early or under-regularizes late. 
This motivates a \emph{phase-by-expectation} gating: use the \emph{observed/expected participation ratio} to \emph{statistically} gate both the drift memory and the alignment term—weak and smooth when sampling noise dominates, stronger once participation statistics stabilize.

\textbf{Method in one line.}
For client $i$ at round $t$, let $c_i^{1:t}$ be the observed selection count and $\mu_i^t=\mathbb{E}[C_i^{1:t}]=p_i t$ the expected count under the server's sampler ($p_i=m/N$ for uniform sampling). 
Define $r_i^t=c_i^{1:t}/(\mu_i^t+\varepsilon)$ and a bounded, smooth mapping $\phi(\cdot)$. 
We update the drift memory by
\begin{equation}
h_i^t \leftarrow h_i^{t-1} + \underbrace{\phi(r_i^t)}_{\text{expectation-gated factor}}\;\Delta\theta_i^t,
\end{equation}
and use the \emph{same} statistical gate in the local objective:
\begin{equation}
\min_{\theta_i}\; f_i(\theta_i)\;+\;\underbrace{\phi(r_i^t)}_{\text{expectation-gated}}\;
\mathcal{A}\!\big(\theta_i,\;\omega^{t-1}-h_i^{t-1}\big),
\end{equation}
where $\mathcal{A}$ follows the inner-product/proximal alignment used in FedDyn/FedDC-style formulations.
This expectation-gated phase keeps early training unconstrained enough to approach the global optimum, then progressively strengthens late-stage alignment to shrink the local–global gap.

\textbf{Effectiveness \& Efficiency.}
On CIFAR-10/100 with heavy non-IID and $2\%$ participation in Fig.\ref{fig:performance}, our expectation-gated phase consistently accelerates convergence to a target accuracy and improves final accuracy over FedAvg, FedProx, SCAFFOLD, FedDyn, FedDC, FedVRA \cite{fedvra} and FedDisco \cite{feddisco}. The gating requires only $O(d)$ extra memory for $h_i$ and a constant-time scalar weight per step, preserving communication/computation efficiency while improving robustness to sampling-induced drift fluctuations. 

\section{Methodology}

\begin{figure}[t!]
    \centering
        \setlength{\belowcaptionskip}{10pt}
    \subfloat[]{\includegraphics[width=0.23\textwidth]{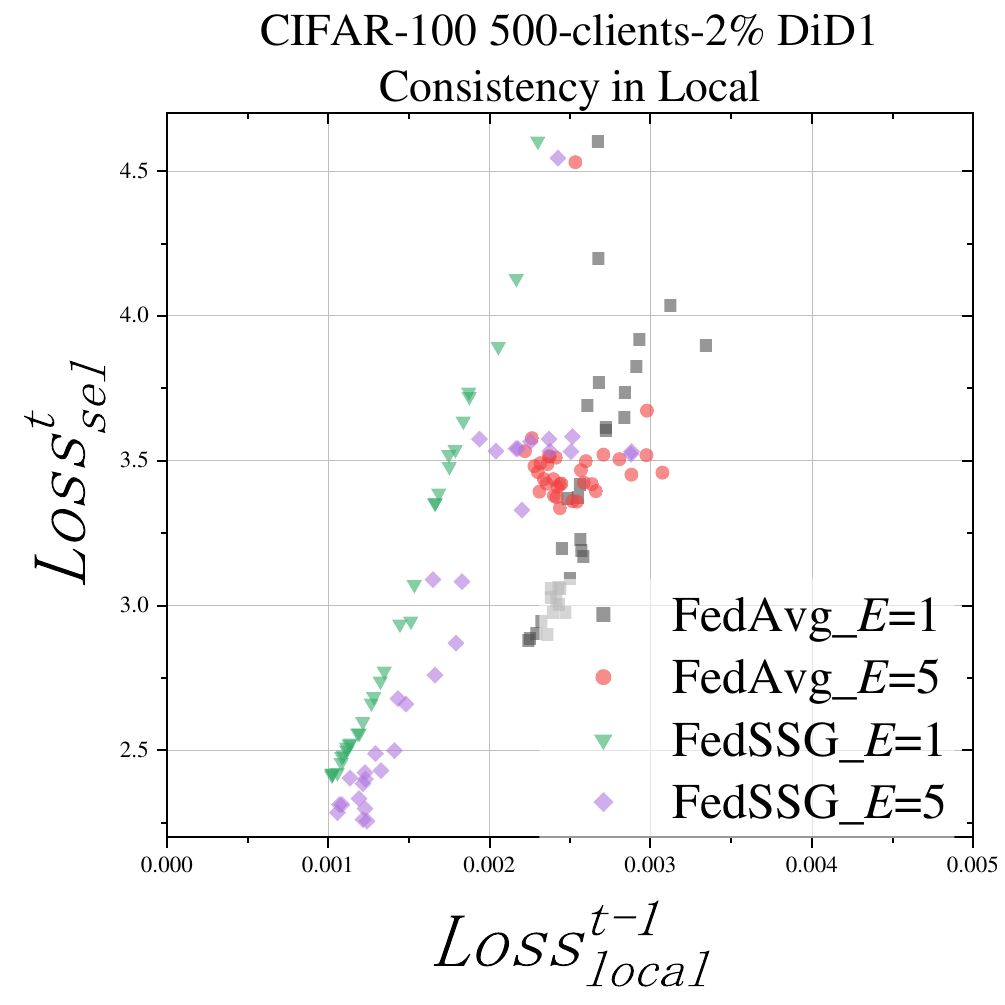}} \hfill
    \subfloat[]{\includegraphics[width=0.23\textwidth]{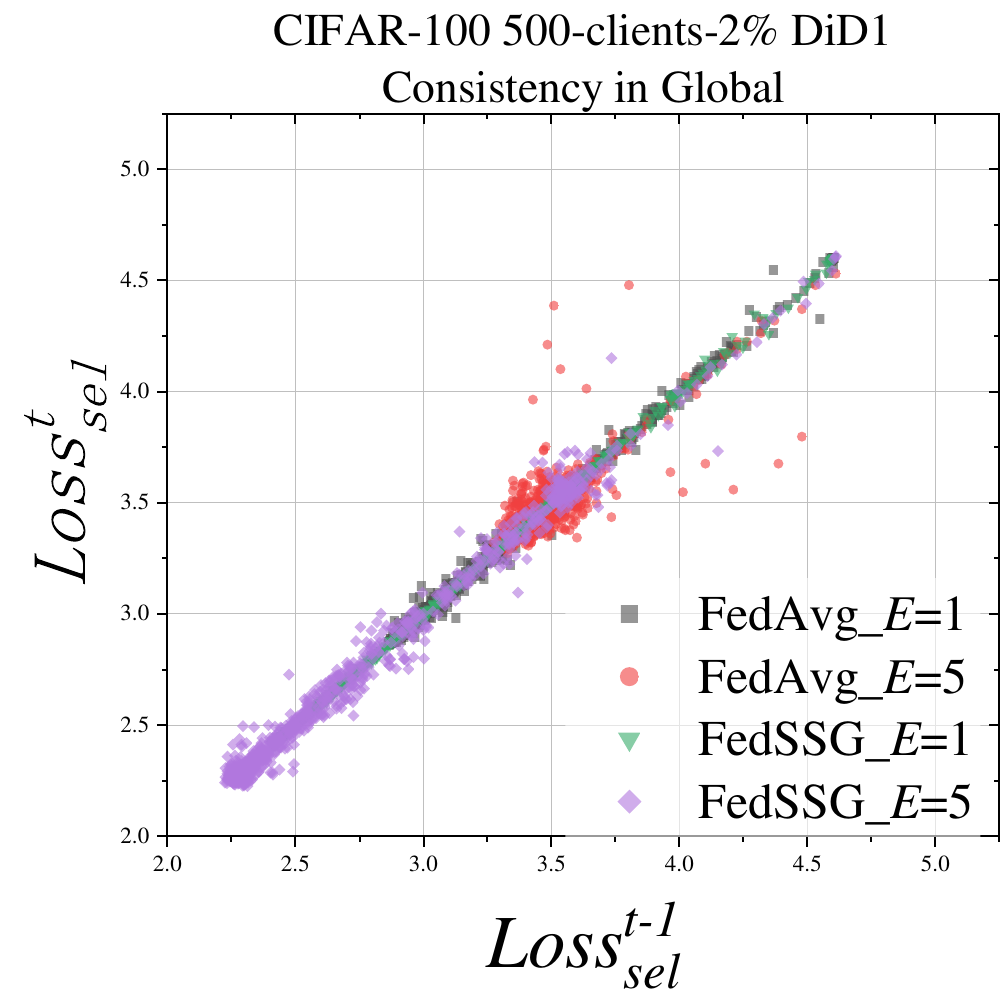}}
	
	\caption{Comparison for consistency of the same client between FedAvg and FedSSG between $(t-1)$ and $t$ times. For both figures, the more the scatter points are concentrated on the origin of the coordinate axis, the more consistent and efficient the method is.}
	\label{fig:kfc}
\end{figure}

\textbf{Observation motivating Fig.~\ref{fig:kfc}.}
Under non-IID and partial participation, early large local steps typically reflect \emph{global catch-up}, while late small but systematic steps reveal \emph{true heterogeneity}. A fixed-strength alignment either over-regularizes early or under-regularizes late, leading to the spread of points away from the origin in Fig.~\ref{fig:kfc}. We therefore gate both the drift memory and the alignment term by a \emph{statistical phase} inferred from sampling.

\textbf{Expectation-gated factor.}
Let $c_i^{1:t}$ be the observed number of selections of client $i$ up to round $t$, and let $\mu_i^{t}\triangleq \mathbb{E}[C_i^{1:t}]$ be the \emph{expected} selections under the server sampler (for uniform sampling, $\mu_i^{t}= (m/N)\,t$). Define the observed–expected ratio
\begin{equation}
r_i^{t} \;=\; \frac{c_i^{1:t}}{\mu_i^{t}+\varepsilon},
\qquad \varepsilon>0.
\label{eq:ratio}
\end{equation}
We map $r_i^{t}$ to a bounded gate via a smooth, monotone function $g:\mathbb{R}_+\!\to\![0,1]$ (e.g., logistic),
\begin{equation}
\phi(r_i^{t}) \;=\; \alpha \cdot g\!\big(r_i^{t}\big),
\qquad 0<\alpha\le \alpha_{\max},
\label{eq:gate}
\end{equation}
where $\alpha$ is the \emph{user hyperparameter} controlling the maximum gate strength (dataset-dependent).

\textbf{Memory update (unchanged logic).}
We update the drift memory by accumulating the local difference with the expectation-gated factor:
\begin{equation}
h_i^{t} \;\leftarrow\; h_i^{t-1} \;+\; \phi(r_i^{t})\,\Delta\theta_i^{t}.
\label{eq:memory}
\end{equation}

\textbf{Expectation-gated alignment in the local objective.}
The local objective solved at client $i$ in round $t$ is
\begin{equation}
\min_{\theta_i}\; f_i(\theta_i) \;+\; \phi(r_i^{t})\;
\mathcal{A}\!\left(\theta_i,\;\omega^{t-1}-h_i^{t-1}\right),
\label{eq:local}
\end{equation}
where $\mathcal{A}$ is an alignment/consistency term in the style of dynamic-regularization methods. Two canonical instantiations (both used in prior work) are:
\begin{align}
\text{(proximal)}~\mathcal{A}_{\text{prox}}(\theta_i,u)&=\tfrac{1}{2}\|\theta_i-u\|_2^2, \\
\text{(inner-product)}~\mathcal{A}_{\text{ip}}(\theta_i,u)&=\left\langle \theta_i-u,\; h_i^{t-1}\right\rangle.
\end{align}
Our implementation uses the same $\phi(r_i^{t})$ to gate both the memory update \eqref{eq:memory} and the alignment weight in \eqref{eq:local}.

\textbf{Why this contracts the "consistency scatter" in Fig.~\ref{fig:kfc}.}
Early in training, $t$ is small and $r_i^{t}$ has high variance; the bounded, smooth map $g(\cdot)$ keeps $\phi(r_i^{t})$ \emph{moderate}, avoiding premature over-regularization and allowing fast global catch-up. As $t$ grows, $\mu_i^{t}$ increases and $\mathrm{Var}(r_i^{t})\!\downarrow$, so $\phi(r_i^{t})$ becomes \emph{stable and stronger}, shrinking the local–global gap. Empirically, this manifests as the concentration of scatter points toward the origin in both local and global consistency plots.


\textbf{Complexity and Degenerate Cases}
PBE introduces $O(d)$ extra memory per client for $h_i$ and a constant-time scalar gate $\phi(\cdot)$. When sampling is (nearly) uniform or data is close to IID, $r_i^{t}\!\approx\!1$ and $\phi(r_i^{t})\!\approx\!\alpha\,g(1)$, so the method smoothly degenerates to a mild, nearly constant regularizer without hurting performance.

\textbf{Hyper-parameters and Default Choices}
We expose no extra knobs: the gate uses the observed/expected participation ratio directly, \(\phi(r)=r\) (identity), with a fixed numerical safeguard \(\varepsilon=10^{-6}\) and \(\beta=0\); there is no user-tuned amplitude or temperature. This calibration-free choice is consistent with analyses showing that partial participation primarily enters through sampling-variance terms—hence simple, data-driven reweighting suffices (optional bounded reweighting such as clipping \(r\) to \([0,1]\) can be adopted without introducing new hyper-parameters). We fix the alignment to the proximal form \(\mathcal{A}_{\text{prox}}(\theta_i,u)=\tfrac12\|\theta_i-u\|_2^2\); related drift-correction methods safely initialize auxiliary states at zero, supporting \(\beta\!=\!0\). These defaults match our implementation and keep FedSSG free of tunable strength parameters while remaining theoretically aligned with standard FL treatments of drift and participation. \cite{wang2022unified,fraboni2021clustered,rizk2021isfl,scaffold,feddyn,feddc}

\begin{table*}[t!]
	\centering
    			\caption{\label{tab:performances}Test accuracy (\%) comparison among baselines and our proposed method on the datasets from last 50 tests, presented in the format of "mean$\pm$standard". The bolded text highlights the best performance in the corresponding experimental setting. FedSSG performs superior than all advance methods on CIFAR-10/100, breaking through the bottleneck of these methods. All data is accurate to two decimal places.}
	\setlength{\tabcolsep}{1.6mm}

			\begin{tabular}{ccccccccc}
				\toprule
				Settings & FedAvg & FedProx & SCAFFOLD & FedDyn & FedDC & FedVRA & FedDisco & \textbf{FedSSG}\\
				\midrule
				\multicolumn{9}{c}{100-clients-15\%}\\
				\midrule
				MNIST\&IID   & $98.08_{\pm0.03}$ & $98.06_{\pm0.02}$ & $98.39_{\pm0.02}$ & $98.38_{\pm0.02}$ & $98.38_{\pm0.02}$ & $98.28_{\pm0.04}$ & $98.23_{\pm0.02}$ & $\mathbf{98.43_{\pm0.04}}$	 \\
				MNIST\&DiD1  & $97.70_{\pm0.03}$ & $97.70_{\pm0.04}$ & $98.32_{\pm0.03}$ & $98.10_{\pm0.04}$ & $98.29_{\pm0.03}$ & $98.10_{\pm0.02}$ & $98.08_{\pm0.02}$ & $\mathbf{98.42_{\pm0.04}}$ \\
				MNIST\&DiD2  & $97.84_{\pm0.02}$ & $97.81_{\pm0.02}$ & $98.41_{\pm0.02}$ & $98.12_{\pm0.03}$ & $98.40_{\pm0.02}$ & $98.17_{\pm0.02}$ & $98.13_{\pm0.03}$ & $\mathbf{98.46_{\pm0.02}}$ \\
				EMNIST-L\&IID   & $94.77_{\pm0.05}$ & $94.73_{\pm0.07}$ & $\mathbf{95.44_{\pm0.05}}$ & $95.15_{\pm0.08}$ & $95.32_{\pm0.07}$ & $94.81_{\pm0.06}$ & $95.05_{\pm0.08}$ & $95.32_{\pm0.06}$	 \\
				EMNIST-L\&DiD1  & $93.94_{\pm0.07}$ & $93.73_{\pm0.23}$ & $94.85_{\pm0.06}$ & $94.57_{\pm0.06}$ & $95.01_{\pm0.07}$ & $94.66_{\pm0.05}$ & $94.65_{\pm0.06}$ & $\mathbf{95.05_{\pm0.08}}$	 \\
				EMNIST-L\&DiD2  & $94.38_{\pm0.04}$ & $94.47_{\pm0.07}$ & $95.23_{\pm0.07}$ & $94.71_{\pm0.06}$ & $95.11_{\pm0.09}$ & $94.80_{\pm0.10}$ & $94.87_{\pm0.06}$ & $\mathbf{95.31_{\pm0.07}}$	 \\
				\midrule
				\multicolumn{9}{c}{100-clients-10\%}\\
				\midrule
				CIFAR-10\&IID   & $81.82_{\pm0.11}$ & $81.97_{\pm0.07}$ & $84.45_{\pm0.07}$ & $84.05_{\pm0.07}$ & $84.90_{\pm0.17}$ & $84.12_{\pm0.11}$ & $84.09_{\pm0.08}$ & $\mathbf{85.18_{\pm0.10}}$ \\
				CIFAR-10\&DiD1  & $79.94_{\pm0.09}$ & $79.99_{\pm0.06}$ & $83.08_{\pm0.07}$ & $82.27_{\pm0.10}$ & $83.20_{\pm0.15}$ & $82.21_{\pm0.19}$ & $82.44_{\pm0.11}$ & $\mathbf{83.61_{\pm0.17}}$ \\
				CIFAR-10\&DiD2  & $80.97_{\pm0.07}$ & $81.26_{\pm0.06}$ & $83.52_{\pm0.06}$ & $83.38_{\pm0.20}$ & $84.23_{\pm0.18}$ & $83.52_{\pm0.10}$ & $83.05_{\pm0.13}$ & $\mathbf{84.47_{\pm0.23}}$ \\
				CIFAR-100\&IID  & $40.16_{\pm0.14}$ & $40.54_{\pm0.11}$ & $48.23_{\pm0.10}$ & $51.29_{\pm0.12}$ & $54.20_{\pm0.15}$ & $51.21_{\pm0.18}$ & $50.88_{\pm0.18}$ & $\mathbf{55.06_{\pm0.31}}$ \\
				CIFAR-100\&DiD1 & $41.99_{\pm0.16}$ & $41.62_{\pm0.13}$ & $49.23_{\pm0.11}$ & $50.64_{\pm0.19}$ & $53.28_{\pm0.15}$ & $50.37_{\pm0.14}$ & $50.20_{\pm0.11}$ & $\mathbf{53.81_{\pm0.43}}$ \\
				CIFAR-100\&DiD2 & $42.01_{\pm0.08}$ & $41.25_{\pm0.14}$ & $48.94_{\pm0.15}$ & $50.52_{\pm0.13}$ & $53.54_{\pm0.16}$ & $50.30_{\pm0.15}$ & $50.47_{\pm0.11}$ & $\mathbf{55.09_{\pm0.32}}$ \\
				\midrule
				\multicolumn{9}{c}{500-clients-2\%}\\
				\midrule
				CIFAR-10\&IID   & $74.20_{\pm0.05}$ & $74.91_{\pm0.14}$ & $79.11_{\pm0.05}$ & $77.79_{\pm0.08}$ & $77.63_{\pm0.08}$ & $76.43_{\pm0.08}$ & $75.73_{\pm0.15}$ & $\mathbf{81.55_{\pm0.08}}$ \\
				CIFAR-10\&DiD1  & $74.66_{\pm0.10}$ & $74.25_{\pm0.06}$ & $77.75_{\pm0.09}$ & $76.51_{\pm0.17}$ & $77.59_{\pm0.17}$ & $75.84_{\pm0.08}$ & $76.25_{\pm0.09}$ & $\mathbf{80.17_{\pm0.17}}$ \\
				CIFAR-10\&DiD2  & $75.06_{\pm0.06}$ & $75.73_{\pm0.07}$ & $79.25_{\pm0.05}$ & $76.61_{\pm0.17}$ & $77.94_{\pm0.11}$ & $76.22_{\pm0.11}$ & $76.39_{\pm0.08}$ & $\mathbf{81.01_{\pm0.10}}$ \\
				CIFAR-100\&IID  & $27.72_{\pm0.17}$ & $27.58_{\pm0.10}$ & $31.65_{\pm0.08}$ & $36.27_{\pm0.12}$ & $37.22_{\pm0.13}$ & $36.08_{\pm0.73}$ & $37.25_{\pm0.06}$ & $\mathbf{47.17_{\pm0.13}}$ \\
				CIFAR-100\&DiD1 & $31.11_{\pm0.04}$ & $30.29_{\pm0.10}$ & $34.09_{\pm0.15}$ & $39.57_{\pm0.15}$ & $37.80_{\pm0.18}$ & $39.20_{\pm0.11}$ & $38.58_{\pm0.10}$ & $\mathbf{44.15_{\pm0.32}}$ \\
				CIFAR-100\&DiD2 & $29.34_{\pm0.07}$ & $29.08_{\pm0.07}$ & $32.83_{\pm0.09}$ & $38.90_{\pm0.15}$ & $38.24_{\pm0.17}$ & $39.32_{\pm0.10}$ & $38.33_{\pm0.12}$ & $\mathbf{45.46_{\pm0.14}}$ \\
				\bottomrule
			\end{tabular}

\end{table*}

\begin{table}[!t]
	\centering
	\caption{\label{tab:kf}Comparing performance across different epoch settings reveals the robustness on increasing local training of FedSSG. As the number of local epochs increases, FedSSG's performance improves. FedSSG has always been optimal method.}
	\begin{tabular}{cccc}
		\toprule
		\multirow{2}{*}{Methods}  & \multicolumn{3}{c}{CIFAR-100/500-clients-2\%/{DiD1}}\\
		&$E=1$& $E=5$ & $E=10$\\
		\midrule
		{FedAvg}  & $29.65_{\pm0.11}$ & $31.11_{\pm0.04}$ &  $29.80_{\pm0.16}$ \\
		{FedProx} & $30.24_{\pm0.06}$ & $30.29_{\pm0.10}$ &  $29.58_{\pm0.05}$ \\
		{SCAFFOLD}& $27.88_{\pm0.06}$ & $34.09_{\pm0.09}$ &  $31.45_{\pm0.08}$ \\
		{FedDyn}  & $37.03_{\pm0.13}$ & $39.57_{\pm0.15}$ &  $39.08_{\pm0.09}$ \\
		{FedDC}   & $34.34_{\pm0.11}$ & $37.80_{\pm0.18}$ &  $40.44_{\pm0.31}$ \\
		{FedVRA}  & $36.83_{\pm0.05}$ & $39.20_{\pm0.11}$ & $39.21_{\pm0.20}$\\
		{FedDisco}& $36.66_{\pm0.07}$ & $38.58_{\pm0.10}$ & $39.21_{\pm0.26}$\\
		{\textbf{FedSSG}}  & $\mathbf{38.87_{\pm0.17}}$ & $\mathbf{44.15_{\pm0.32}}$ &  $\mathbf{44.16_{\pm0.12}}$ \\
		\bottomrule
	\end{tabular}
    
\end{table}

\section{Convergence}

We have proved in Appendix \ref{appendix:b} that given a non-convex and $L$-Lipschitz smooth local objective function $f_i$, and $f_i$ is $B$-dissimilarity, there exists $L\_ > 0$, such that $\nabla^2 f_i \geq -L\_I$ and $\overline{\alpha} = \alpha-L\_ > 0$. The global objective function $f$ satisfies: 
\begin{equation}  \label{eqn:18}
	\mathbb{E}_{S^t}f(\omega^{t+1})\leq f(\omega^{t})-\frac{p||\nabla f(\omega^r)||^2}{2},
\end{equation}
where $p= (1-\psi B) (\frac{2}{\alpha} ) - \frac{B(1+\psi)\sqrt{2}}{\sqrt{K}} (\frac{2}{\overline{\alpha}} ) - L(1+B)\psi (\frac{2}{\alpha \overline{\alpha}} ) - L B^2 (1+\psi)^2 (\frac{1}{2} + \frac{2\sqrt{2K} + 2}{K}) (\frac{2}{\overline{\alpha}^2}) > 0$, and $S^t$ is selected clients in $t$-th round, means that achieving a rapid $\mathcal{O}(1/T)$ convergence rate under smooth convex conditions.

\section{Experiments}

\begin{table*}[t!]
	\centering
	\setlength{\tabcolsep}{1.4mm}
			\caption{\label{tab:speedup}The communication rounds of different methods to approach the same target accuracy. We denote the communication round of each method to approach the target accuracy as "R$\#$", the corresponding convergence speedup relative to FedAvg as "S$\uparrow$". We use "$>T$" sign to represent that the method could not approach the target accuracy. The bolded text highlights the best performance in the corresponding experimental setting. All data is accurate to two decimal places.}
			\begin{tabular}{ccccccccccccccccc}
				\toprule
				\multirow{2}{*}{Distrib.} & \multicolumn{2}{c}{FedAvg} & \multicolumn{2}{c}{FedProx} & \multicolumn{2}{c}{SCAFFOLD} & \multicolumn{2}{c}{FedDyn} & \multicolumn{2}{c}{FedDC} & \multicolumn{2}{c}{FedVRA} & \multicolumn{2}{c}{FedDisco} & \multicolumn{2}{c}{\textbf{FedSSG}}\\
				& R$\#$ & S$\uparrow$ & R$\#$ & S$\uparrow$ & R$\#$ & S$\uparrow$ & R$\#$ & S$\uparrow$ & R$\#$ & S$\uparrow$ & R$\#$ & S$\uparrow$ & R$\#$ & S$\uparrow$ & R$\#$ & S$\uparrow$\\
				\midrule
				\multicolumn{17}{c}{MNIST\&100-clients-15\%\&Target Accuracy=97\%}\\
				\midrule
				IID  & 45 & $1.00\times$ & 45 & $1.00\times$ & 26 & $1.73\times$ & 28 & $1.61\times$ & 25 & $1.80\times$ & 27 & $1.67\times$ & 29 & $1.55\times$ & \textbf{22} & $\mathbf{2.05\times}$ \\
				DiD1 & 76 & $1.00\times$ & 79 & $0.96\times$ & 35 & $2.17\times$ & 39 & $1.95\times$ & 31 & $2.45\times$ & 35 & $2.17\times$ & 37 & $2.05\times$ & \textbf{24} & $\mathbf{3.17\times}$\\
				DiD2 & 61 & $1.00\times$ & 62 & $0.98\times$ & 34 & $1.79\times$ & 35 & $1.74\times$ & 31 & $1.97\times$ & 33 & $1.97\times$ & 32 & $1.91\times$ & \textbf{23} & $\mathbf{2.65\times}$\\
                \midrule
				\multicolumn{17}{c}{EMNIST-L\&100-clients-15\%\&Target Accuracy=93\%}\\
				\midrule
				IID  & 46 & $1.00\times$ & 44 & $1.05\times$ & 27 & $1.70\times$ & 34 & $1.35\times$ & 25 & $1.84\times$ & 36 & $1.28\times$ & 34 & $1.35\times$ & \textbf{24} & $\mathbf{1.92\times}$ \\
				DiD1 & 94 & $1.00\times$ & 93 & $1.01\times$ & 43 & $2.19\times$ & 48 & $1.96\times$ & 37 & $2.54\times$ & 49 & $1.92\times$ & 44 & $2.14\times$ & \textbf{31} & $\mathbf{3.03\times}$\\
				DiD2 & 66 & $1.00\times$ & 71 & $0.93\times$ & 35 & $1.89\times$ & 39 & $1.69\times$ & 32 & $2.06\times$ & 37 & $1.78\times$ & 38 & $1.74\times$ & \textbf{29} & $\mathbf{2.28\times}$\\
				\midrule
				\multicolumn{17}{c}{CIFAR-10\&100-clients-10\%\&Target Accuracy=80\%}\\
				\midrule
				IID & 274 & $1.00\times$ & 284 & $0.96\times$ & 139 & $1.97\times$ & $160$ & $1.71\times$ & 138 & $1.99\times$ & 152 & $1.80\times$ & 142 & $1.93\times$ & \textbf{112} & $\mathbf{2.45\times}$ \\
				DiD1& $>$1000 & $1.00\times$ & 977 & $1.02\times$ & 253 & $3.95\times$ & 261 & $3.83\times$ & 207 & $4.83\times$ & 199 & $5.03\times$ & 282 & $3.55\times$ & \textbf{174} & $\mathbf{5.75\times}$ \\
				DiD2& 416 & $1.00\times$ & 404 & $1.03\times$ & 184 & $2.26\times$ & 185 & $2.25\times$ & 153 & $2.72\times$ & 163 & $2.55\times$ & 178 & $2.34\times$ & \textbf{134} & $\mathbf{3.10\times}$ \\
				\midrule
				\multicolumn{17}{c}{CIFAR-10\&500-clients-2\%\&Target Accuracy=75\%}\\
				\midrule
				IID & $>$1500 & $1.00\times$ & 1456 & $1.03\times$ & \textbf{533} & $\mathbf{2.81\times}$ & 879 & $1.71\times$ & 707 & $2.12\times$ & 1065 & $1.41\times$ & 1049 & $1.43\times$ & 571 & $2.63\times$ \\
				DiD1& $>$1500 & $1.00\times$ & $>$1500 & $1.00\times$ & 679 & $2.21\times$ & 838 & $1.79\times$ & 699 & $2.15\times$ & 880 & $1.70\times$ & 880 & $1.70\times$ & \textbf{544} & $\mathbf{2.76\times}$ \\
				DiD2 & 1285 & $1.00\times$ & 1164 & $1.10\times$ & 625 & $2.06\times$ & 888 & $1.45\times$ & 686 & $1.87\times$ & 910 & $1.41\times$ & 848 & $1.52\times$ & \textbf{515} & $\mathbf{2.50\times}$\\
				\midrule
				\multicolumn{17}{c}{CIFAR-100\&100-clients-10\%\&Target Accuracy=42\%}\\
				\midrule
				IID & $>$1000 & $1.00\times$ & $>$1000 & $1.00\times$ & 177 & $5.65\times$ & 190 & $5.26\times$ & 140 & $7.14\times$ & 198 & $5.05\times$ & 189 & $5.29\times$ & \textbf{120} & $\mathbf{8.33\times}$\\
				DiD1& 721 & $1.00\times$ & $>$1000 & $0.72\times$ & 151 & $4.77\times$ & 175 & $4.12\times$ & 143 & $5.04\times$ & 156 & $4.62\times$ & 192 & $3.76\times$ & \textbf{136} & $\mathbf{5.30\times}$\\
				DiD2& 790 & $1.00\times$ & $>$1000 & $0.79\times$ & 153 & $5.16\times$ & 180 & $4.39\times$ & 140 & $5.64\times$ & 157 & $5.03\times$ & 185 & $4.27\times$ & \textbf{121} & $\mathbf{6.53\times}$\\
				\midrule
				\multicolumn{17}{c}{CIFAR-100\&500-clients-2\%\&Target Accuracy=30\%}\\
				\midrule
				IID & $>$1500 & $1.00\times$ & $>$1500 & $1.00\times$ & 904 & $1.66\times$ & 659 & $2.29\times$ & 699 & $2.15\times$ & 706 & $2.12\times$ & 651 & $2.30\times$ & \textbf{571} & $\mathbf{2.63\times}$\\
				DiD1& 806 & $1.00\times$ & 805 & $1.00\times$ & 592 & $1.36\times$ & 549 & $1.47\times$ & 629 & $1.28\times$ & 559 & $1.44\times$ & 582 & $1.38\times$ & \textbf{497} & $\mathbf{1.62\times}$\\
				DiD2& $>$1500 & $1.00\times$ & $>$1500 & $1.00\times$ & 671 & $2.24\times$ & 593 & $2.53\times$ & 610 & $2.46\times$ & 598 & $2.51\times$ & 612 & $2.45\times$ & \textbf{508} & $\mathbf{2.95\times}$\\
				\bottomrule
			\end{tabular}

\end{table*}

\subsection{Setup}

\textbf{Datsets} To ensure the validity of the experiments, we complete our experiments on four widely-acknowledged real-world datasets as follow: MNIST \cite{mnist}, EMNIST-L \cite{emnistl}, CIFAR-10 and CIFAR-100 \cite{cifar}. For 100-clients experiments, we run 300 communication rounds for MNIST and EMNIST-L and 1000 communication rounds for CIFAR-10 and CIFAR-100; For 500-clients experiments, we run 600 communication rounds for EMNIST-L and 1500 communication rounds for CIFAR-10 and CIFAR-100. To simulate the real-world effects of non-iid data, we set the label ratios follow the Dirichlet distribution. In subsection Results, we use DiD1 and DiD2 to present the Dirichlet parameters are 0.3 and 0.6 respectively.

\textbf{Baselines.} To demonstrate the effectiveness of FedSSG, we compare it with several widely-recognized and advanced methods, including FedAvg \cite{fedavg}, FedProx \cite{fedprox}, SCAFFOLD \cite{scaffold}, FedDyn \cite{feddyn}, FedDC \cite{feddc}, FedVRA \cite{fedvra} and FedDisco \cite{feddisco}.

\textbf{Hyper-parameters.} Our empirical study already shows broad robustness without per-run tuning, while theory only requires $\alpha$ to stay within a conservative band.

\subsection{Effectiveness \& Robustness}
Table~\ref{tab:performances} shows that \textbf{FedSSG} achieves the highest test accuracy on both CIFAR-10 and CIFAR-100 across most settings, exceeding the Top-2 baseline by \(\approx\) 0.91 points on CIFAR-10 and \(\approx\)2.66 points on CIFAR-100 on average (rounded to two decimals). 
The gains persist under strong heterogeneity: across the nine combinations of client-participation rates (100/15\%, 100/10\%, 500/2\%) and data splits (IID, DiD1, DiD2), FedSSG consistently outperforms or ties the best baseline, with the largest margins in the challenging 500-clients–2\% regime. 
Figure~\ref{fig:kfc} further confirms \emph{consistency contraction}: the local/global loss deltas concentrate closer to the origin for FedSSG than for baselines, indicating reduced client drift and a smaller local–global gap. 
Robustness to local training depth is evidenced by Table~\ref{tab:kf}: FedSSG remains optimal as the local epoch \(E\) increases (e.g., \(E\!=\!1,5,10\)), whereas dynamic/alignment baselines (e.g., FedDyn, FedDC) fluctuate—supporting that expectation-gated alignment stabilizes late-phase training under non-IID and partial participation. These observations align with prior findings shown in Appendix \ref{appendix:a} on drift correction and dynamic regularization in FL.

\subsection{Effiency}
We evaluate convergence by fixing a target accuracy per setting and reporting speedup versus FedAvg. As summarized in Table~\ref{tab:speedup}, \textbf{FedSSG} attains the best (or tied-best) convergence time in nearly all conditions, with an average \(\mathbf{4.48\times}\) speedup over CIFAR-10/100. Notably, in CIFAR-100/100-clients–15\%/IID, FedSSG reaches the target in \(\mathbf{10\times}\) fewer rounds than FedAvg. 
Because the target is set at the highest level barely attainable by FedAvg, the comparison is conservative; yet FedSSG still delivers consistent acceleration under stronger heterogeneity and reduced participation, indicating that expectation-gated phase both improves optimization efficiency and preserves stability—complementary to the literature on proximal/control-variate and dynamic-regularization methods. These highlighting FedSSG's effiency in non-iid settings.

\clearpage
\bibliographystyle{IEEEbib}
\bibliography{refs}

\clearpage

\appendix
\appendix

\section{More Details of Experiment Results}\label{appendix:a}
Our experiments are realized on the real-world datasets including MNIST, EMNIST-L, CIFAR10 and CIFAR100. We utilize comparison experiments with FedAvg, FedProx, SCAFFOLD, FedDyn, FedDC, FedVRA and FedDisco methods as baselines to prove the superior performance of FedSSG. We use Dirichlet distribution with parameters as 0.3 (DiD1), 0.6 (DiD2) and 1 (IID) to simulate non-iid scenarios in the real world. The heatmap of different Dirichlet distribution is shown as Figure \ref{fig:heatmap}. Meanwhile, we run experiments on different participation rate, including 15\% on 100 clients, 10\% on 100 clients and 2\% on 500 clients, to prove the effectiveness of FedSSG algorithm in the case of non-iid. To ensure the fairness and effectiveness of experiments, we set the hyper-parameters $\alpha$ as text which make the performance of each methods best. The detailed descriptions of experiment settings of each datasets are shown in the following.
\subsection{Datasets, Models and Hyper-parameters}

\subsubsection{MNIST\&EMNIST-L} As widely accepted real-world datasets, MNIST is a 10-categorical handwritten-classification dataset and EMNIST-L is a 26-categorical characters-classification dataset. We set the sample size as $\{1\times28\times28\}$. We use a fully-connected network (FCN) as the classification model for these two datasets. For MNIST, we set training set as 60000 and test set as 10000, and use two hidden layers with 200 neurons for classification; For EMNIST-L, we set training set as 48000 and test set as 8000, and use one hidden layer with 100 neurons for classification.

\subsubsection{CIFAR10\&CIFAR100} CIFAR10 and CIFAR100 are both more challenging computer visual classification datasets than MNIST and EMNIST-L, which have 10 or 100 categories respectively. We set the sample size to $\{3\times32\times32\}$ and split the dataset into 50000 training images and 10000 test images. We use the network with CNN-based structure, including two conventional layers with 64 of $5\times5$ convolution kernels, to realize images classification.

\subsubsection{Hyper-parameters} All training hyper-parameters in our experiments for different datasets are set as follow: the batch size is set as 50; the number of epochs in each client is set as 5; the initial learning rate is set as 0.1; the learning rate decay per round is set as 0.998; and the weight decay is set as 0.001. For FedSSG, we search $\alpha$ as following space: (1) For MNIST, we search $\{0.01,0.02,0.03,0.04,0.05,0.06\}$; (2) For EMNIST-L, we search $\{0.01,0.02,0.05,0.06,0.1\}$ with 100-clients and $\{0.1,0.2,0.5,0.6,0.7\}$ with 500-clients; (3) For CIFAR-10 and CIFAR-100, we search $\{0.01,0.02,0.03,0.05,0.1\}$ with 100-clients and $\{0.01,0.02,0.5,0.1,0.2\}$. We set $\alpha$ as following, where introduced $\alpha$ on different datasets in terms of \{IID,DiD1,DiD2\}: For MNIST with all 100-clients setting, $\alpha=\{0.03,0.05,0.04\}$. For EMNIST-L with all 100-clients setting, $\alpha=\{0.05,0.06,0.06\}$; with all 500-clients setting, $\alpha=\{0.7,0.6,0.6\}$. For CIFAR-10 with all 100-clients setting, $\alpha=0.05$; with all 500-clients setting, $\alpha=0.2$. For CIFAR-100 with all 100-clients setting, $\alpha=0.02$; with all 500-clients setting, $\alpha=0.2$. We display the performance and loss of different hyper-parameters in Figure \ref{fig:MP15H}-\ref{fig:C100P2H}. And we will discuss their performance in the later sections. 
For the hyper-parameters in other baselines, we set FedDC's $\alpha=0.1$ for MNIST and EMNIST-L with 100-clients; set $\alpha=0.2$ for E-MNIST with 500-clients; set $\alpha=0.01$ for CIFAR10 and CIFAR-100 with 100-clients and set $\alpha=0.05$ with 500-clients. We set $\alpha=0.01$ in FedDyn, $\mu=0.0001$ and weight decay as $=10^{-5}$ in FedProx. For FedVRA, we set $d_i^{r}=10$, $a_i^{r}=7$ and $\gamma_i=0.1$. For FedDisco, we use FedDyn with Disco to build.
To ensure the rigor of the experiment, the same random seed was used for all random processes in the same setting.
\begin{figure}[htbp]
    \centering
    \subfloat[]{\includegraphics[width=0.23\textwidth]{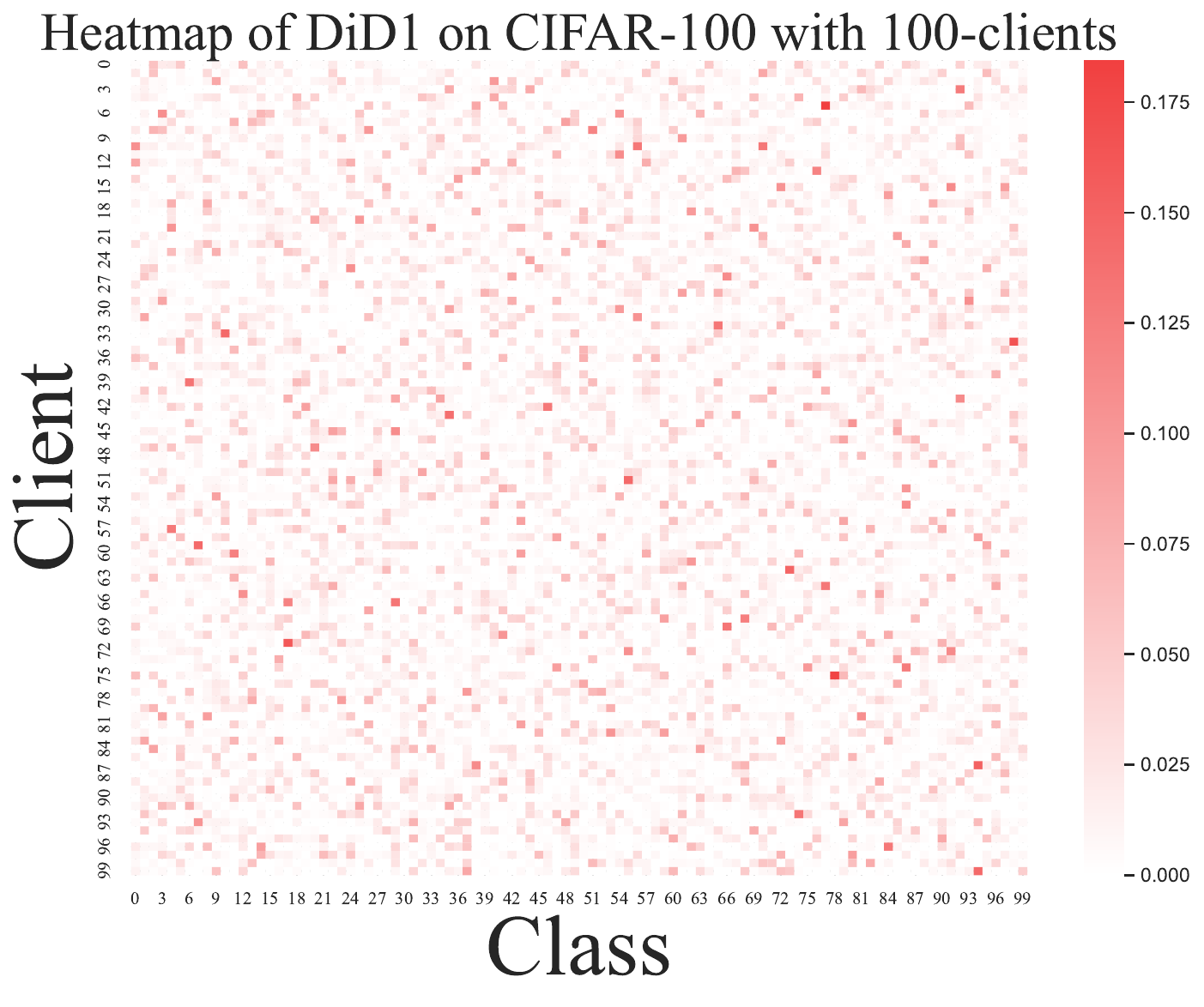}} \hfill
    \subfloat[]{\includegraphics[width=0.23\textwidth]{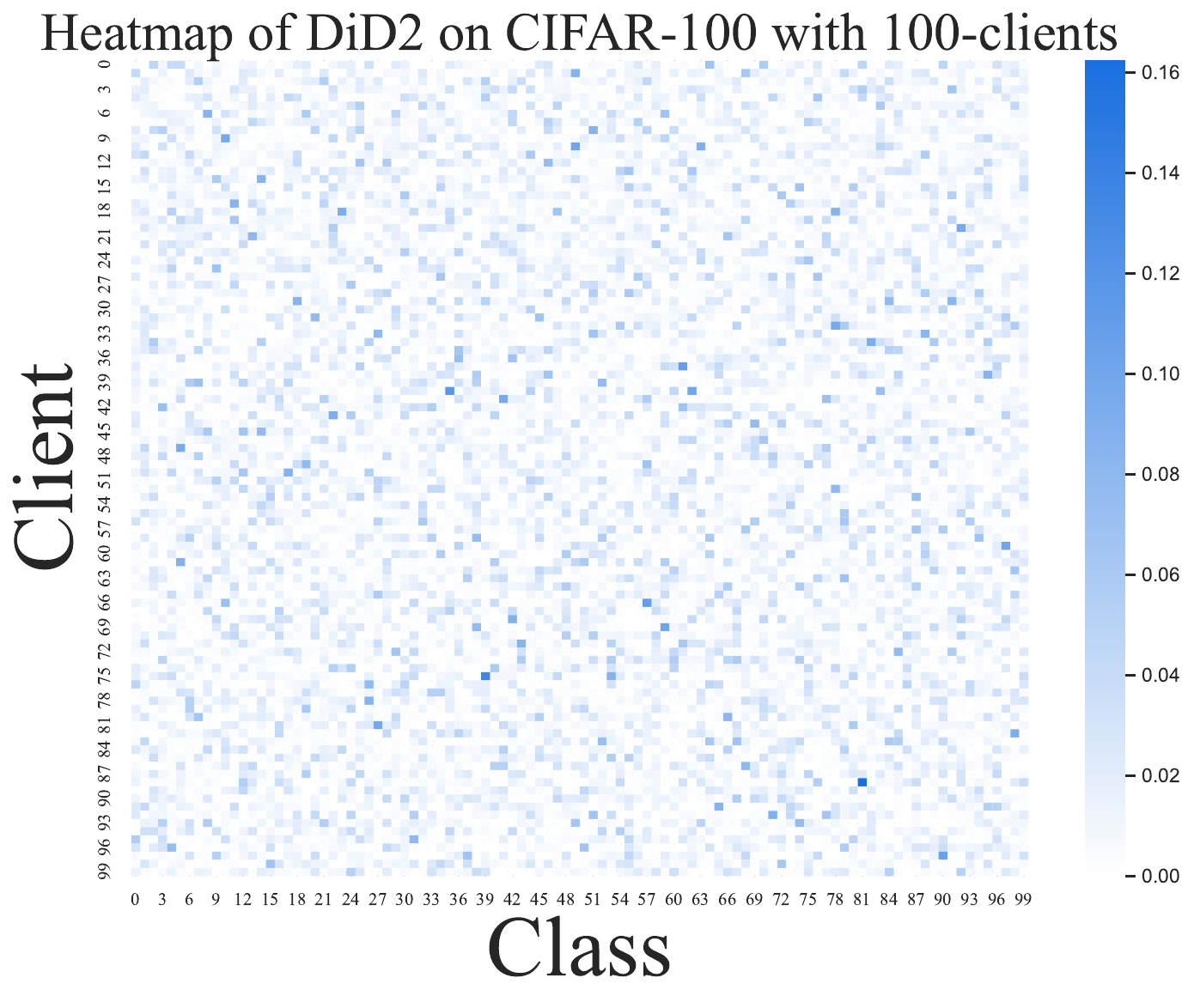}} \hfill
    \caption{The heatmap of Dirichlet distribution setting in experiments.}
    \label{fig:heatmap}
\end{figure}

\subsection{Experiments Analysis}

In the section, we will specifically discuss and analyze FedSSG's performances with accuracy, convergence time and hyper-parameter's comparison on each datasets. Table \ref{tab:performancesA}-\ref{tab:speedupA} shows the performances on MNIST and EMNIST-L; Figure \ref{fig:MP15}-\ref{fig:C100P2} shows the learning curves on each datasets; and Figure \ref{fig:MP15H}-\ref{fig:C100P2H} shows the hyper-parameter's comparison on each datasets. All experiments' settings follow as the text to be consistent. Specifically, we use "mean$\pm$standard" to compare the performance; since we took the target convergence frequency of FedAvg as the benchmark for speed up, we set the target accuracy to a level barely attainable by FedAvg to ensure the effectiveness and fairness of our comparison of convergence time; the learning curves show the tendency of FedSSG and baselines; and learning curves and loss curves display difference of hyper-parameters. We will complete the discussion in the order of MNIST, EMNIST-L, CIFAR-10, CIFAR-100. 


\subsubsection{Discussion on MNIST and EMNIST-L}

\begin{table*}[h!]
	\centering
    			\caption{\label{tab:performancesA}Test accuracy (\%) comparison among baselines and our proposed method on the MNIST/EMNIST-L dataset from last 50 tests, presented in the format of "mean$\pm$standard". The bolded text highlights the best performance in the corresponding experimental setting. All data is accurate to two decimal places.}
	\setlength{\tabcolsep}{1mm}
	\small
			\begin{tabular}{ccccccccc}
				\toprule
				Settings & FedAvg & FedProx & SCAFFOLD & FedDyn & FedDC & FedVRA & FedDisco & FedSSG\\
				\midrule
				\multicolumn{9}{c}{100-clients-15\%}\\
				\midrule
				MNIST\&IID   & $98.08_{\pm0.03}$ & $98.06_{\pm0.02}$ & $98.39_{\pm0.02}$ & $98.38_{\pm0.02}$ & $98.38_{\pm0.02}$ & $98.28_{\pm0.04}$ & $98.23_{\pm0.02}$ & $\mathbf{98.43_{\pm0.04}}$	 \\
				MNIST\&DiD1  & $97.70_{\pm0.03}$ & $97.70_{\pm0.04}$ & $98.32_{\pm0.03}$ & $98.10_{\pm0.04}$ & $98.29_{\pm0.03}$ & $98.10_{\pm0.02}$ & $98.08_{\pm0.02}$ & $\mathbf{98.42_{\pm0.04}}$ \\
				MNIST\&DiD2  & $97.84_{\pm0.02}$ & $97.81_{\pm0.02}$ & $98.41_{\pm0.02}$ & $98.12_{\pm0.03}$ & $98.40_{\pm0.02}$ & $98.17_{\pm0.02}$ & $98.13_{\pm0.03}$ & $\mathbf{98.46_{\pm0.02}}$ \\
				EMNIST-L\&IID   & $94.77_{\pm0.05}$ & $94.73_{\pm0.07}$ & $\mathbf{95.44_{\pm0.05}}$ & $95.15_{\pm0.08}$ & $95.32_{\pm0.07}$ & $94.81_{\pm0.06}$ & $95.05_{\pm0.08}$ & $95.32_{\pm0.06}$	 \\
				EMNIST-L\&DiD1  & $93.94_{\pm0.07}$ & $93.73_{\pm0.23}$ & $94.85_{\pm0.06}$ & $94.57_{\pm0.06}$ & $95.01_{\pm0.07}$ & $94.66_{\pm0.05}$ & $94.65_{\pm0.06}$ & $\mathbf{95.05_{\pm0.08}}$	 \\
				EMNIST-L\&DiD2  & $94.38_{\pm0.04}$ & $94.47_{\pm0.07}$ & $95.23_{\pm0.07}$ & $94.71_{\pm0.06}$ & $95.11_{\pm0.09}$ & $94.80_{\pm0.10}$ & $94.87_{\pm0.06}$ & $\mathbf{95.31_{\pm0.07}}$	 \\
				\midrule
				\multicolumn{9}{c}{100-clients-10\%}\\
				\midrule
				MNIST\&IID   & $98.07_{\pm0.03}$ & $98.04_{\pm0.04}$ & $98.32_{\pm0.01}$ & $98.24_{\pm0.04}$ & $98.36_{\pm0.03}$ & $98.26_{\pm0.02}$ & $98.21_{\pm0.02}$ & $\mathbf{98.47_{\pm0.02}}$ \\
				MNIST\&DiD1  & $97.70_{\pm0.03}$ & $97.69_{\pm0.03}$ & $98.31_{\pm0.03}$ & $98.09_{\pm0.03}$ & $\mathbf{98.37_{\pm0.04}}$ & $98.08_{\pm0.01}$ & $98.10_{\pm0.02}$ & $98.34_{\pm0.04}$ \\
				MNIST\&DiD2  & $97.94_{\pm0.05}$ & $97.94_{\pm0.04}$ & $98.34_{\pm0.02}$ & $98.18_{\pm0.02}$ & $98.38_{\pm0.03}$ & $98.14_{\pm0.01}$ & $98.18_{\pm0.02}$ & $\mathbf{98.39_{\pm0.03}}$ \\
				EMNIST-L\&IID   & $94.54_{\pm0.04}$ & $94.65_{\pm0.04}$ & $\mathbf{95.49_{\pm0.07}}$ & $94.79_{\pm0.08}$ & $95.05_{\pm0.08}$ & $94.98_{\pm0.06}$ & $94.74_{\pm0.06}$ & $95.30_{\pm0.07}$	 \\
				EMNIST-L\&DiD1  & $94.01_{\pm0.04}$ & $94.03_{\pm0.10}$ & $\mathbf{94.84_{\pm0.09}}$ & $94.76_{\pm0.10}$ & $94.65_{\pm0.08}$ & $94.34_{\pm0.05}$ & $94.36_{\pm0.08}$ & $94.82_{\pm0.18}$	 \\
				EMNIST-L\&DiD2  & $94.23_{\pm0.09}$ & $94.31_{\pm0.05}$ & $95.14_{\pm0.05}$ & $94.65_{\pm0.10}$ & $\mathbf{95.19_{\pm0.09}}$ & $94.78_{\pm0.07}$ & $94.72_{\pm0.08}$ & $95.18_{\pm0.10}$	 \\
				\midrule
				\multicolumn{9}{c}{500-clients-2\%}\\
				\midrule
				EMNIST-L\&IID   & $93.72_{\pm0.04}$ & $93.78_{\pm0.02}$ & $94.69_{\pm0.04}$ & $93.93_{\pm0.03}$ & $94.25_{\pm0.04}$ & $94.19_{\pm0.04}$ & $94.26_{\pm0.02}$ & $\mathbf{94.78_{\pm0.13}}$	 \\
				EMNIST-L\&DiD1  & $93.20_{\pm0.02}$ & $93.26_{\pm0.03}$ & $94.25_{\pm0.02}$ & $93.52_{\pm0.06}$ & $94.16_{\pm0.04}$ & $93.77_{\pm0.04}$ & $93.70_{\pm0.02}$ & $\mathbf{94.57_{\pm0.05}}$	 \\
				EMNIST-L\&DiD2  & $93.45_{\pm0.04}$ & $93.17_{\pm0.16}$ & $94.55_{\pm0.04}$ & $93.93_{\pm0.06}$ & $93.90_{\pm0.05}$ & $93.89_{\pm0.07}$ & $93.80_{\pm0.07}$ & $\mathbf{94.77_{\pm0.05}}$	 \\
				\bottomrule
			\end{tabular}

\end{table*}

\begin{table*}[h!]
	\centering
        			\caption{\label{tab:speedupA}The communication rounds of different methods to approach the same target accuracy. We denote the communication round of each method to approach the target accuracy as "R$\#$", the corresponding convergence speedup relative to FedAvg as "S$\uparrow$". We use "$>T$" sign to represent that the method could not approach the target accuracy. The bolded text highlights the best performance in the corresponding experimental setting. To make it easier to compare performance, we use underline to highlight the second highest accuracy. All data is accurate to two decimal places.}
	\setlength{\tabcolsep}{1mm}
	\small
			\begin{tabular}{ccccccccccccccccc}
				\toprule
				\multirow{2}{*}{Distribution} & \multicolumn{2}{c}{FedAvg} & \multicolumn{2}{c}{FedProx} & \multicolumn{2}{c}{SCAFFOLD} & \multicolumn{2}{c}{FedDyn} & \multicolumn{2}{c}{FedDC} & \multicolumn{2}{c}{FedVRA} & \multicolumn{2}{c}{FedDisco} & \multicolumn{2}{c}{FedSSG}\\
				& R$\#$ & S$\uparrow$ & R$\#$ & S$\uparrow$ & R$\#$ & S$\uparrow$ & R$\#$ & S$\uparrow$ & R$\#$ & S$\uparrow$ & R$\#$ & S$\uparrow$ & R$\#$ & S$\uparrow$ & R$\#$ & S$\uparrow$\\
				\midrule
				\multicolumn{17}{c}{MNIST\&100-clients-15\%\&Target Accuracy=97\%}\\
				\midrule
				IID  & 45 & $1.00\times$ & 45 & $1.00\times$ & 26 & $1.73\times$ & 28 & $1.61\times$ & 25 & $1.80\times$ & 27 & $1.67\times$ & 29 & $1.55\times$ & \textbf{22} & $\mathbf{2.05\times}$ \\
				DiD1 & 76 & $1.00\times$ & 79 & $0.96\times$ & 35 & $2.17\times$ & 39 & $1.95\times$ & 31 & $2.45\times$ & 35 & $2.17\times$ & 37 & $2.05\times$ & \textbf{24} & $\mathbf{3.17\times}$\\
				DiD2 & 61 & $1.00\times$ & 62 & $0.98\times$ & 34 & $1.79\times$ & 35 & $1.74\times$ & 31 & $1.97\times$ & 33 & $1.97\times$ & 32 & $1.91\times$ & \textbf{23} & $\mathbf{2.65\times}$\\
				\midrule
				\multicolumn{17}{c}{MNIST\&100-clients-10\%\&Target Accuracy=97\%}\\
				\midrule
				IID  & 54 & $1.00\times$ & 55 & $0.98\times$ & 31 & $1.74\times$ & 31 & $1.74\times$ & 28 & $1.93\times$ & 33 & $1.64\times$ & 33 & $1.64\times$ & \textbf{24} & $\mathbf{2.25\times}$ \\
				DiD1 & 83 & $1.00\times$ & 76 & $1.09\times$ & 42 & $1.98\times$ & 43 & $1.93\times$ & 38 & $2.18\times$ & 42 & $1.98\times$ & 40 & $2.08\times$ & \textbf{33} & $\mathbf{2.52\times}$\\
				DiD2 & 67 & $1.00\times$ & 66 & $1.02\times$ & 35 & $1.91\times$ & 38 & $1.76\times$ & 33 & $2.03\times$ & 41 & $1.63\times$ & 40 & $1.68\times$ & \textbf{27} & $\mathbf{2.48\times}$\\
				\midrule
				\multicolumn{17}{c}{EMNIST-L\&100-clients-15\%\&Target Accuracy=93\%}\\
				\midrule
				IID  & 46 & $1.00\times$ & 44 & $1.05\times$ & 27 & $1.70\times$ & 34 & $1.35\times$ & 25 & $1.84\times$ & 36 & $1.28\times$ & 34 & $1.35\times$ & \textbf{24} & $\mathbf{1.92\times}$ \\
				DiD1 & 94 & $1.00\times$ & 93 & $1.01\times$ & 43 & $2.19\times$ & 48 & $1.96\times$ & 37 & $2.54\times$ & 49 & $1.92\times$ & 44 & $2.14\times$ & \textbf{31} & $\mathbf{3.03\times}$\\
				DiD2 & 66 & $1.00\times$ & 71 & $0.93\times$ & 35 & $1.89\times$ & 39 & $1.69\times$ & 32 & $2.06\times$ & 37 & $1.78\times$ & 38 & $1.74\times$ & \textbf{29} & $\mathbf{2.28\times}$\\
				\midrule
				\multicolumn{17}{c}{EMNIST-L\&100-clients-10\%\&Target Accuracy=93\%}\\
				\midrule
				IID  & 49 & $1.00\times$ & 54 & $0.91\times$ & 32 & $1.53\times$ & 36 & $1.36\times$ & 30 & $1.63\times$ & 37 & $1.32\times$ & 37 & $1.32\times$ & \textbf{27} & $\mathbf{1.81\times}$ \\
				DiD1 & 104 & $1.00\times$ & 91 & $1.14\times$ & 51 & $2.24\times$ & 57 & $1.82\times$ & 48 & $2.17\times$ & 48 & $2.17\times$ & 51 & $2.04\times$ & \textbf{39} & $\mathbf{2.67\times}$\\
				DiD2 & 74 & $1.00\times$ & 72 & $1.03\times$ & 40 & $1.85\times$ & 41 & $1.80\times$ & 34 & $2.18\times$ & 42 & $1.76\times$ & 41 & $1.80\times$ & \textbf{32} & $\mathbf{2.31\times}$\\
				\midrule
				\multicolumn{17}{c}{EMNIST-L\&500-clients-2\%\&Target Accuracy=93\%}\\
				\midrule
				IID  & 318 & $1.00\times$ & 334 & $0.95\times$ & 185 & $1.72\times$ & 217 & $1.47\times$ & \textbf{176} & $\mathbf{1.81\times}$ & 212 & $1.50\times$ & 198 & $1.61\times$ & 188 & $1.69\times$ \\
				DiD1 & 507 & $1.00\times$ & 498 & $1.02\times$ & 245 & $2.07\times$ & 274 & $1.85\times$ & 228 & $2.22\times$ & 279 & $1.82\times$ & 261 & $1.94\times$ & \textbf{198} & $\mathbf{2.56\times}$\\
				DiD2 & 438 & $1.00\times$ & 448 & $0.98\times$ & 218 & $2.01\times$ & 257 & $1.70\times$ & \textbf{176} & $\mathbf{2.49\times}$ & 274 & $1.60\times$ & 261 & $1.68\times$ & 186 & $2.35\times$\\
				\bottomrule
			\end{tabular}

\end{table*}

As Table \ref{tab:performancesA} shown, FedSSG's performs better than baselines except with 100-clients-10\% on DiD1. Even under unfavorable conditions, FedSSG is also used as the top-2 method, and the gap is only 0.03 points compared with the top-1 method FedDC. FedSSG's top-1 score was still an average of 0.05 points higher than the second-highest baselines on MNIST (which rounded to 2 decimal places). Meanwhile, as Table \ref{tab:speedupA} presented, FedSSG converge to target accuracy faster than all the baselines on MNIST. Although MNIST is a relatively simple dataset where all baselines can achieve good results and converge faster than other datasets, the convergence time's improvement of FedSSG over FedAvg is still more than twice in all cases.

Figure \ref{fig:MP15}-\ref{fig:MP10} display the learning curves of FedSSG and baselines with different settings on MNIST. It is easy to find the existence of inflection pints, shows the effect of phase strategy.
They block the tendency of FedSSG to converge rapidly but also finally make FedSSG achieve outperformance. And Figure \ref{fig:MP15H}-\ref{fig:MP10H}, searching the hyper-parameter space including $\{0.01,0.02,0.03,0.04,0.05,0.06\}$, indicate that there is no obvious normal distribution relationship between the performance of FedSSG and the choice of $\alpha$ on MNIST, and a small $\alpha$ will cause slow convergence and poor effect of FedSSG. The $\alpha$ of FedSSG are sensitive on MNIST, so we have to choose a compact search space. This will guide us adjust FedSSG in relatively simple practical applications like MNIST.

It is worth noting that, FedSSG and FedDC cannot converge to a stationary point on MNIST with 500-clients-2\% setting. Here are two simple observations: (1) From Table \ref{tab:performancesA} we can find that the "mean$\pm$standard" of FedSSG and FedDC has more intense fluctuation degree; (2) The objection functions of FedSSG and FedDC are relatively more complex than other baselines. Hence, we suppose that the model being overfitted is the reason of this terrible phenomenon.

And as can be seen in Table \ref{tab:performancesA}, although FedSSG has increased significantly as a Top-2 method in EMNIST-L, it performs than baselines over 0.07 points on average (which is rounded to 2 decimal places). 
Meanwhile, as Table \ref{tab:speedupA} shown, FedSSG is the fastest method in the experiments on EMNIST-L. Although the top-1 accuracy of FedSSG showed a slight decline on MNIST and EMNIST-L, it was always superior in the convergence speed.

Figure \ref{fig:EP15}-\ref{fig:EP2} display the learning curves of FedSSG on EMNIST-L. These figures give a significant observation that how fast the FedSSG's convergence is. In the meantime, Memo-corrected points distinctly appear in 500-clients-2\% experiments. FedSSG quickly converges to a higher level and then enters a slower convergence state, and achieves the best performance of convergence in the middle and late stages. And as Figure \ref{fig:EP15H}-\ref{fig:EP2H} shown, FedSSG is still sensitive to $\alpha$. We search the hyper-parameter space including $\{0.01,0.02,0.05,0.06,0.1\}$, and find that there are evident normal distribution between performance and $\alpha$ with 100-clients. This phenomenon is not obvious at a setting of 500-clients-2\%. Meanwhile, with $\alpha$ increasing, FedSSG's performance becomes better but the convergence time becomes longer.



In general, on the MNIST and EMNIST-L datasets, which have low classification difficulty, FedSSG performed better on average but did not dominate on CIFAR-10 and CIFAR-100. We believe this is mainly due to model overfitting. However, when analyzing the optimal performance of the method, FedSSG may still be the only method that can achieve the global optimal performance point under the corresponding setting, despite its relatively poor performance in Table \ref{tab:performancesA}. Additionally, we observed that SCAFFOLD performed very well on these simpler datasets. However, considering the convergence speed associated with SCAFFOLD, FedSSG remains a better choice.

\subsubsection{Discussion on CIFAR-10 and CIFAR-100}


Although we have thoroughly analyzed the performance on CIFAR-10 and CIFAR-100 in the main body, there are still some details worth adding in the appendix. These details demonstrate the powerful generality of FedSSG and its potential to enhance FL performance in real-world situations. As mentioned in the text and shown in Figures \ref{fig:C10P2} and \ref{fig:C100P2}, FedSSG performs extraordinarily well compared to all baselines in the 500-clients-2\% setting. This indicates strong performance against non-iid data that closely resembles real-world scenarios. Figures \ref{fig:C10P2H} and \ref{fig:C100P2H} show that when $\alpha$ is set to 0.2, FedSSG's performance is significantly better than with other $\alpha$ values. Considering the search space for $\alpha$ as ${0.01, 0.02, 0.05, 0.1, 0.2}$, it is evident that $\alpha=0.2$ represents the boundary hyperparameter of this search space. This suggests that even in this more extreme setting, FedSSG has substantial room for further improvement and development. Moreover, compared to other client count and engagement settings, we have more search space in this scenario, indicating that FedSSG performs better in more complex and challenging non-IID environments.


\subsubsection{Massive Clients with Low Sample Ratio} Setting experiments with massive clients and low sample ratios makes our experiments more reflective of real-world scenarios. 
To assess the effectiveness of FedSSG in such real-world scenarios, we define a simple equation as follows:
\begin{equation}
	\begin{aligned}
		ACC_{diff}=(AC&C_{Top-1}-ACC_{FedAvg})\\
		&-(ACC_{Top-2}-ACC_{FedAvg})
		\label{eqn:dv}
	\end{aligned}
\end{equation}
where $ACC$ is the highest test accuracy of each method. Using Eqn.\ref{eqn:dv}, we calculate the test accuracy difference between FedSSG (which consistently performs as Top-1) and the Top-2 baselines. We chart these differences for all non-iid settings to verify FedSSG's strong performance with massive clients and low sample ratios. As illustrated in Figure \ref{fig:dv}, FedSSG exhibits superior performance in the more heterogeneous 500-clients-2\% setting compared to both 100-client configurations, even when evaluated under horizontal comparison. This observation underscores the significant potential of FedSSG for real-world applications. Moreover, the pie chart reveals that as client participation heterogeneity intensifies—manifested by a declining participation rate and an increasing number of clients—FedSSG achieves a notable improvement in performance metrics. These results demonstrate that FedSSG is robust and well-suited for scenarios involving large-scale client populations with low sample ratios.

\begin{figure}[t!]
	\centering
	\includegraphics[width=0.45\textwidth]{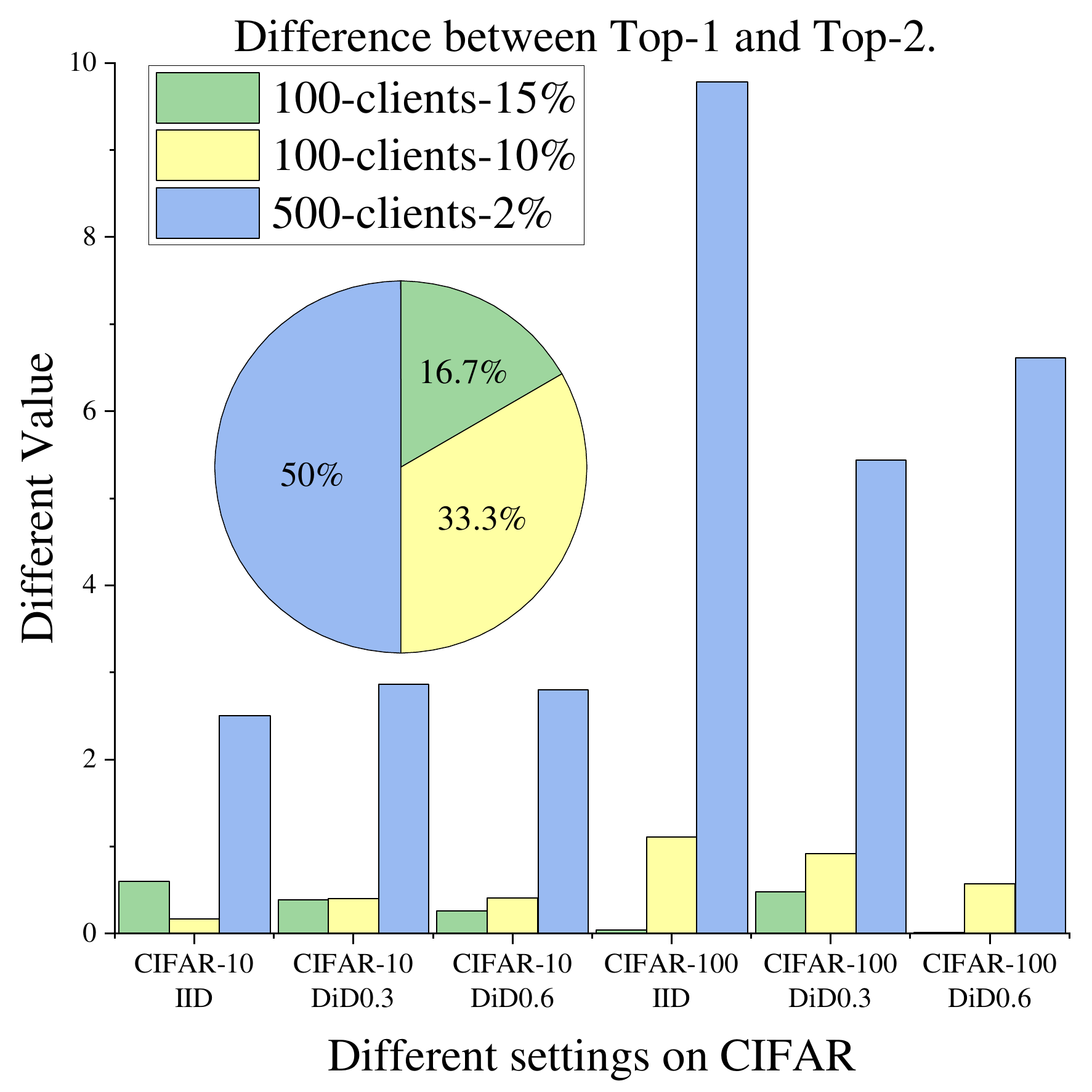}
	\caption{The difference of highest accuracy $ACC_{diff}$ (\%) between FedSSG (always Top-1 method) and Top-2 baselines on different settings in our experiments. Pie chart shows that with the decrease of the participation rate and the increase of the number of clients, the performance of FedSSG increase significantly. FedSSG performs extremely well with massive clients with low sample ratio.}
	\label{fig:dv}
\end{figure}

\begin{table*}[t]
	\centering
	\small
	\begin{tabular}{cccccc}
		\toprule
		\multicolumn{6}{c}{CIFAR-100/500-clients-2\%/DiD1}\\
		Empirical Loss & Penalization & Gradient Correction & Memo-correction Term $\xi_i$ & Accuracy ($E=1$) & Accuracy ($E=5$) \\
		\midrule
		\checkmark &  &  & \checkmark & $36.20_{\pm0.18}$ & $38.61_{\pm0.21}$ \\
		\checkmark & \checkmark &  & \checkmark & $40.74_{\pm0.10}$  & $40.91_{\pm0.25}$ \\
		\checkmark &  & \checkmark & \checkmark & $36.72_{\pm0.13}$ & $37.16_{\pm0.33}$ \\
		\checkmark & \checkmark & \checkmark &  & $36.59_{\pm0.14}$ & $41.88_{\pm0.39}$ \\
		\checkmark & \checkmark & \checkmark & \checkmark & $\mathbf{38.87_{\pm0.17}}$ & $\mathbf{44.15_{\pm0.32}}$ \\
		\midrule
		\multicolumn{4}{c}{FedAvg (Standard)} & $29.65_{\pm0.11}$ & $31.11_{\pm0.04}$ \\
		\multicolumn{4}{c}{FedDyn (Top-2 Performance)} & $37.03_{\pm0.13}$ & $39.57_{\pm0.15}$ \\
		\bottomrule
	\end{tabular}
	\caption{\label{tab:objective}To verify the effectiveness of memo-modification which designed for solving the issues of model drift and knowledge forgetting, comparison explore the effectiveness for each part of FedSSG. Where E means local epoch times.}
\end{table*}	

\subsubsection{Ablation Tests}

To address client drift and mitigate the impact of the server's stochastic sampling, we developed the memo-modification, which includes the stochastic sampling-guided term $\xi_i$ and adjustments to the objective function. In this section, we validate the effectiveness and robustness of these modifications. As shown in Table \ref{tab:objective}, we compared different components of FedSSG's phase strategy, referred to as FedSSGle, FedSSGlp, FedSSGlg, and FedSSGxi. For $E=5$, FedSSGle outperforms FedAvg, and FedSSGxi surpasses the Top-2 baseline, FedDyn, demonstrating the effectiveness of stochastic sampling-guided local drift. Furthermore, FedSSGlp also exceeds the performance of FedDyn, indicating that the penalized term enhances FedSSG and mitigates client drift. Compared to $E=1$, FedSSG shows significant robustness with increased local training. By integrating all phase strategy correction, FedSSG effectively meets its goal of handling non-IID data.

\subsubsection{Discussion on Inflection Points}

\begin{figure}[t]
    \centering
        \setlength{\belowcaptionskip}{10pt}
    \subfloat[]{\includegraphics[width=0.23\textwidth]{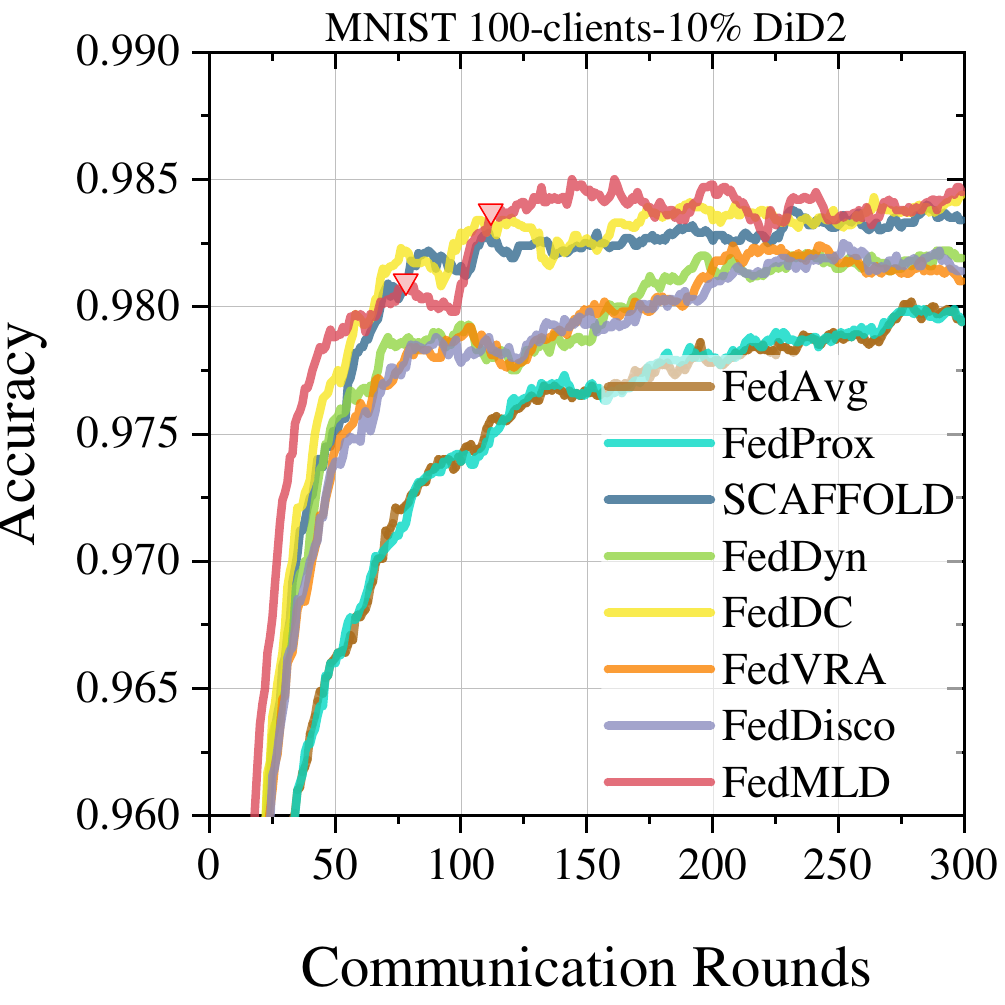}} \hfill
    \subfloat[]{\includegraphics[width=0.23\textwidth]{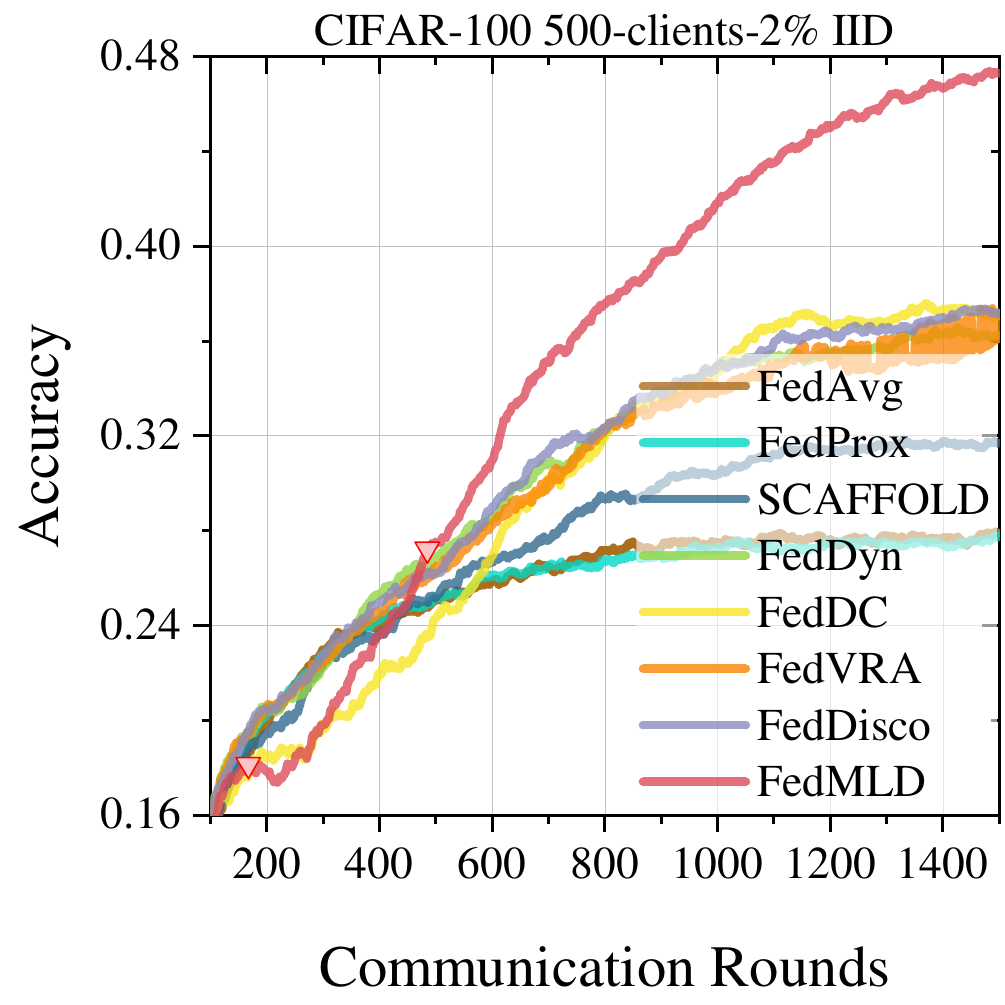}}
    \caption{Triangle sample for inflection points. It shows frequently in experiments.}
    \label{fig:ipoints}
\end{figure}

As can be seen from Table \ref{tab:speedup} and \ref{tab:speedupA}, the FedSSG showed its dominance on convergence speed. Since we took the target convergence frequency of FedAvg as the benchmark for speed up, we set the target accuracy to a level barely attainable by FedAvg to ensure the effectiveness and fairness of our comparison. FedSSG performs powerful fast convergence feature in all experiments except the experiment of 500-clients-2\% on CIFAR-10. 
In the process of exploring the reasons for this exception, we found an interesting phenomenon, which coincides with our previous hypothesis: as Figure \ref{fig:ipoints} shown, there are two inflection points on captured learning curves. We suppose that between inflection points, FedSSG utilize the penalized term and stochastic sampling-guided local drifts to rectify clients' phased knowledge. Specifically, as Eqn.\ref{eqn:8} and Eqn.\ref{eqn:9}, we supposed that the penalized term of FedSSG will correct the objective function in different direction between the initial stage and the final stage in each client. At the same time, stochastic sampling-guided local drifts will accumulate as the number of local communication rounds increases. However, the global communication round is not synchronized with each client. Since the global updates are synthesized from local updates and the server randomly select the clients, we cannot specify where inflection points will arise and how many inflection points will appear. These inflection points affect the convergence speed of FedSSG to a certain target accuracy to some extent, but make its final performance more effective. It also proves that what we think of as memorability does exist. Meanwhile, inflection points appear more obviously in more complicated experimental settings, which also shows that FedSSG has achieved the effect of breaking the performance limit in the real world experimental environment by relying on the characteristics of phase strategy.


\subsubsection{Conclusion}


In terms of experimental results, while FedSSG did not perform as well on the relatively simple MNIST and EMNIST-L datasets compared to the CIFAR-10 and CIFAR-100 datasets, it still outperformed the baselines on average and exhibited a faster convergence rate. For the more complex CIFAR datasets, especially in the most challenging 500-clients-2\% setup, FedSSG performed exceptionally well and shows great potential for real-world FL tasks. We discuss the existence of the inflection point and aim to address this issue through theoretical proof in the future, in order to improve the full-period convergence rate of FedSSG.

\begin{figure*}[h!]
    \centering
    \subfloat[]{\includegraphics[width=0.23\textwidth]{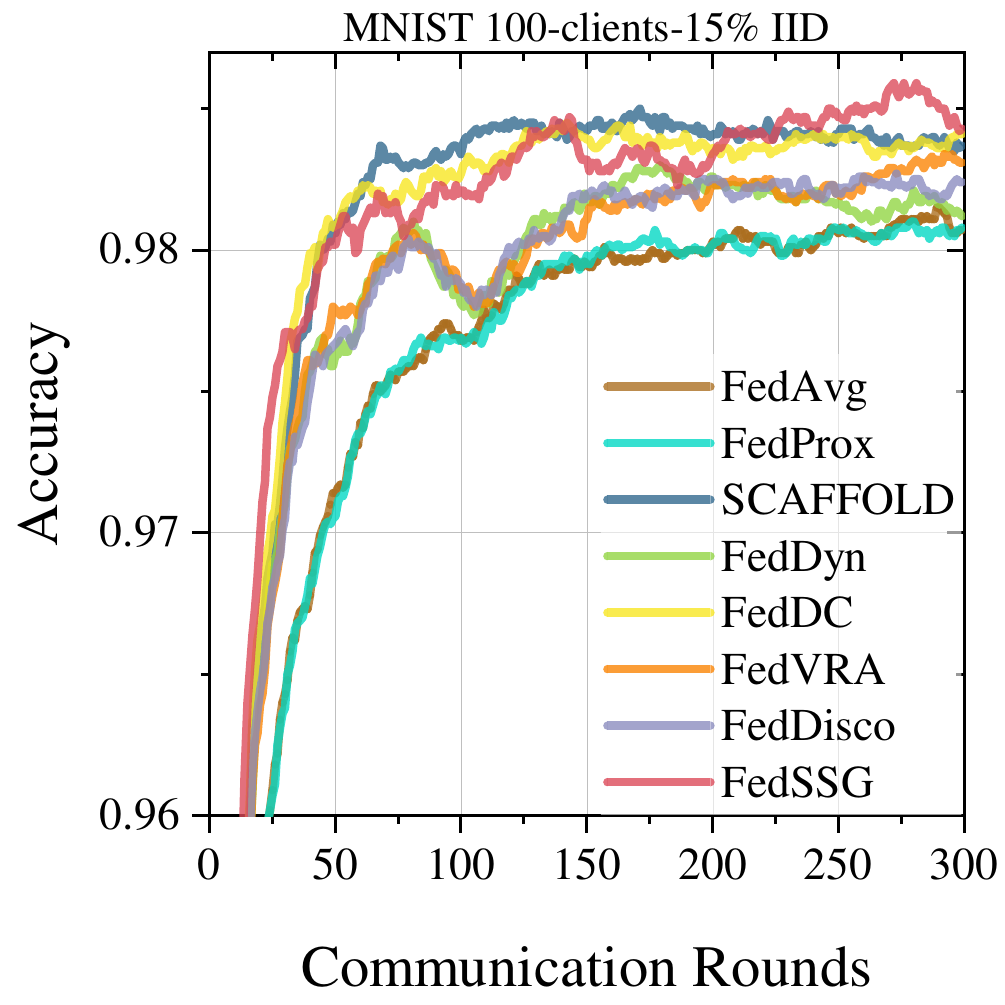}} \hfill
    \subfloat[]{\includegraphics[width=0.23\textwidth]{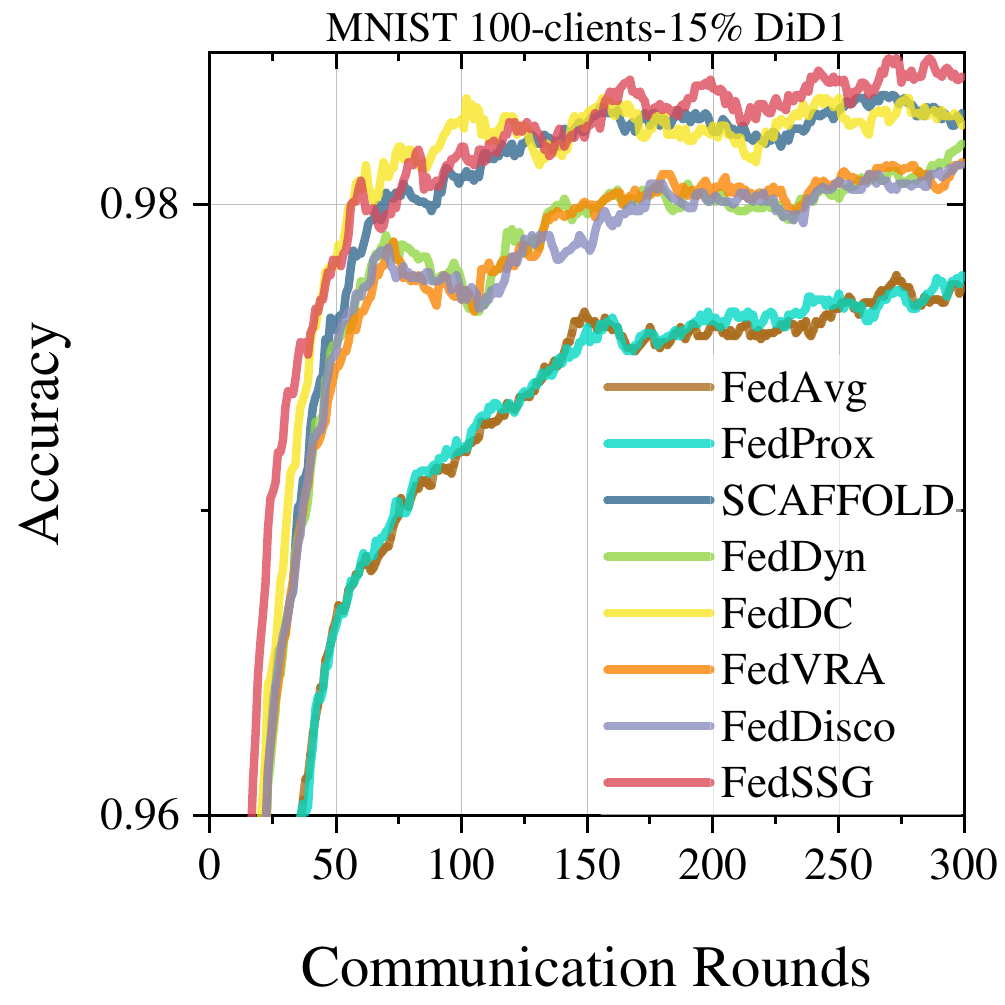}} \hfill
    \subfloat[]{\includegraphics[width=0.23\textwidth]{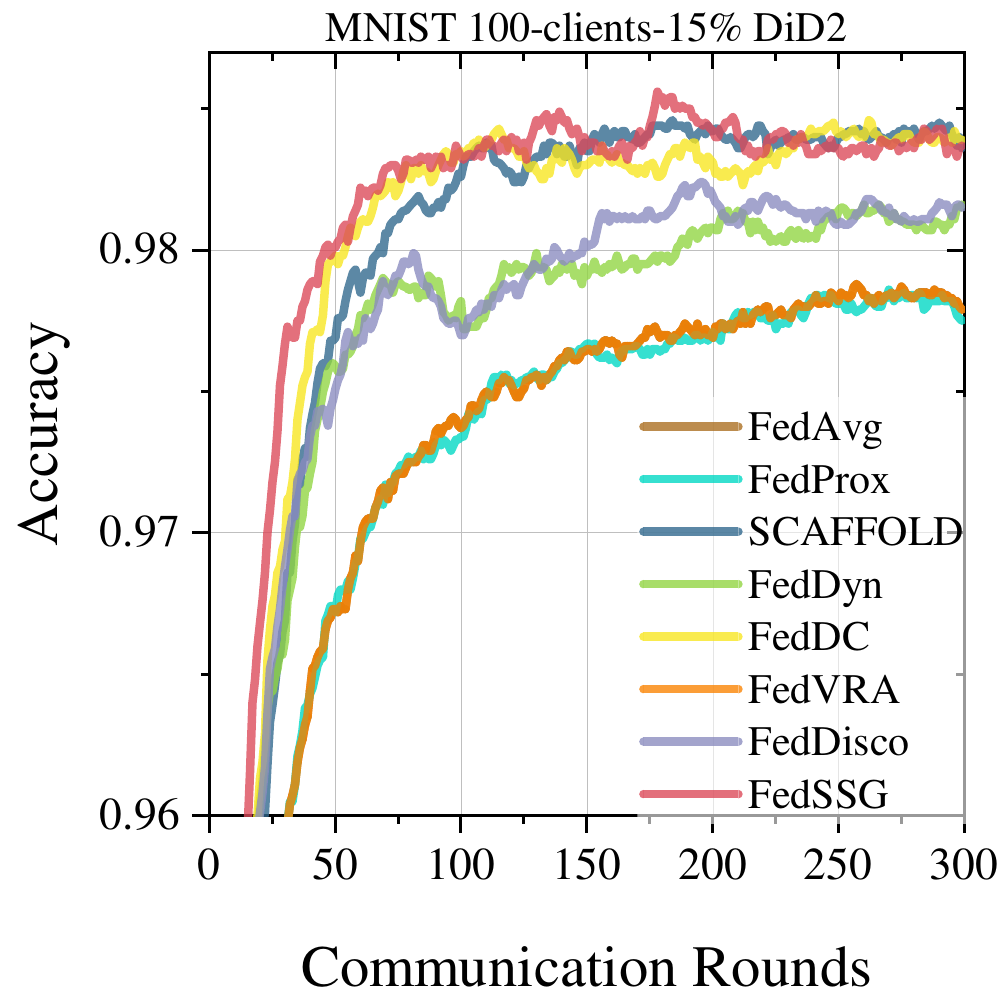}}
    \caption{Learning curves of FedSSG and baselines, with 100-clients-15\% settings on MNIST and on different distribution respectively.}
    \label{fig:MP15}
\end{figure*}

\begin{figure*}[h!]
    \centering
    \subfloat[]{\includegraphics[width=0.23\textwidth]{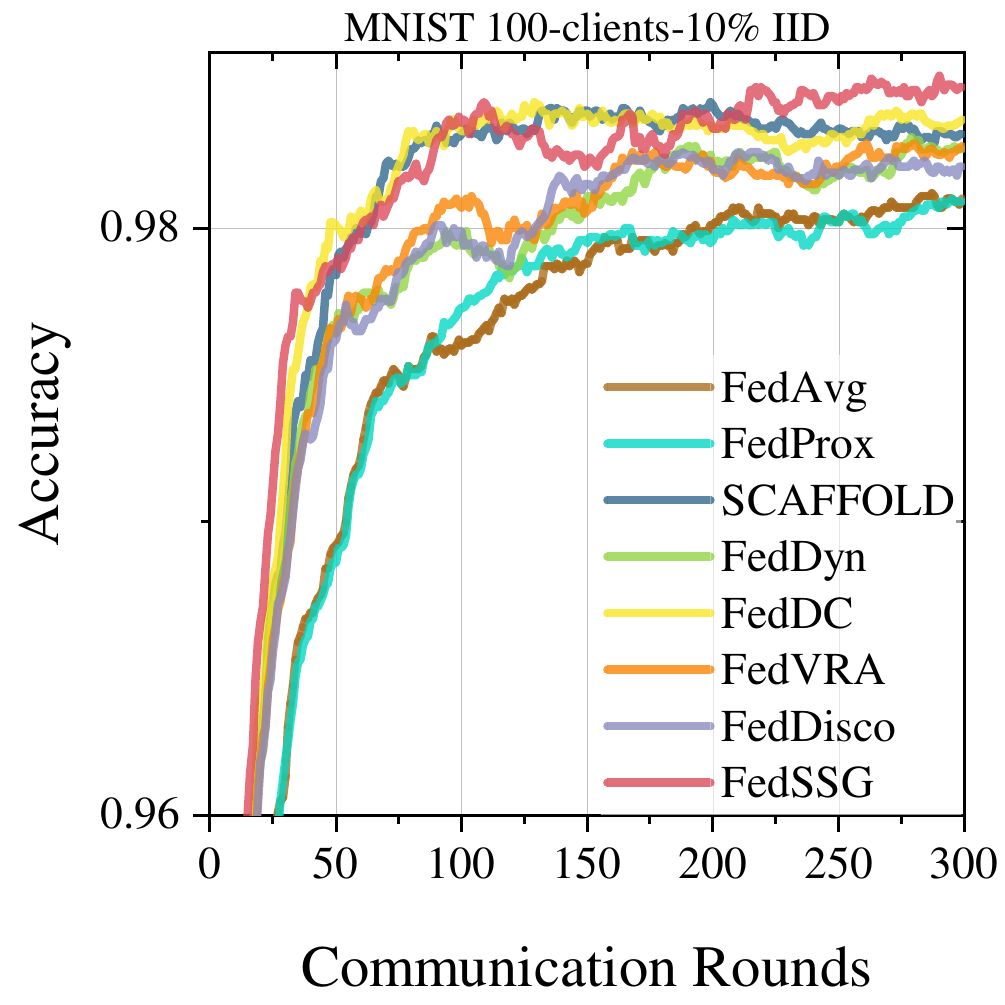}} \hfill
    \subfloat[]{\includegraphics[width=0.23\textwidth]{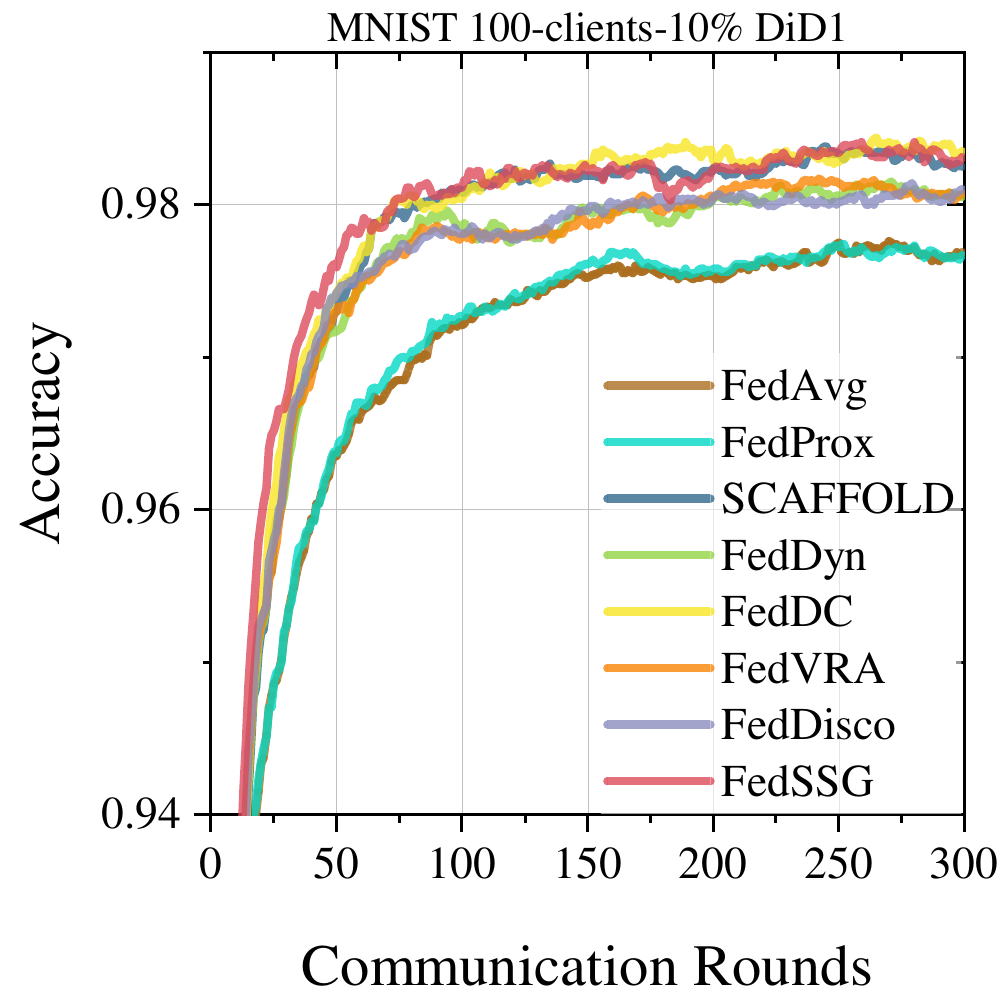}} \hfill
    \subfloat[]{\includegraphics[width=0.23\textwidth]{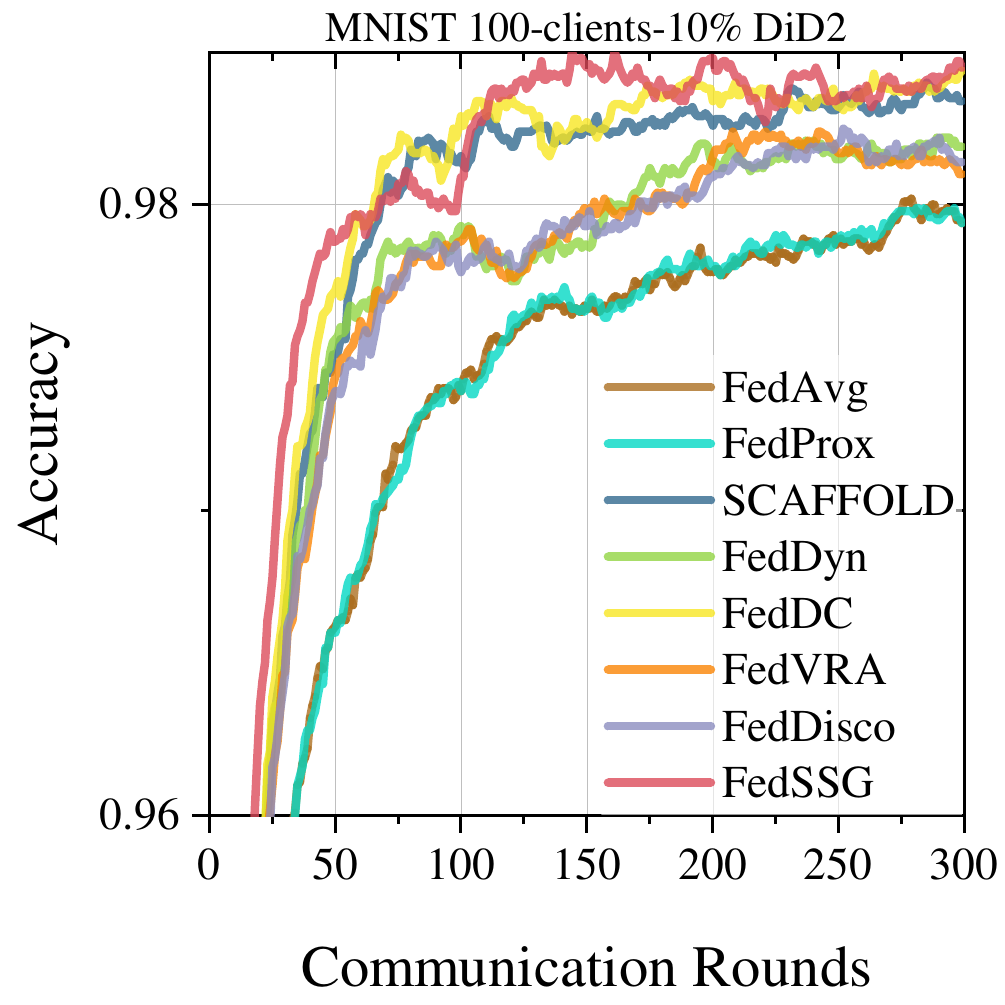}}
    \caption{Learning curves of FedSSG and baselines, with 100-clients-10\% settings on MNIST and on different distribution respectively.}
    \label{fig:MP10}
\end{figure*}

\begin{figure*}[h!]
    \centering
    \subfloat[]{\includegraphics[width=0.23\textwidth]{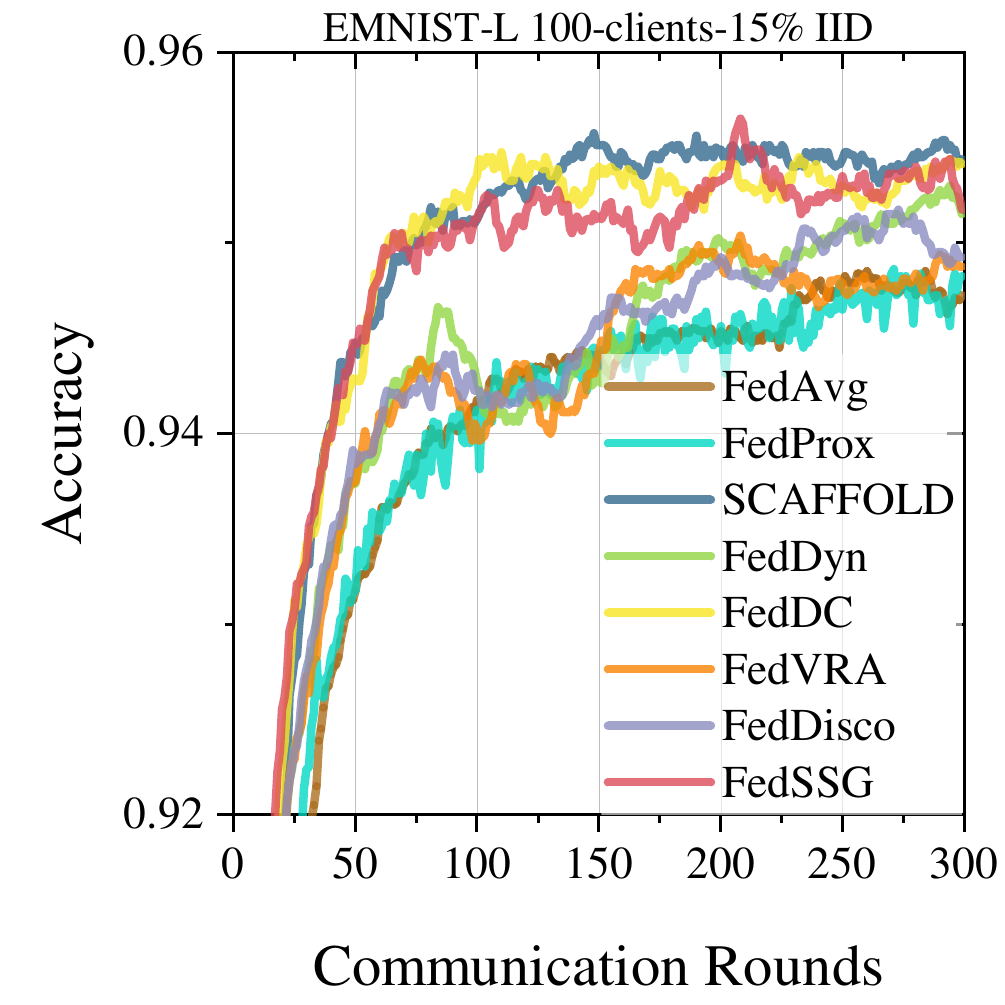}} \hfill
    \subfloat[]{\includegraphics[width=0.23\textwidth]{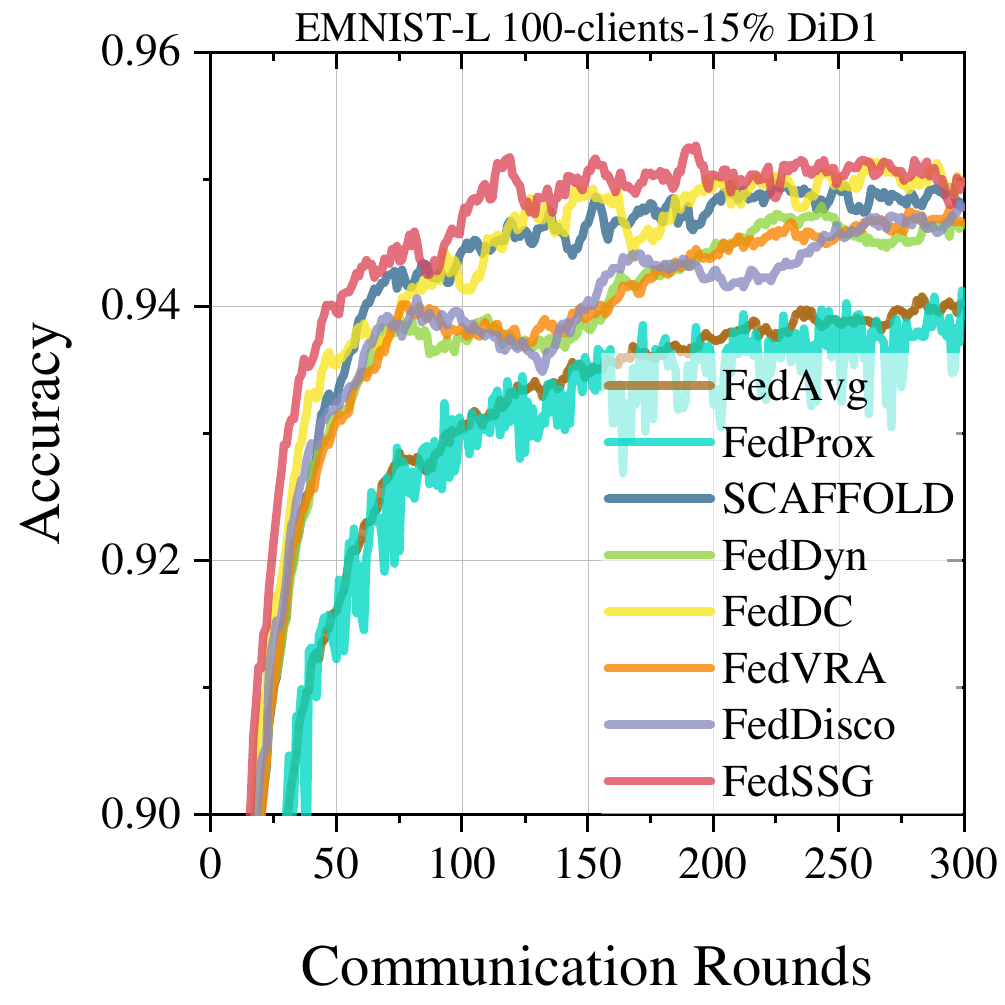}} \hfill
    \subfloat[]{\includegraphics[width=0.23\textwidth]{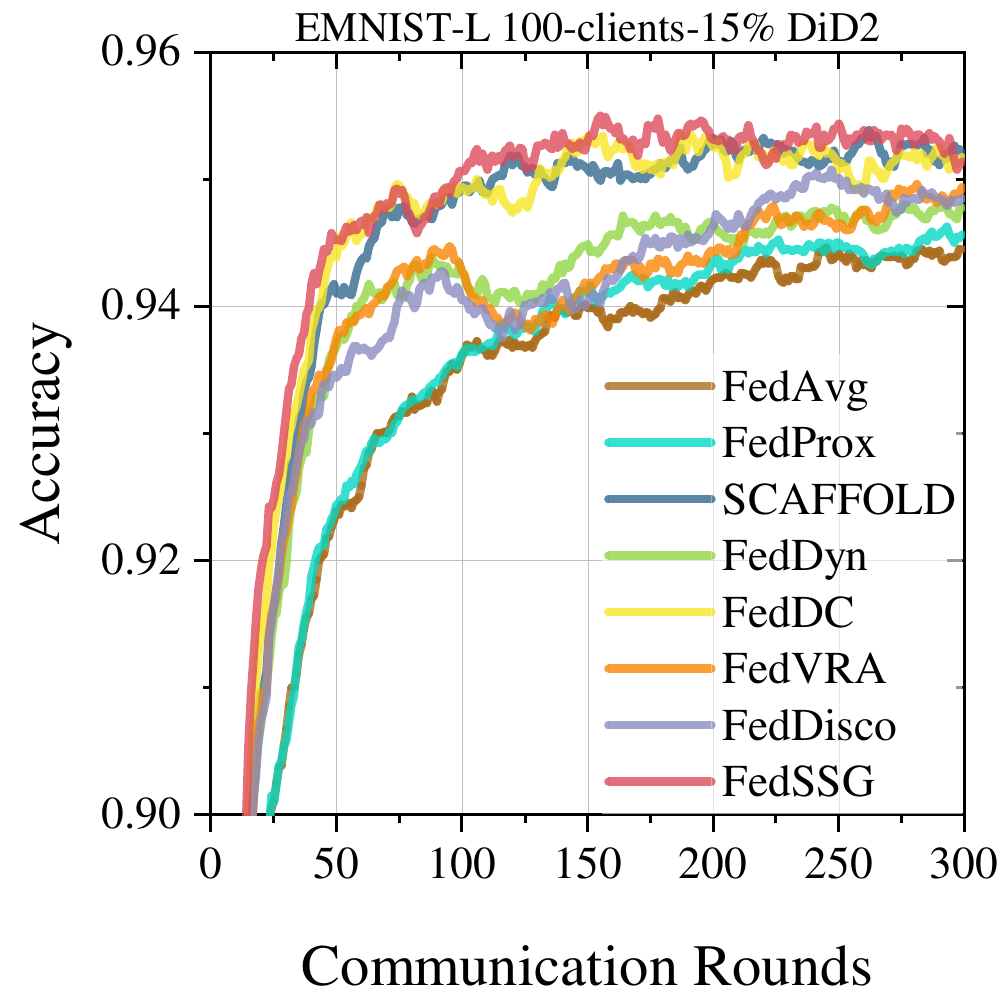}}
    \caption{Learning curves of FedSSG and baselines, with 100-clients-15\% settings on EMNIST-L and on different distribution respectively.}
    \label{fig:EP15}
\end{figure*}
\begin{figure*}[h!]
    \centering
    \subfloat[]{\includegraphics[width=0.23\textwidth]{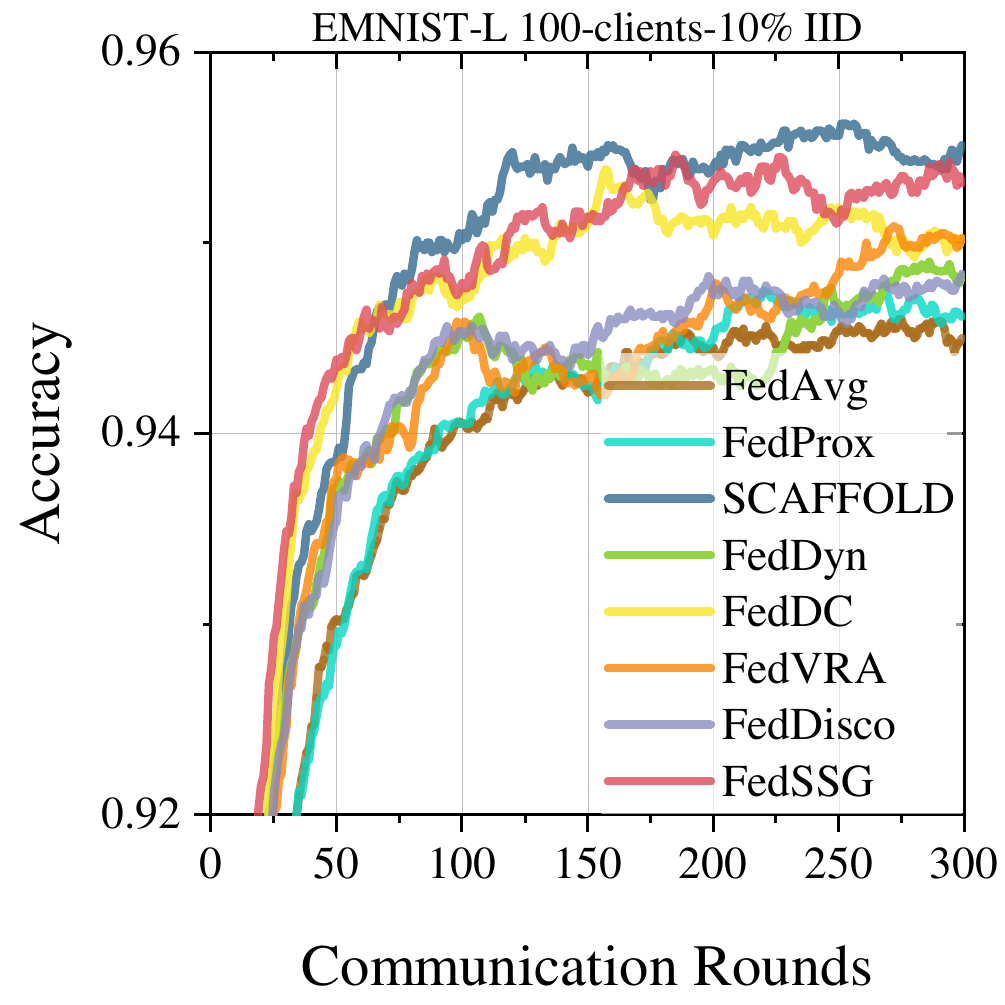}} \hfill
    \subfloat[]{\includegraphics[width=0.23\textwidth]{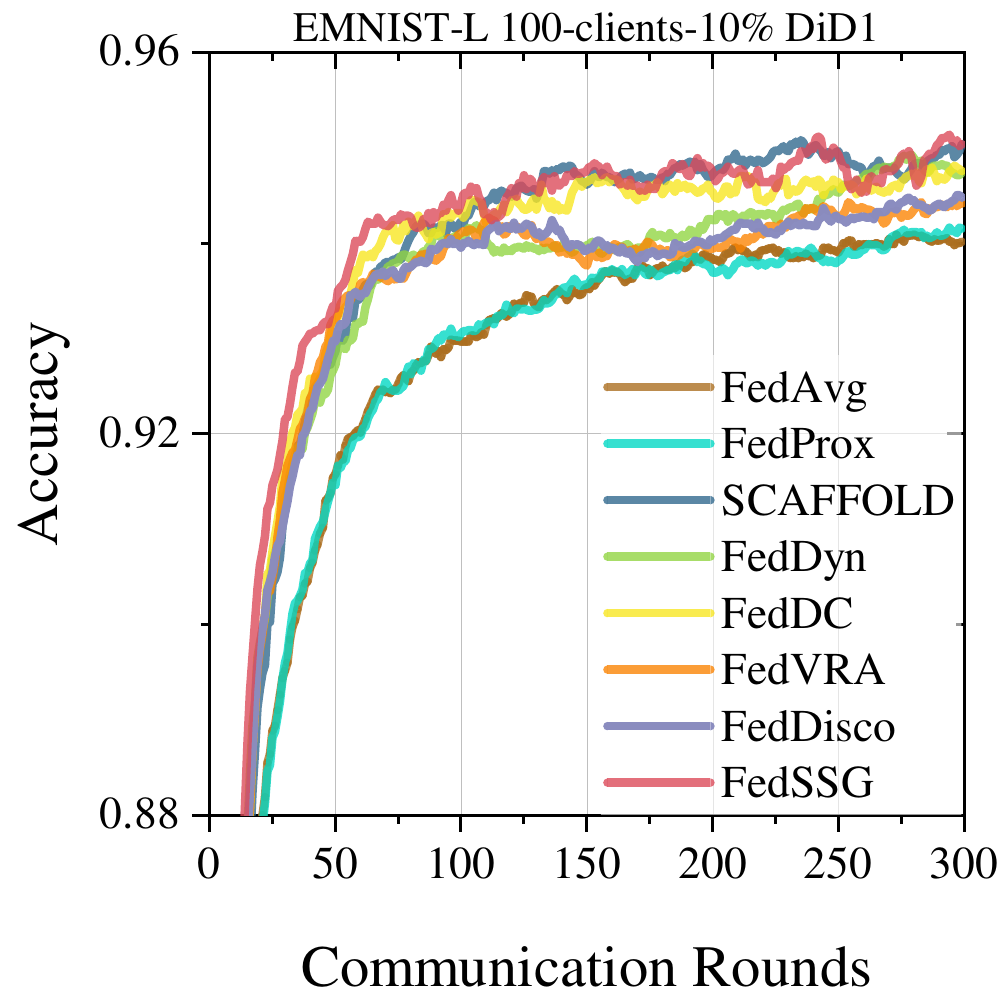}} \hfill
    \subfloat[]{\includegraphics[width=0.23\textwidth]{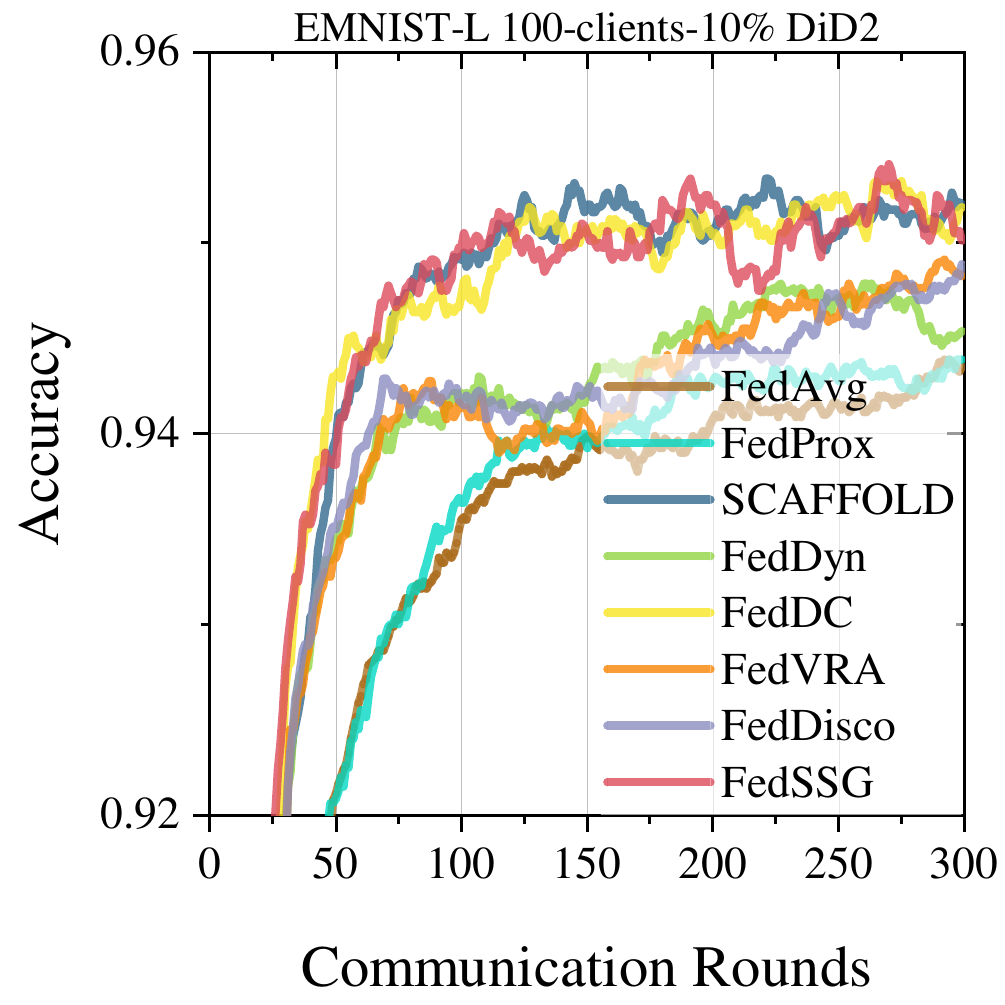}}
    \caption{Learning curves of FedSSG and baselines, with 100-clients-10\% settings on EMNIST-L and on different distribution respectively.}
    \label{fig:EP10}
\end{figure*}

\begin{figure*}[h!]
    \centering
    \subfloat[]{\includegraphics[width=0.23\textwidth]{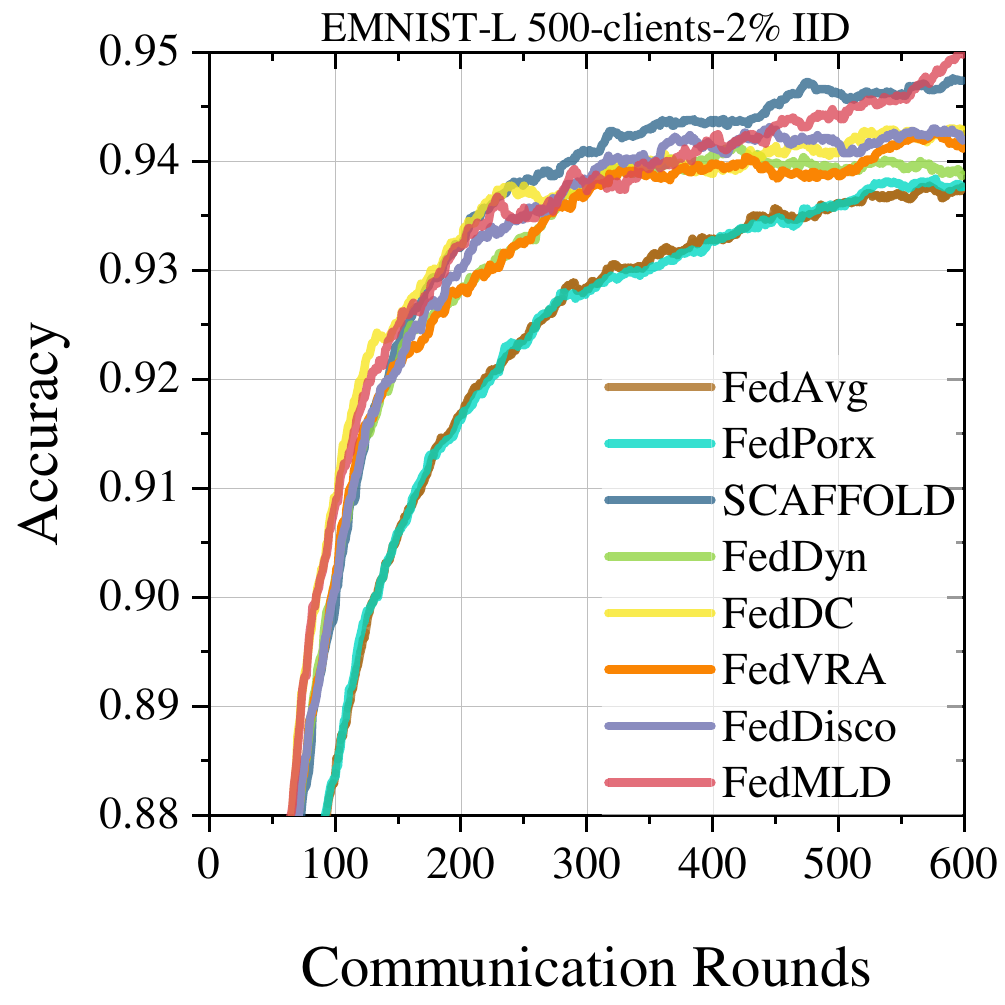}} \hfill
    \subfloat[]{\includegraphics[width=0.23\textwidth]{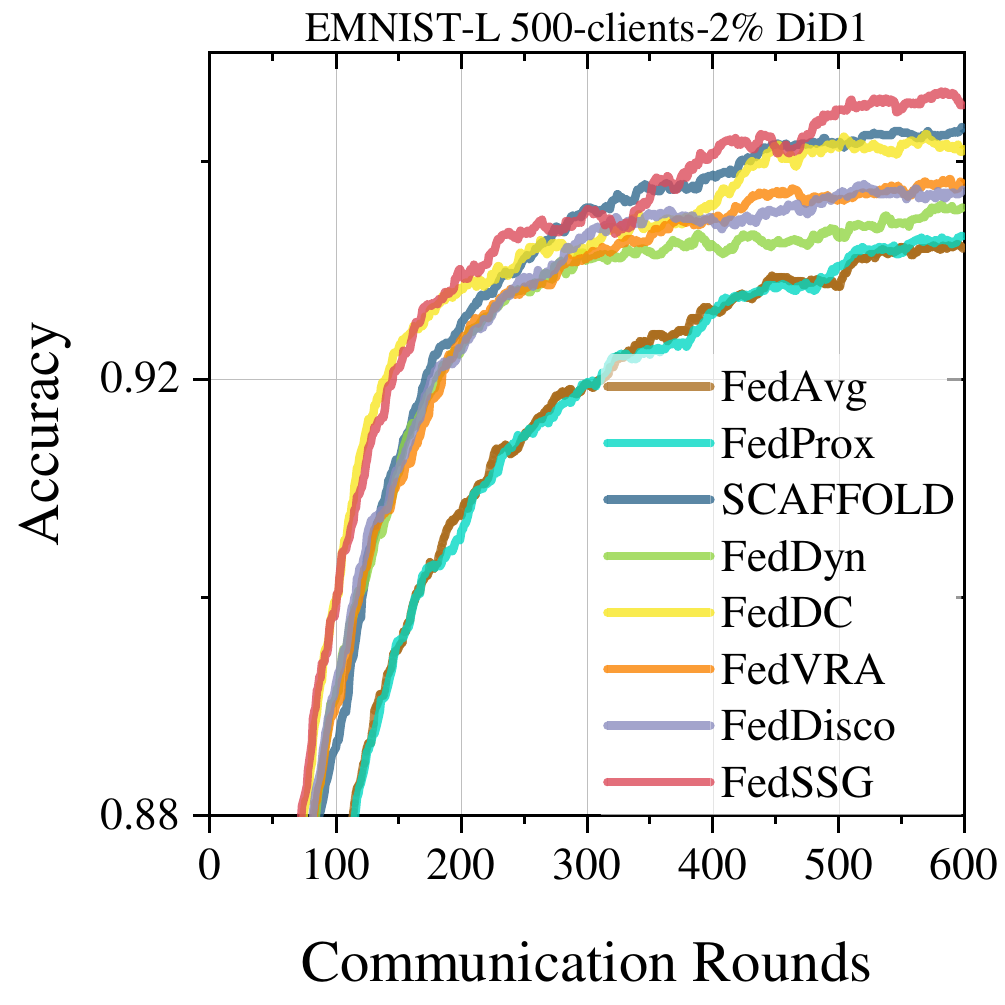}} \hfill
    \subfloat[]{\includegraphics[width=0.23\textwidth]{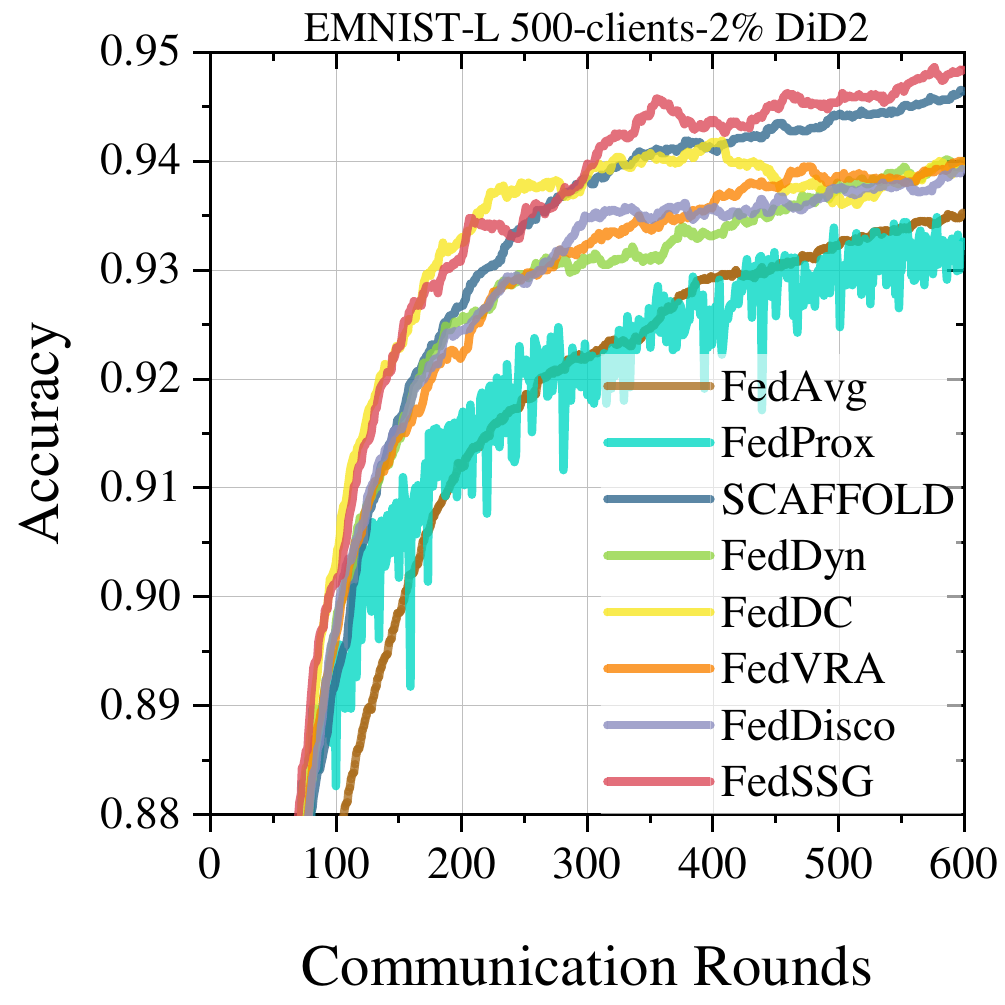}}
    \caption{Learning curves of FedSSG and baselines, with 500-clients-2\% settings on EMNIST-L and on different distribution respectively.}
    \label{fig:EP2}
\end{figure*}

\begin{figure*}[h!]
    \centering
    \subfloat[]{\includegraphics[width=0.23\textwidth]{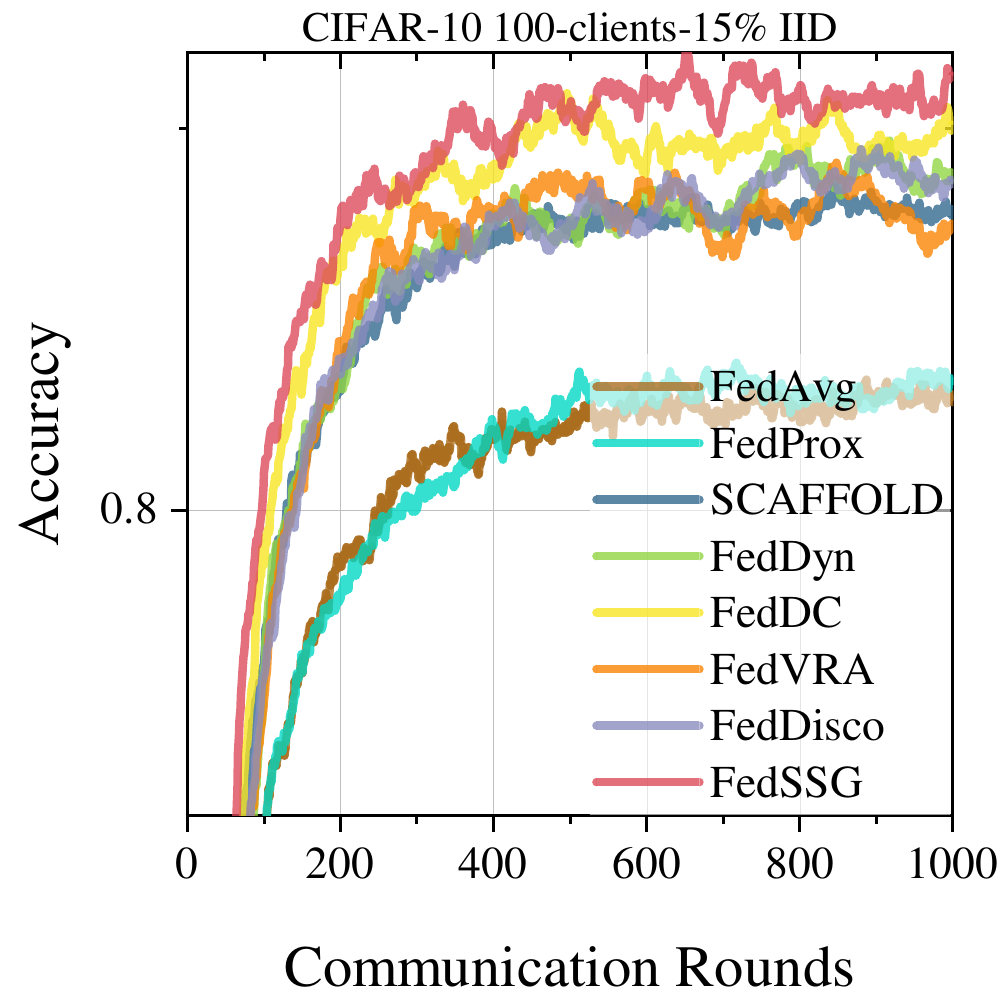}} \hfill
    \subfloat[]{\includegraphics[width=0.23\textwidth]{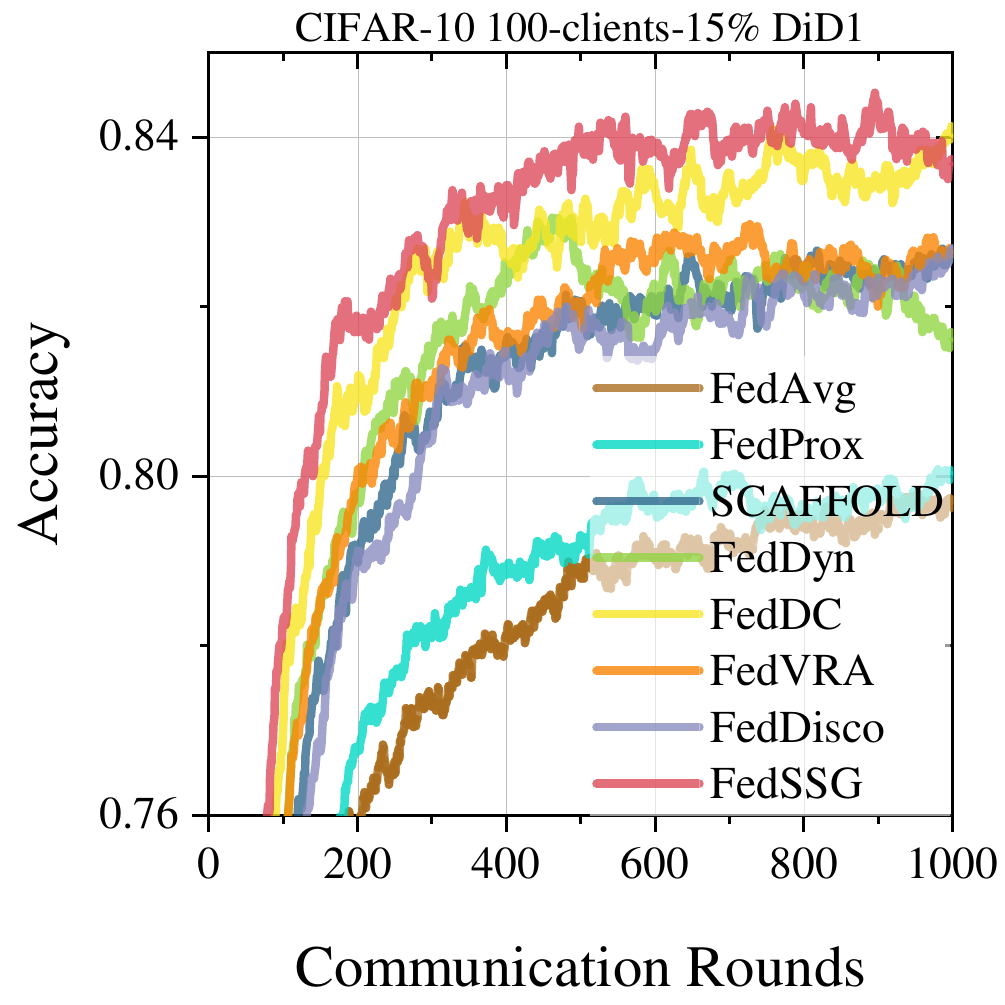}} \hfill
    \subfloat[]{\includegraphics[width=0.23\textwidth]{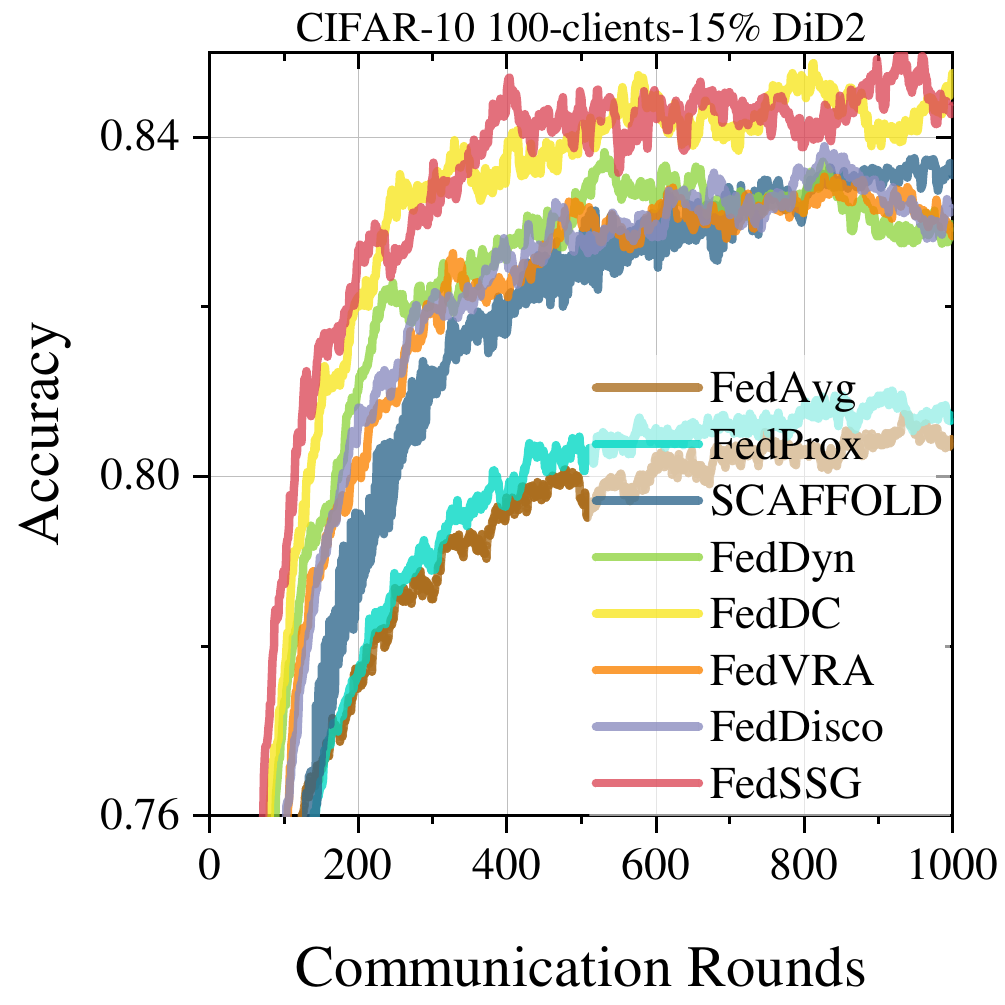}}
    \caption{Learning curves of FedSSG and baselines, with 100-clients-15\% settings on CIFAR-10 and on different distribution respectively.}
    \label{fig:C10P15}
\end{figure*}

\begin{figure*}[h!]
    \centering
    \subfloat[]{\includegraphics[width=0.23\textwidth]{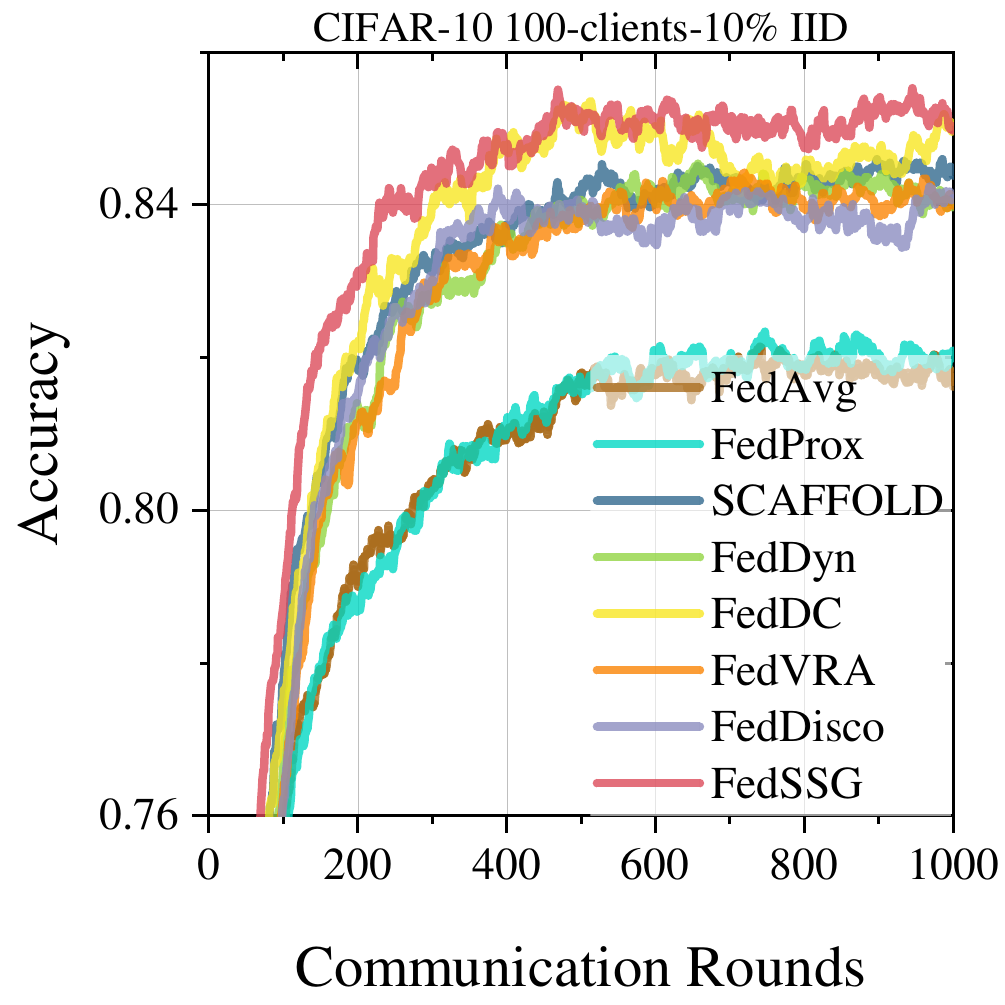}} \hfill
    \subfloat[]{\includegraphics[width=0.23\textwidth]{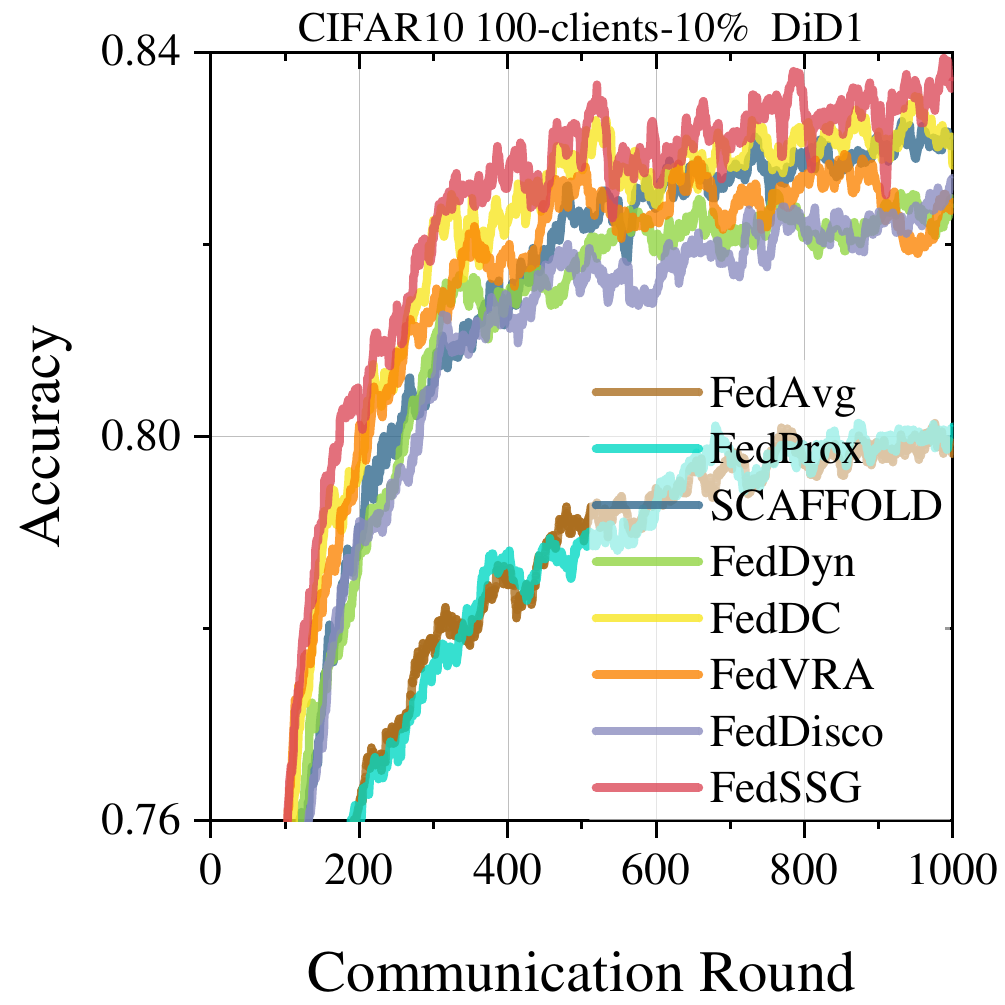}} \hfill
    \subfloat[]{\includegraphics[width=0.23\textwidth]{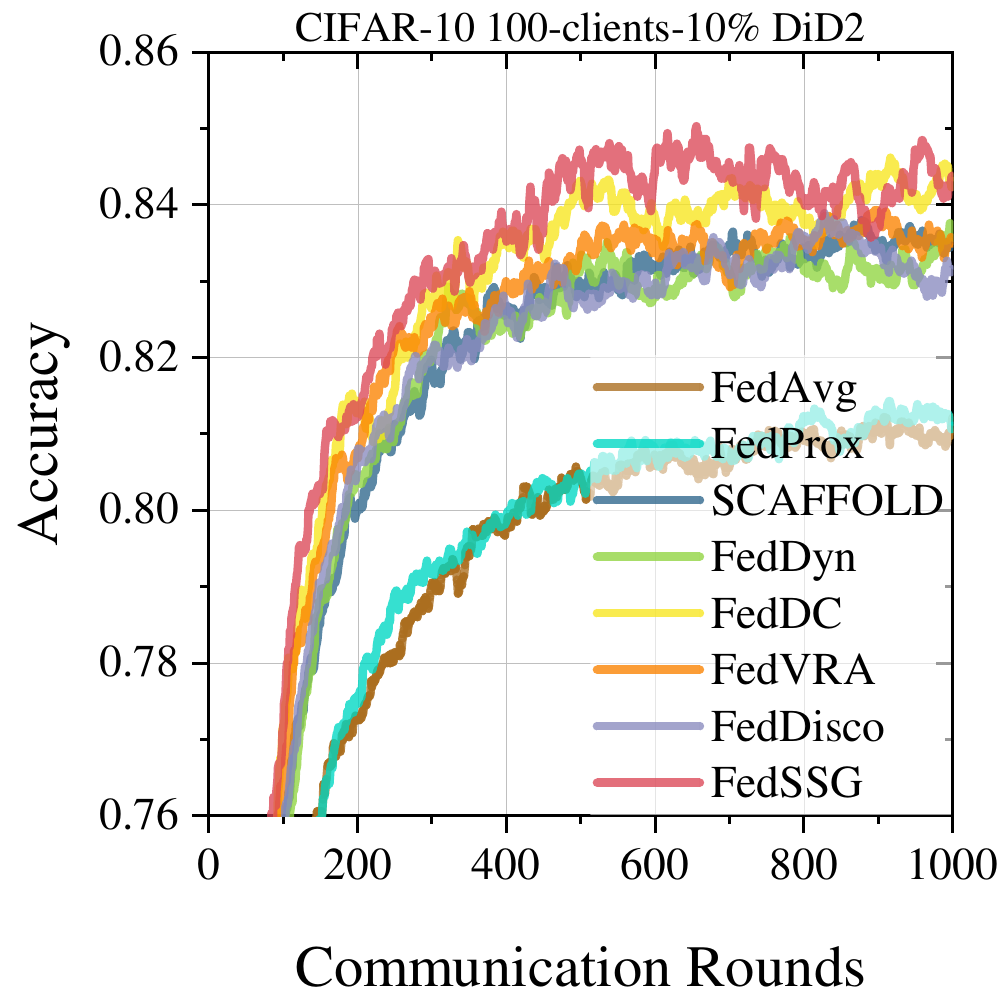}}
    \caption{Learning curves of FedSSG and baselines, with 100-clients-10\% settings on CIFAR-10 and on different distribution respectively.}
    \label{fig:C10P10}
\end{figure*}

\begin{figure*}[h!]
    \centering
    \subfloat[]{\includegraphics[width=0.23\textwidth]{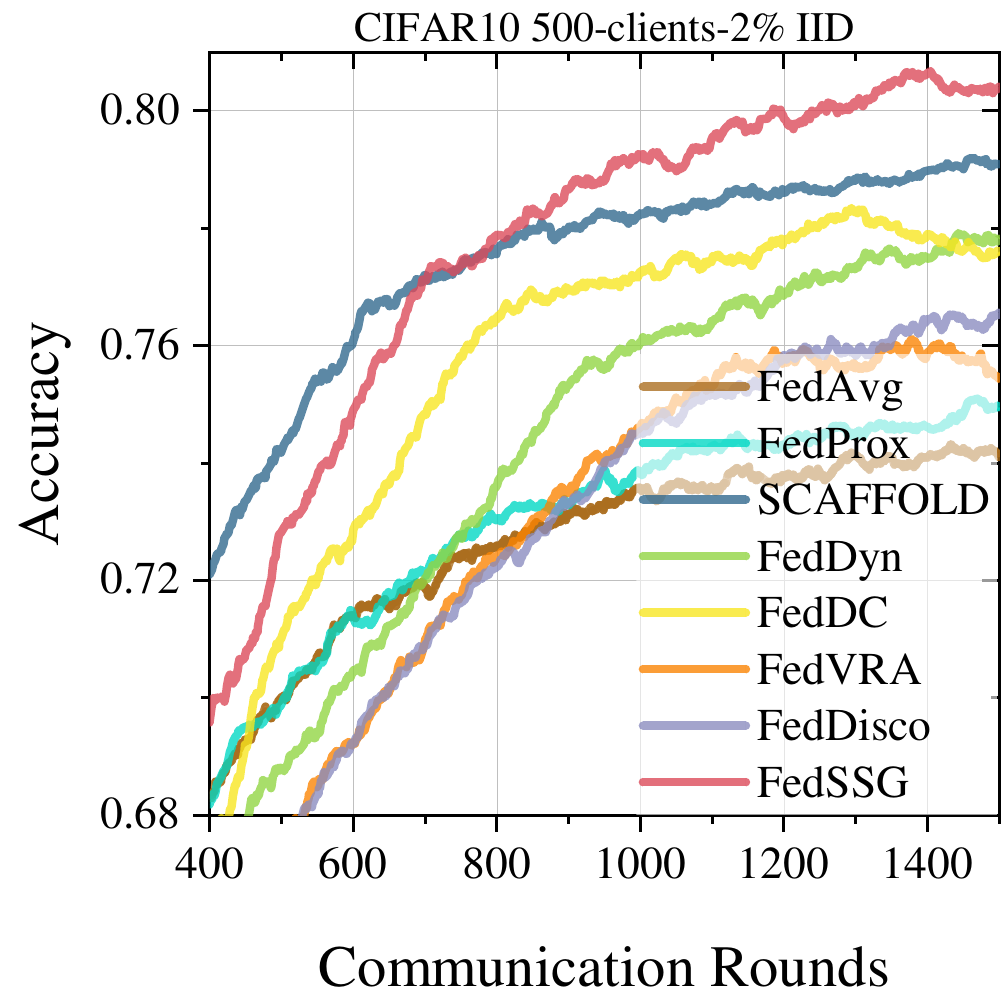}} \hfill
    \subfloat[]{\includegraphics[width=0.23\textwidth]{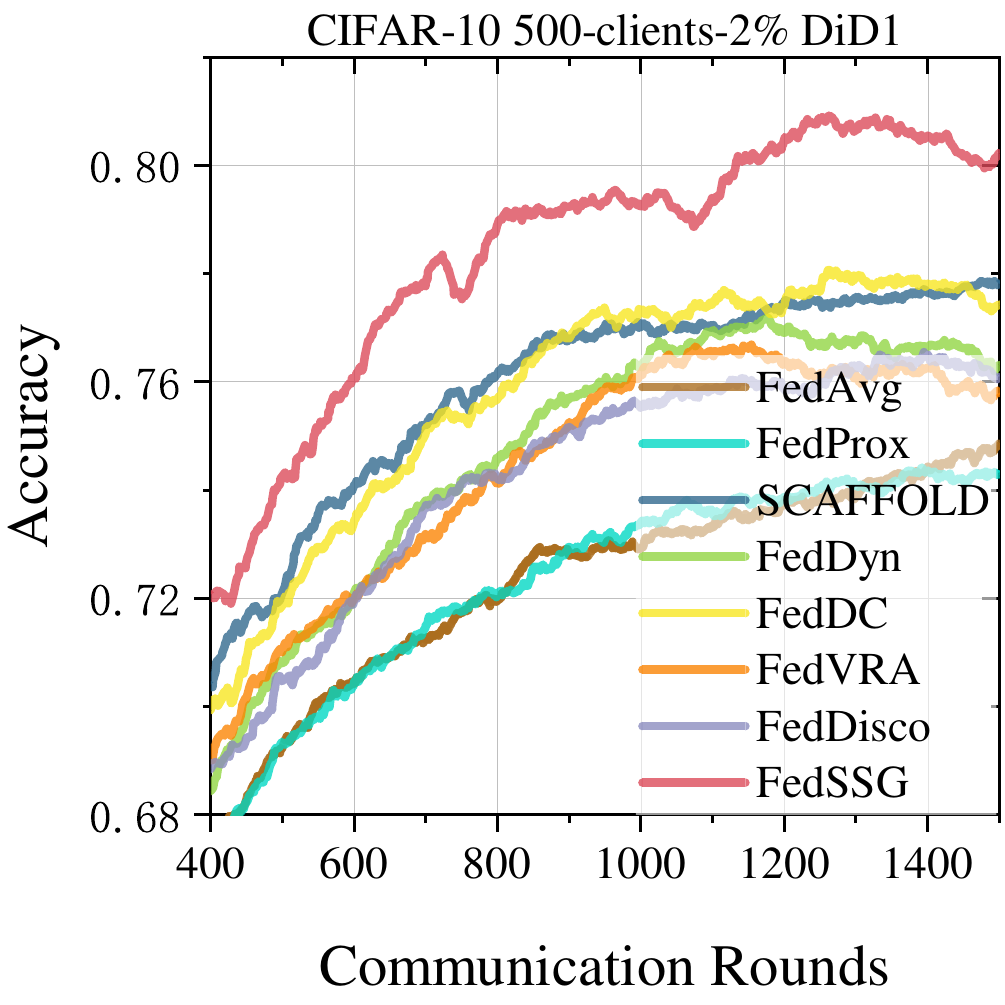}} \hfill
    \subfloat[]{\includegraphics[width=0.23\textwidth]{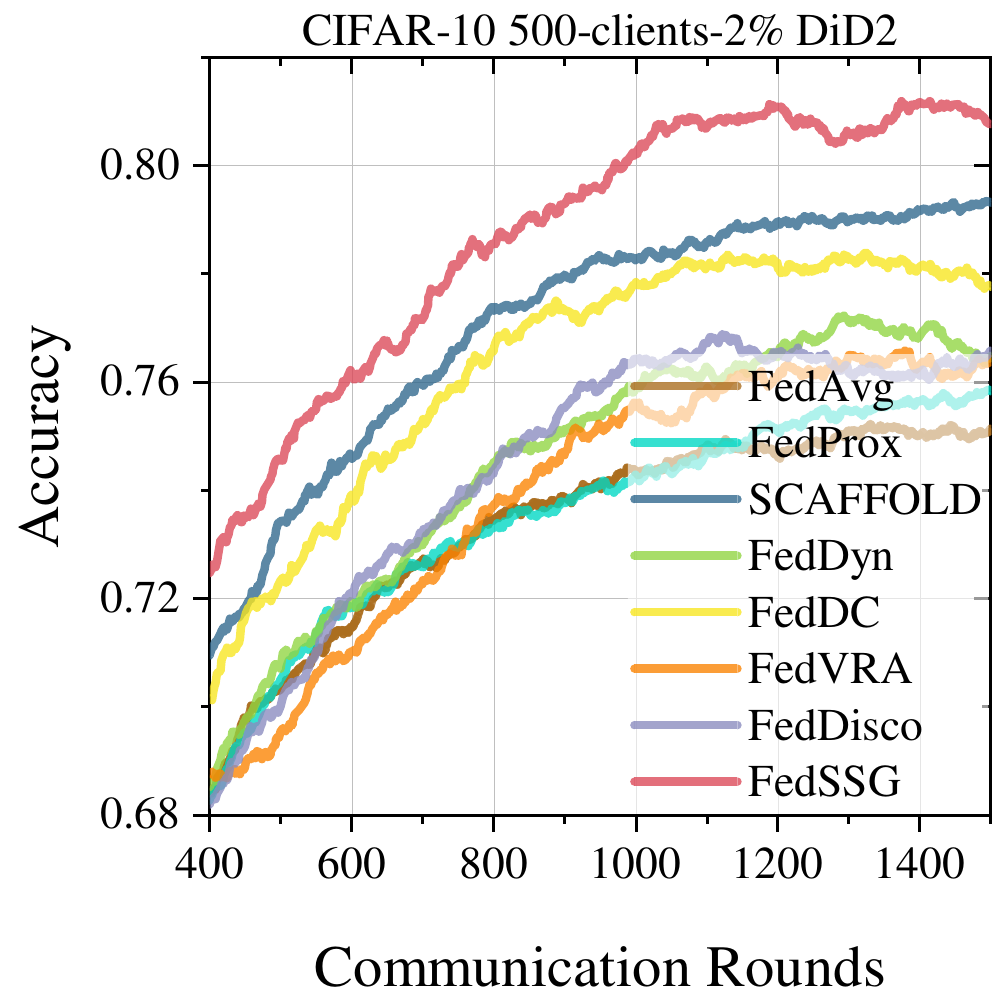}}
    \caption{Learning curves of FedSSG and baselines, with 500-clients-2\% settings on CIFAR-10 and on different distribution respectively.}
    \label{fig:C10P2}
\end{figure*}

\begin{figure*}[h!]
    \centering
    \subfloat[]{\includegraphics[width=0.23\textwidth]{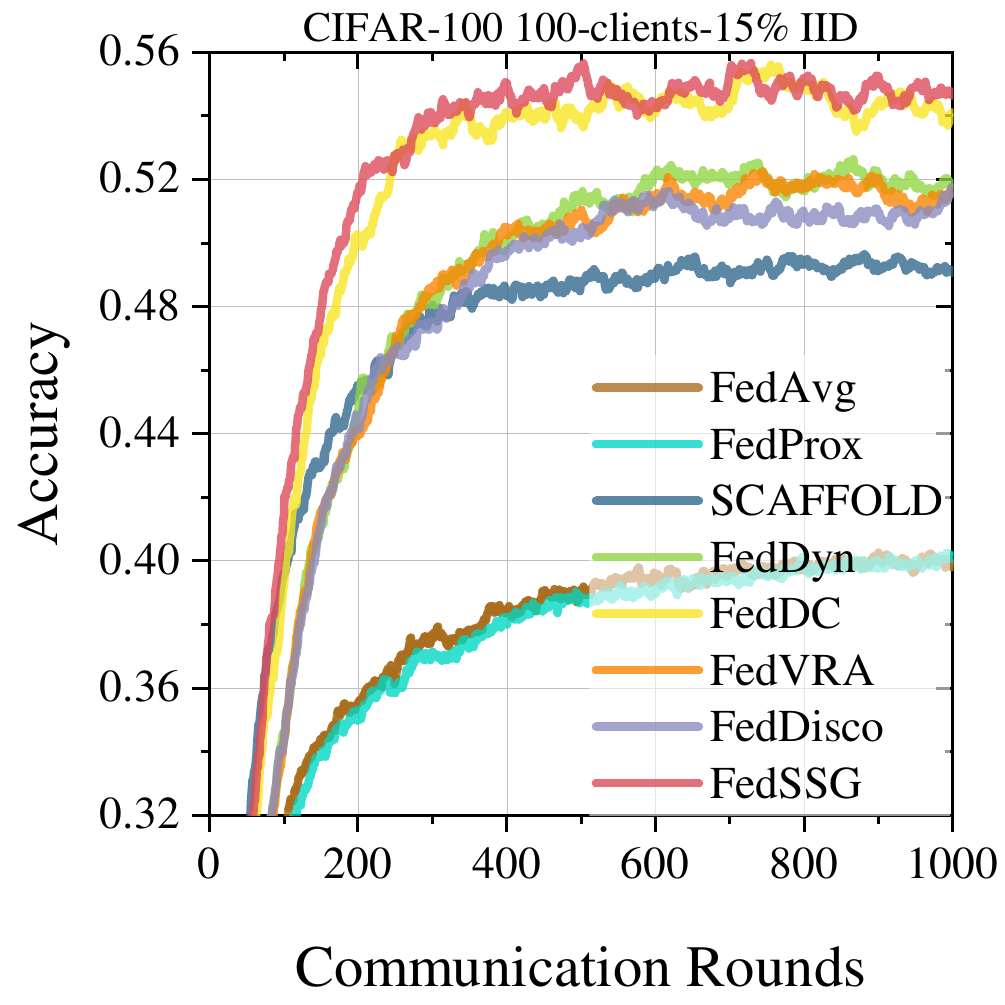}} \hfill
    \subfloat[]{\includegraphics[width=0.23\textwidth]{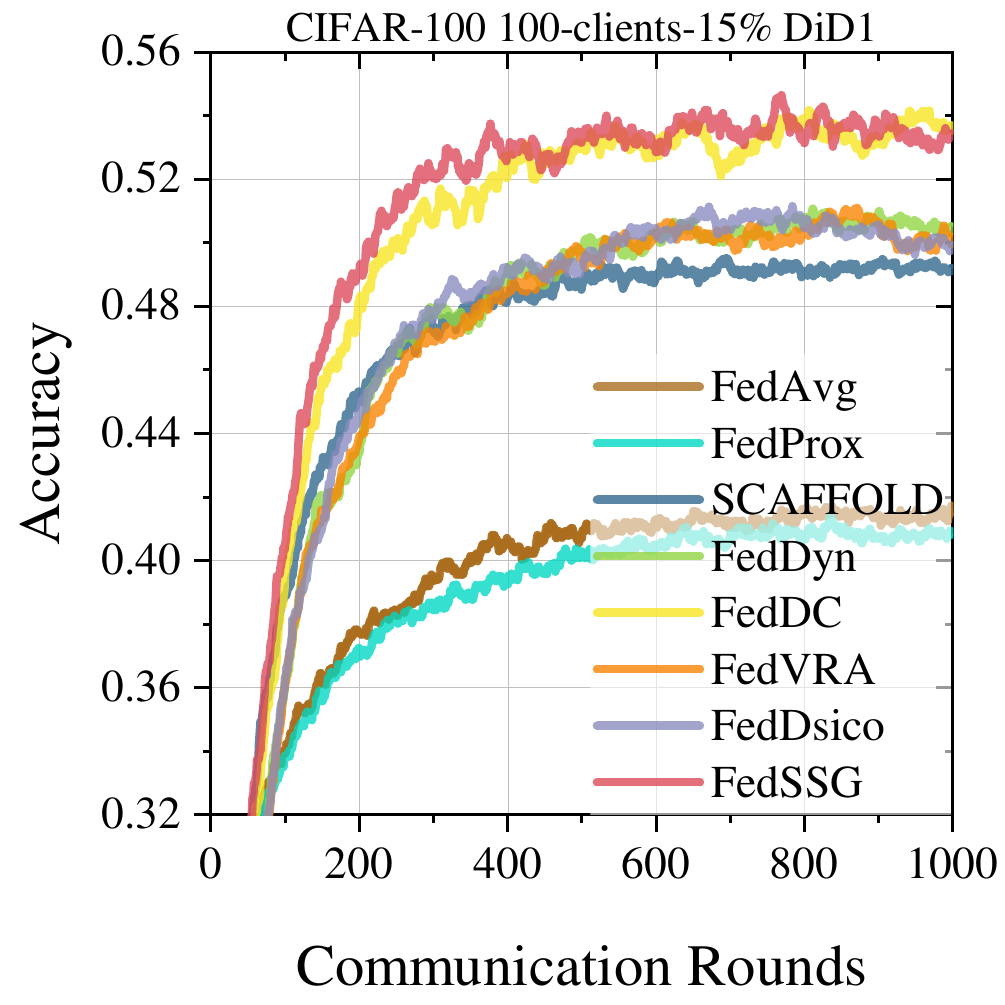}} \hfill
    \subfloat[]{\includegraphics[width=0.23\textwidth]{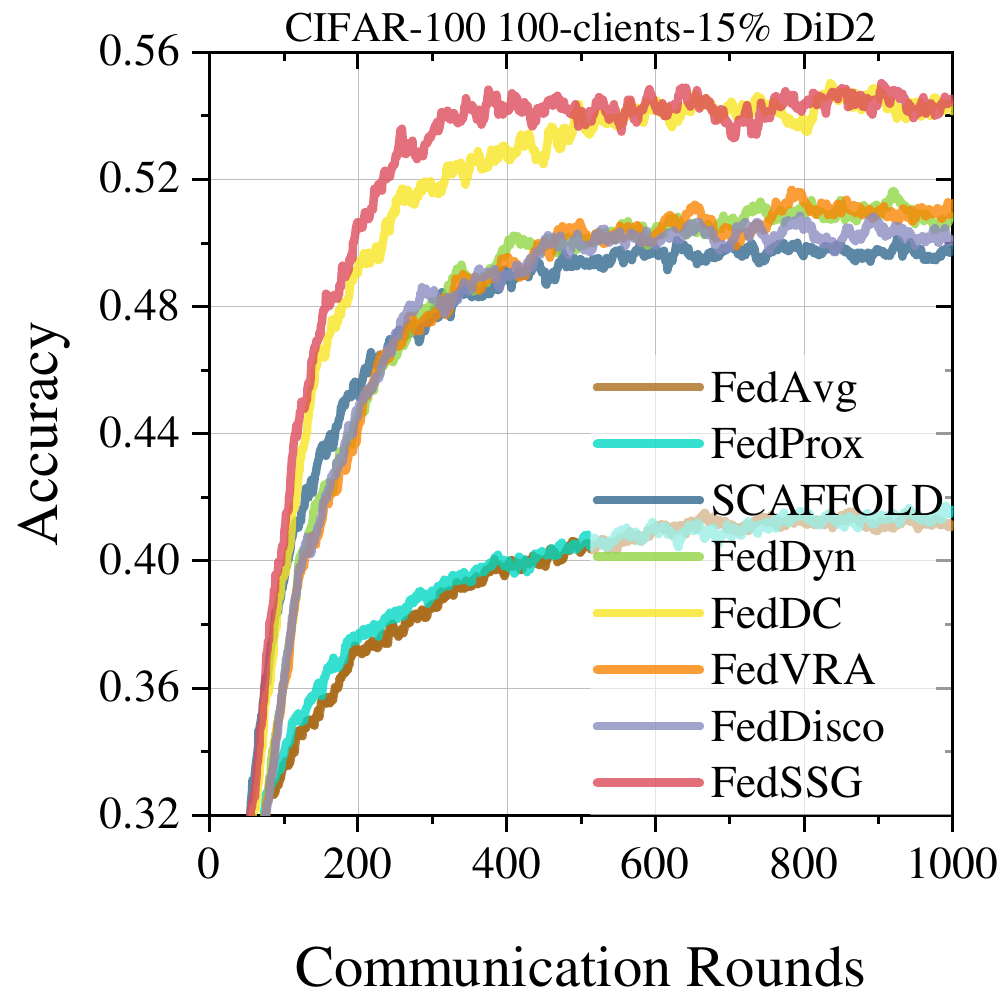}}
    \caption{Learning curves of FedSSG and baselines, with 100-clients-15\% settings on CIFAR-100 and on different distribution respectively.}  
    \label{fig:C100P15}
\end{figure*}

\begin{figure*}[h!]
    \centering
    \subfloat[]{\includegraphics[width=0.23\textwidth]{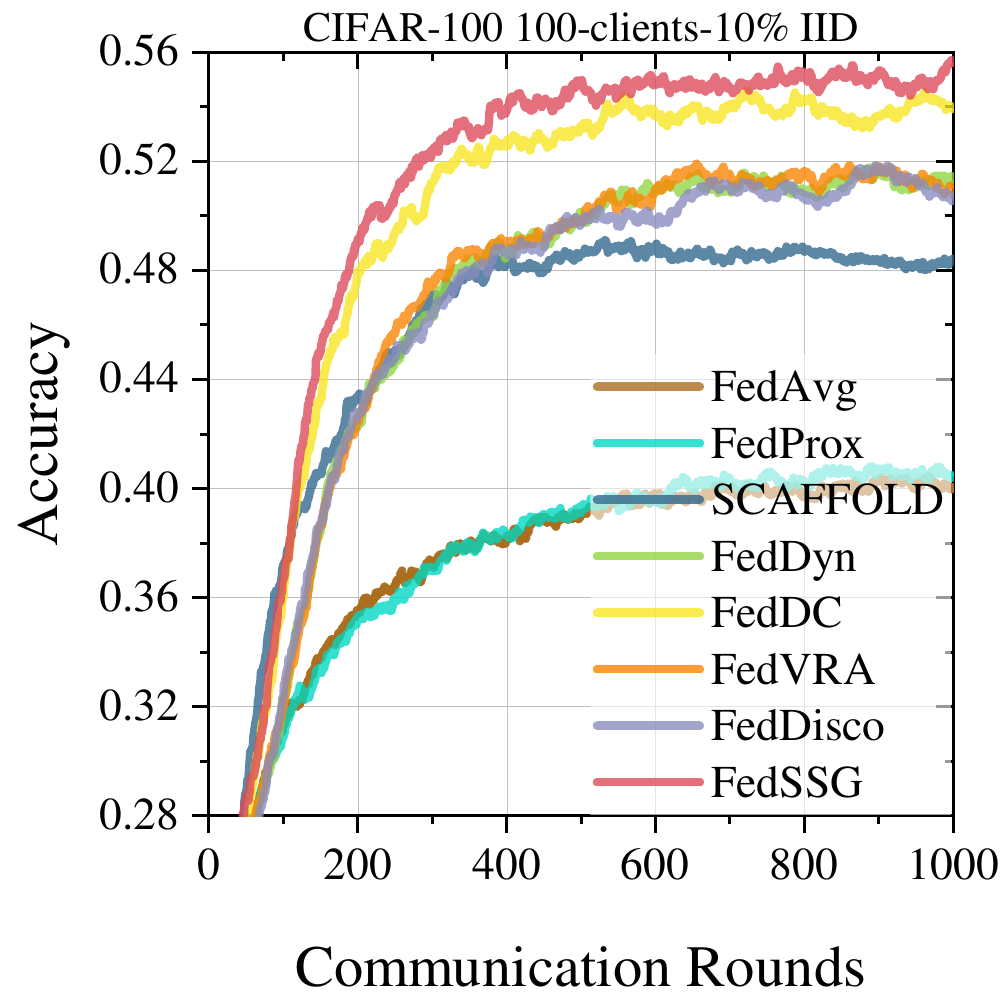}} \hfill
    \subfloat[]{\includegraphics[width=0.23\textwidth]{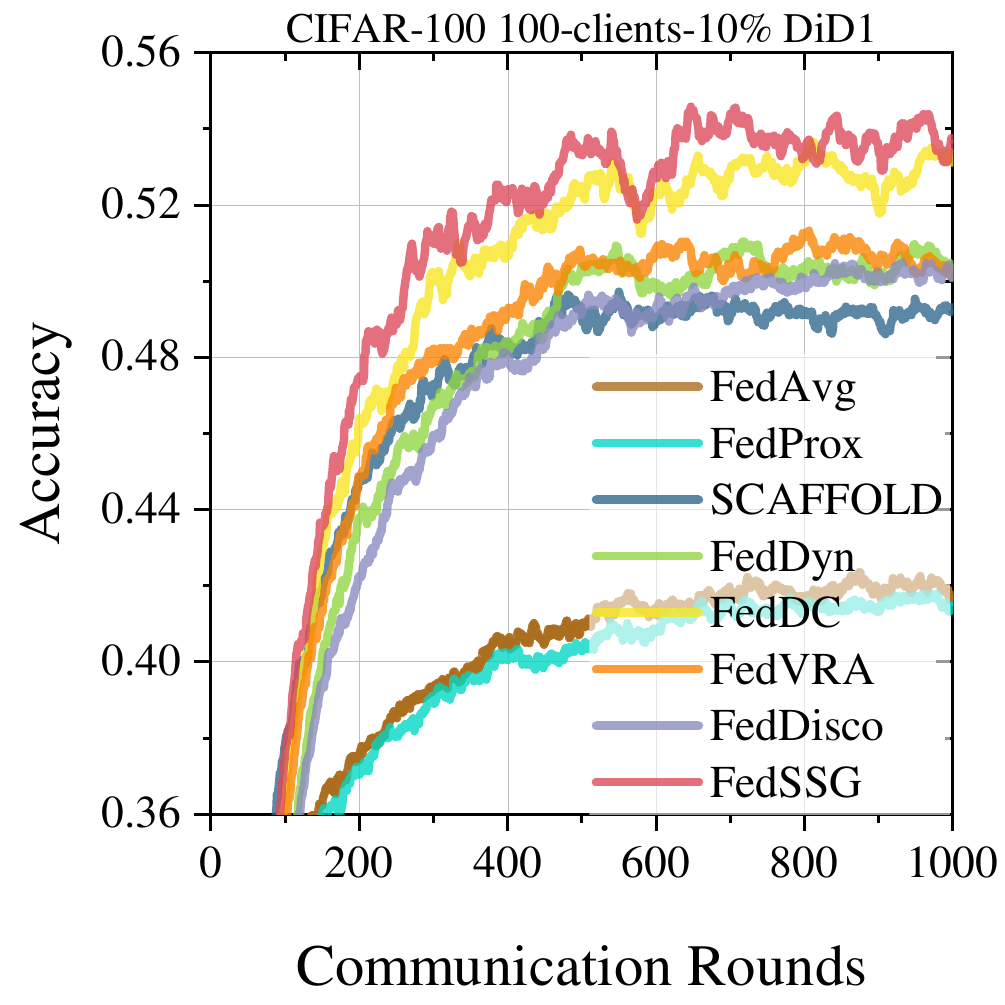}} \hfill
    \subfloat[]{\includegraphics[width=0.23\textwidth]{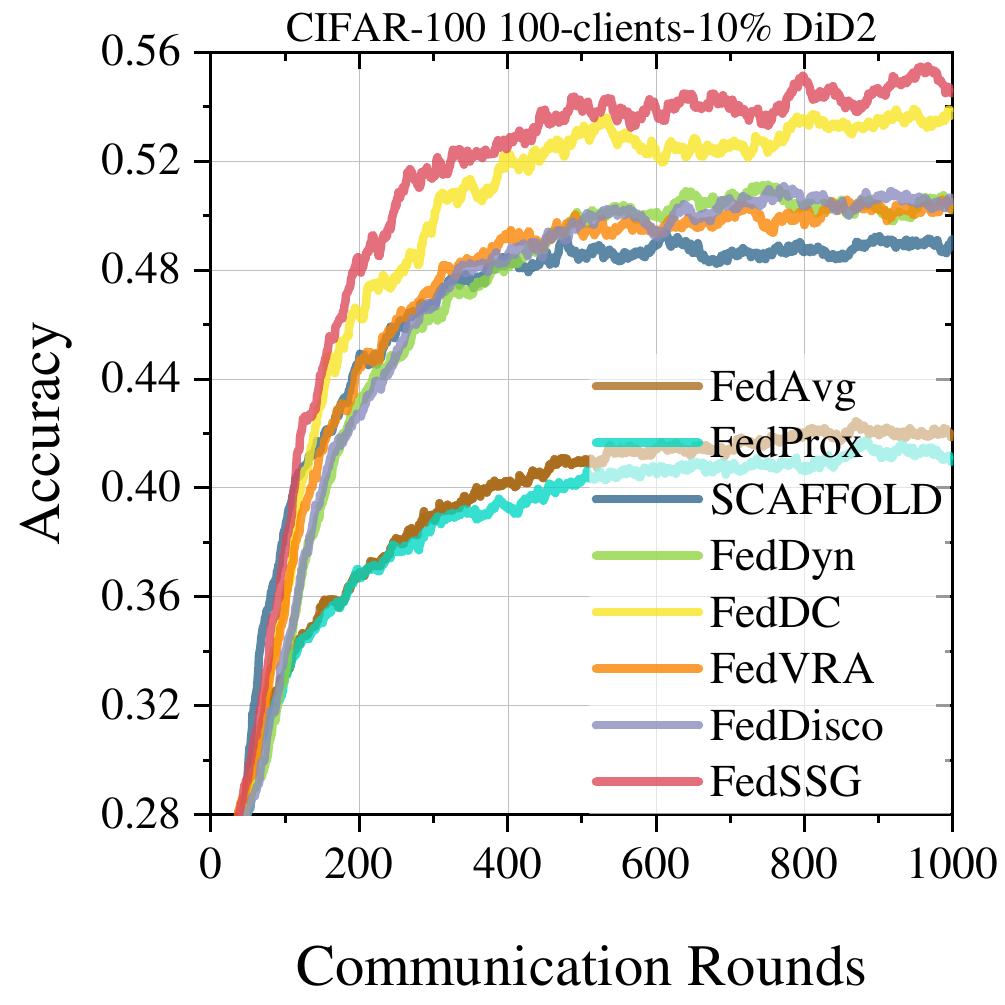}}
    \caption{Learning curves of FedSSG and baselines, with 100-clients-10\% settings on CIFAR-100 and on different distribution respectively.}
    \label{fig:C100P10}
\end{figure*}

\begin{figure*}[h!]
    \centering
    \subfloat[]{\includegraphics[width=0.23\textwidth]{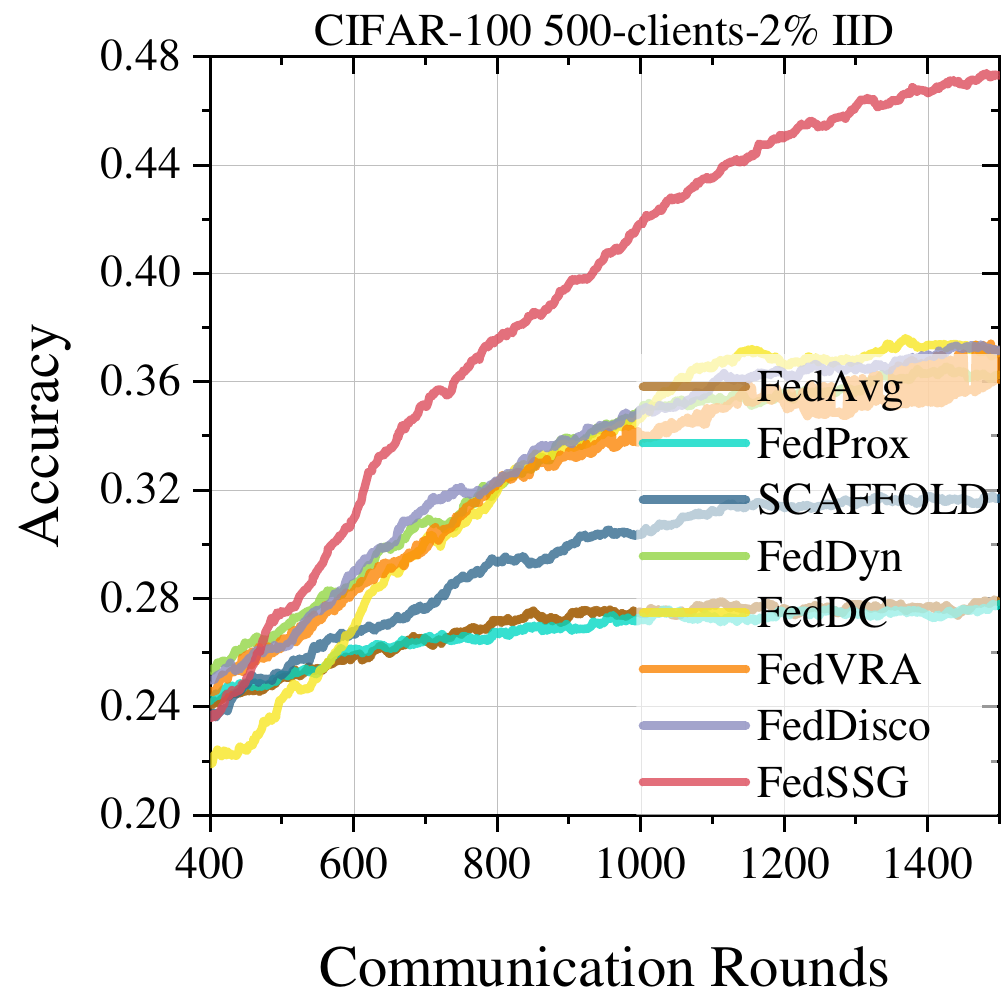}} \hfill
    \subfloat[]{\includegraphics[width=0.23\textwidth]{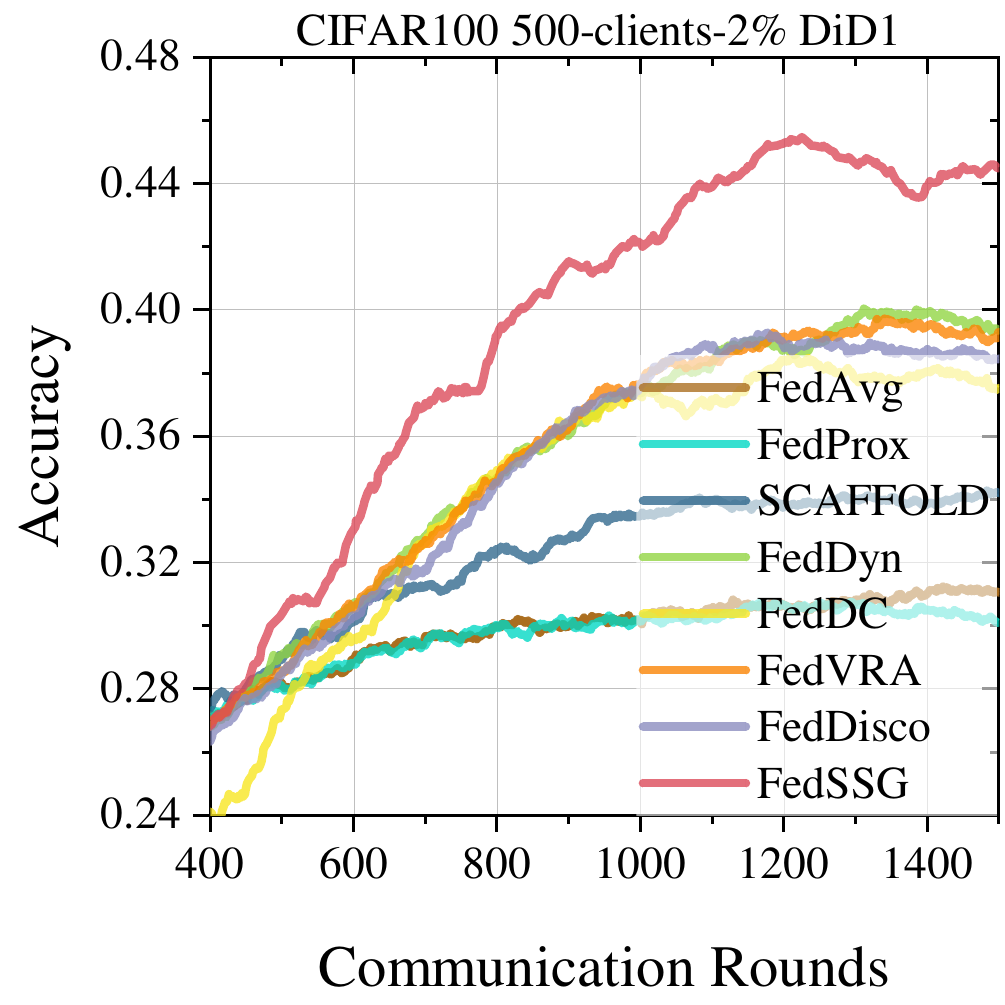}} \hfill
    \subfloat[]{\includegraphics[width=0.23\textwidth]{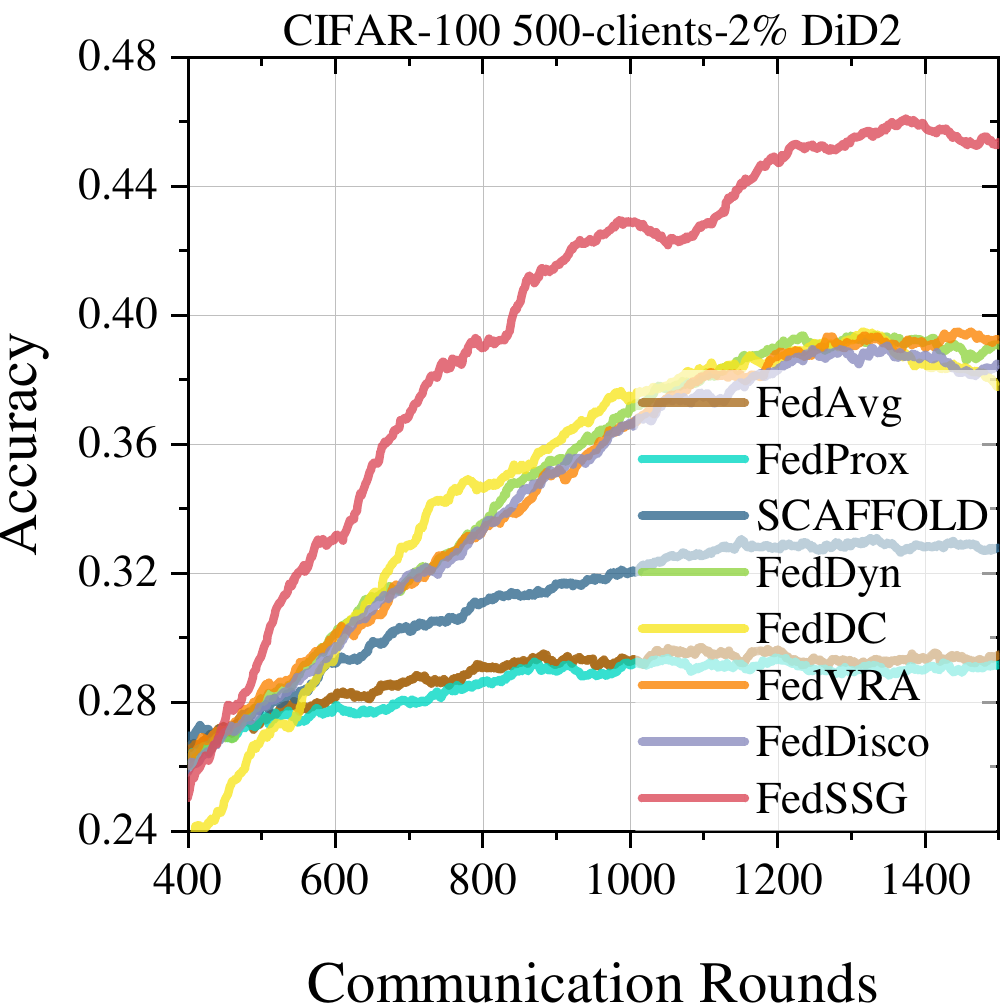}}
    \caption{Learning curves of FedSSG and baselines, with 500-clients-2\% settings on CIFAR-100 and on different distribution respectively.}
    \label{fig:C100P2}
\end{figure*}

\begin{figure*}[h!]
    \centering
    \subfloat[]{\includegraphics[width=0.23\textwidth]{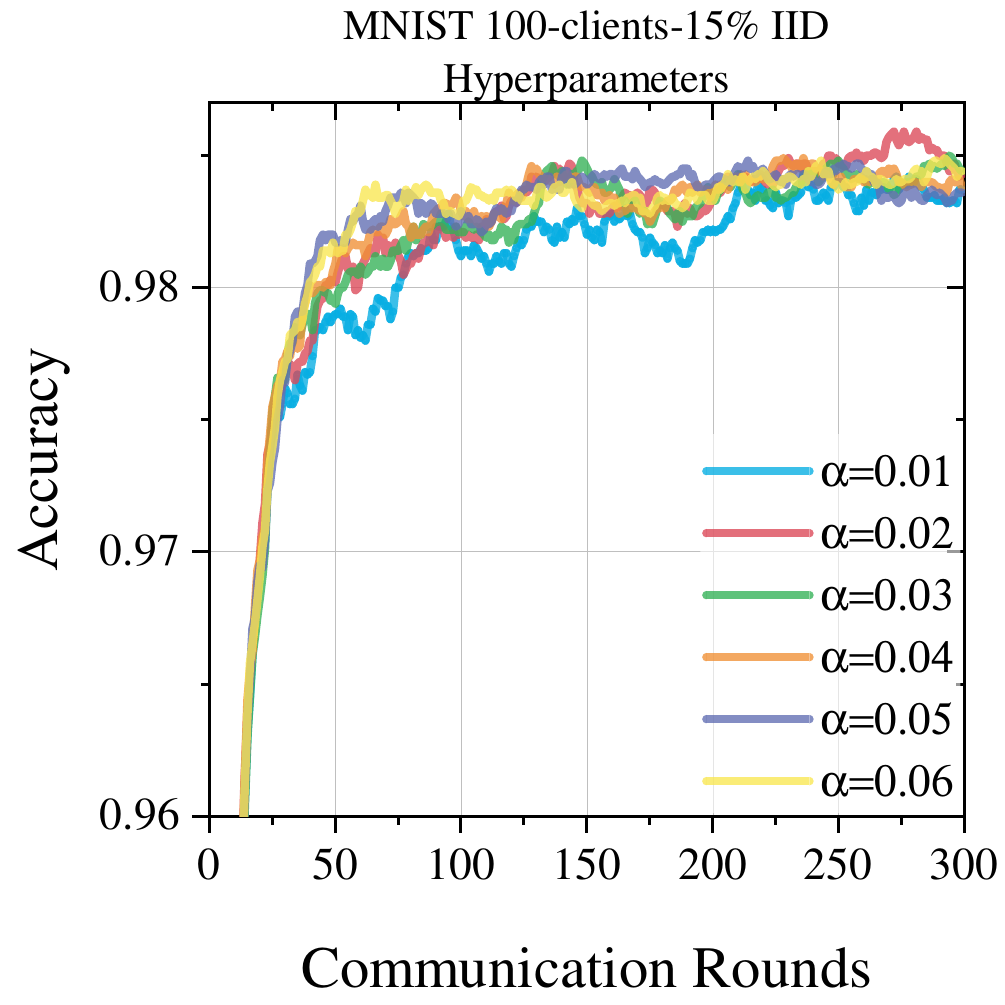}} \hfill
    \subfloat[]{\includegraphics[width=0.23\textwidth]{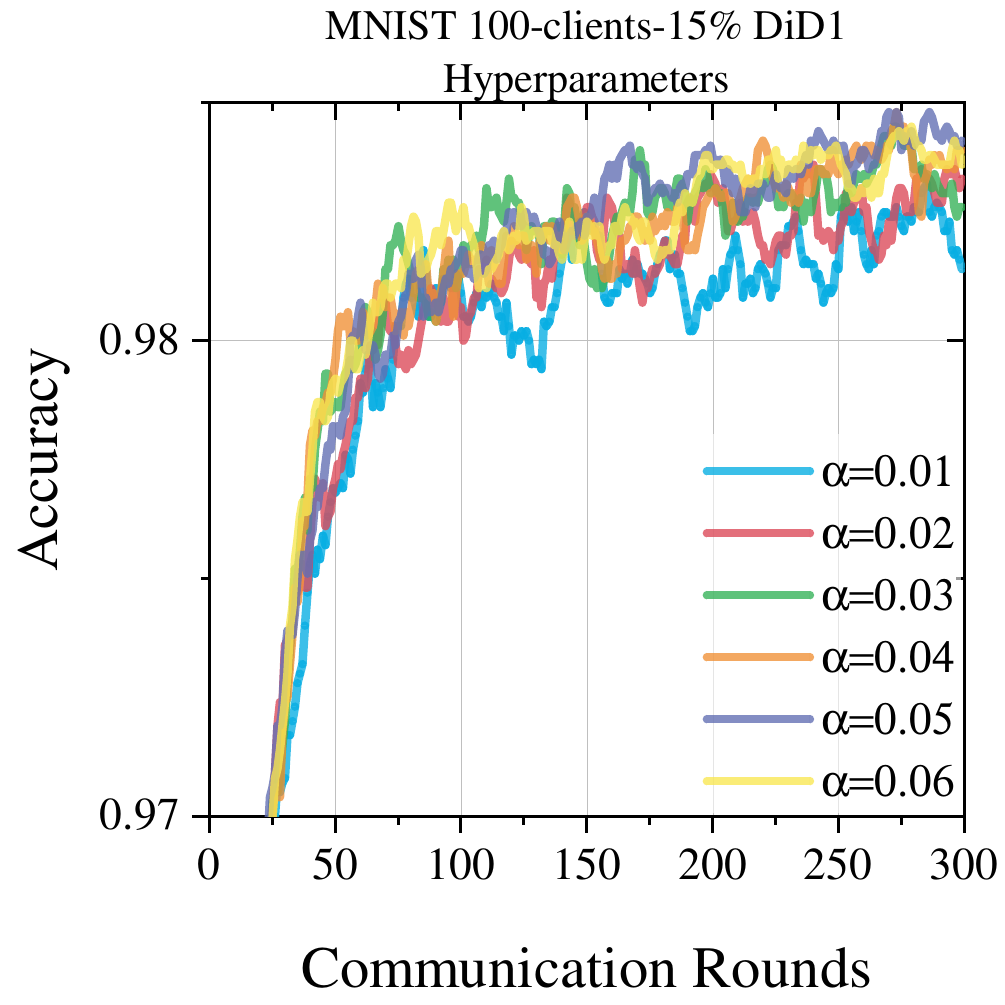}} \hfill
    \subfloat[]{\includegraphics[width=0.23\textwidth]{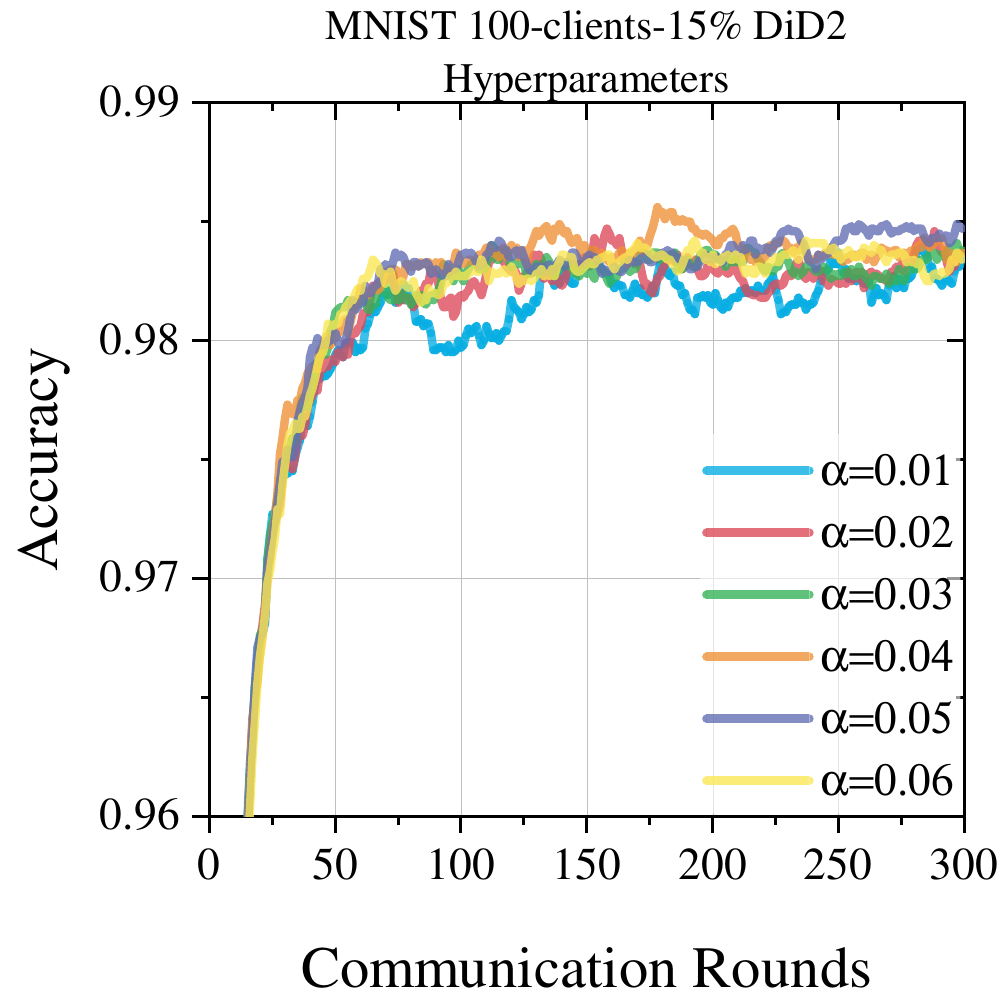}}
    
    
    \caption{Learning curves and loss curve of FedSSG's different hyper-parameters, with 100-clients-15\% settings on MNIST and on different distribution respectively.}
    \label{fig:MP15H}
\end{figure*}

\begin{figure*}[h!]
    \centering
    \subfloat[]{\includegraphics[width=0.23\textwidth]{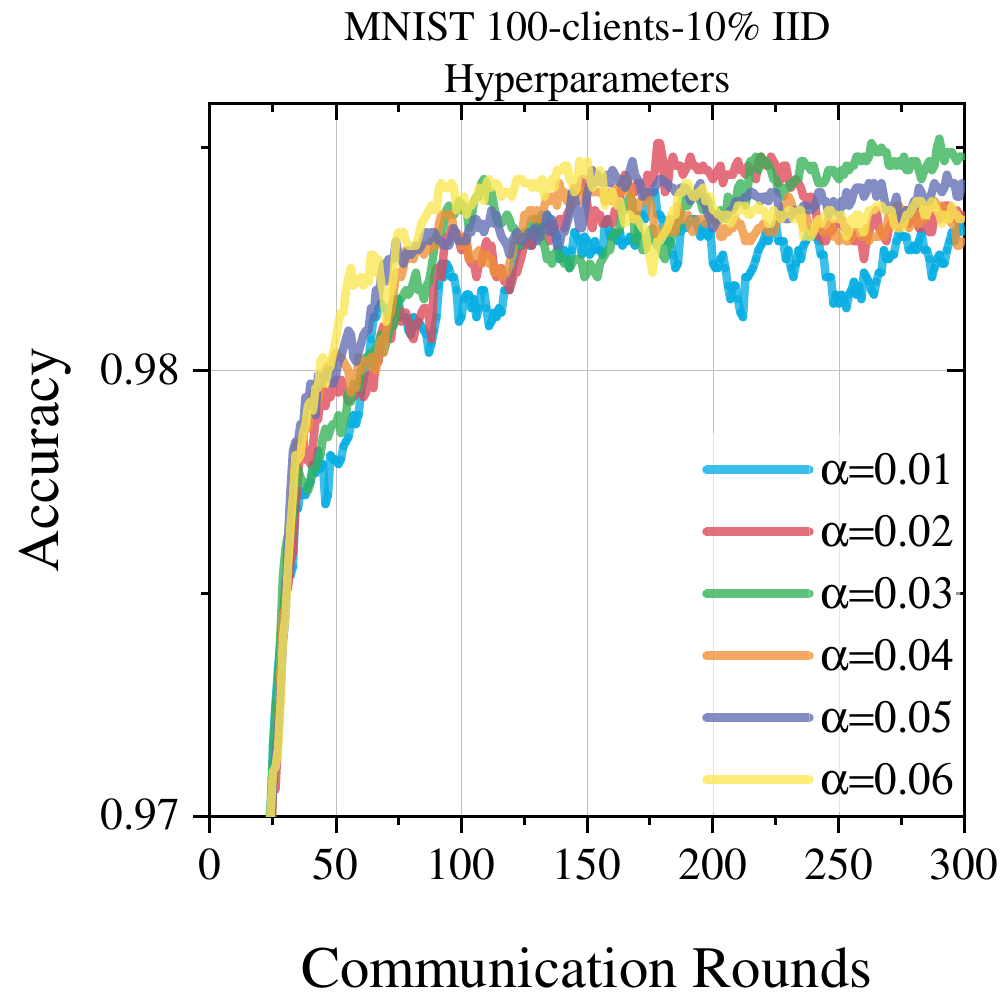}} \hfill
    \subfloat[]{\includegraphics[width=0.23\textwidth]{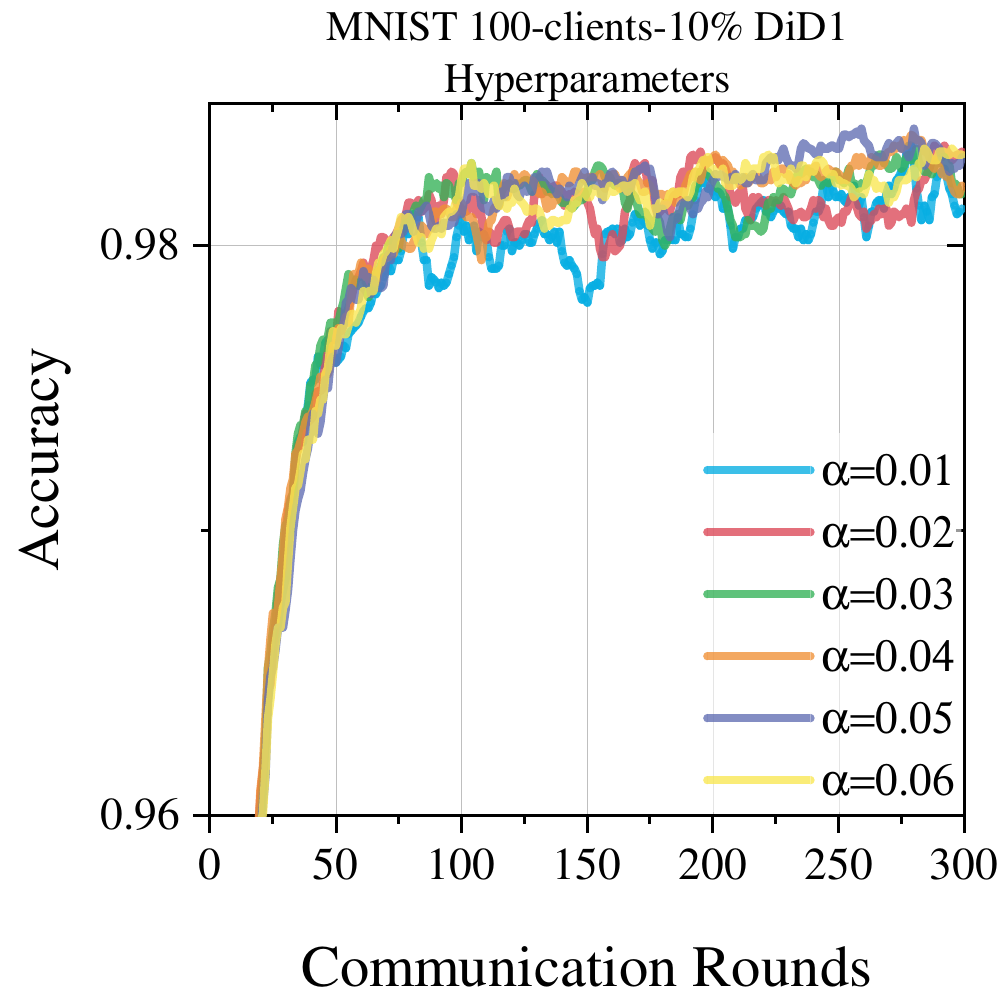}} \hfill
    \subfloat[]{\includegraphics[width=0.23\textwidth]{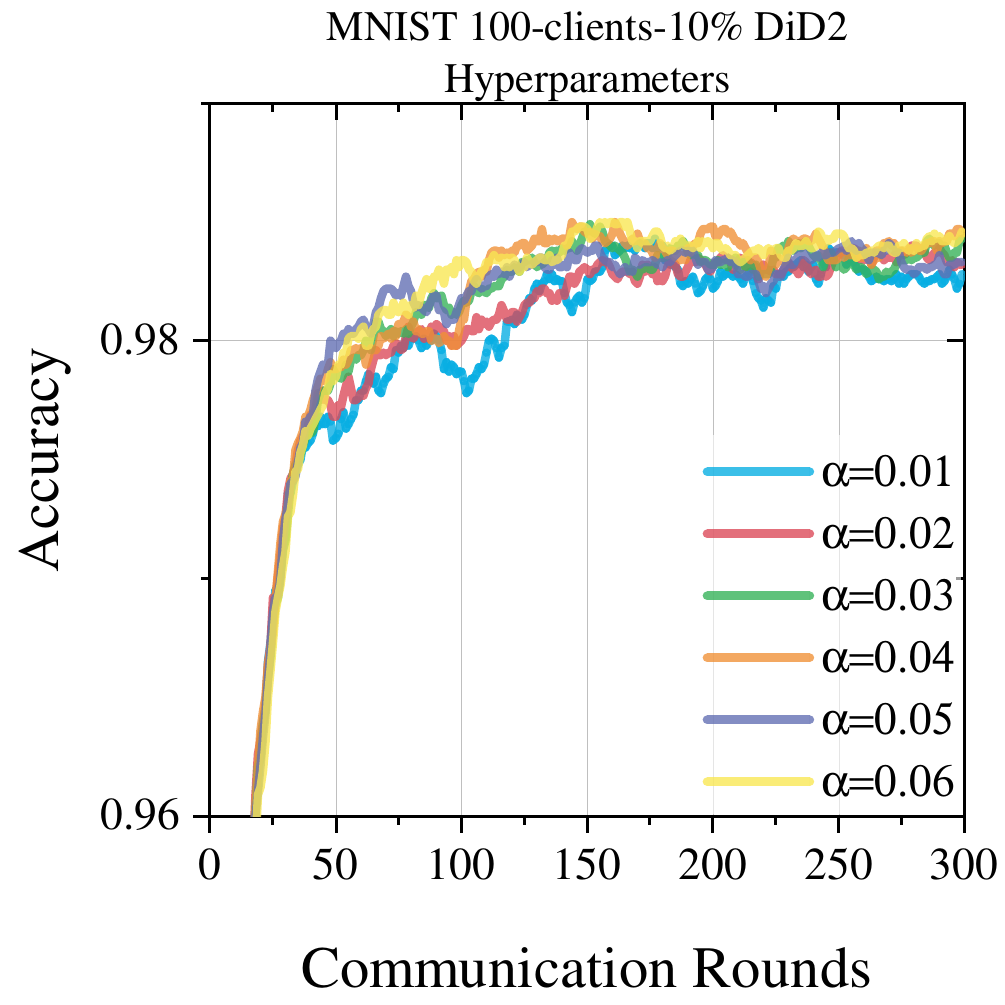}}
    
    
    \caption{Learning curves and loss curve of FedSSG's different hyper-parameters, with 100-clients-10\% settings on MNIST and on different distribution respectively.}
    \label{fig:MP10H}
\end{figure*}

\begin{figure*}[h!]
    \centering
    \subfloat[]{\includegraphics[width=0.23\textwidth]{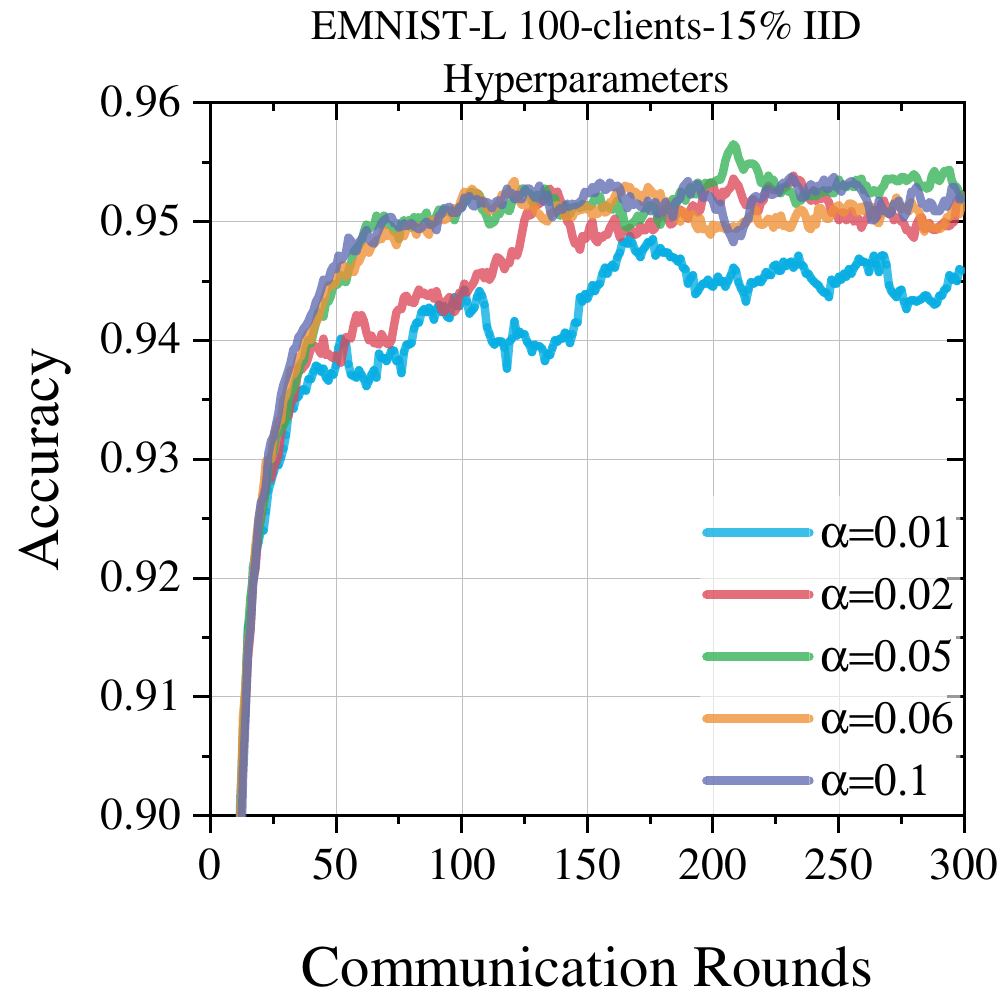}} \hfill
    \subfloat[]{\includegraphics[width=0.23\textwidth]{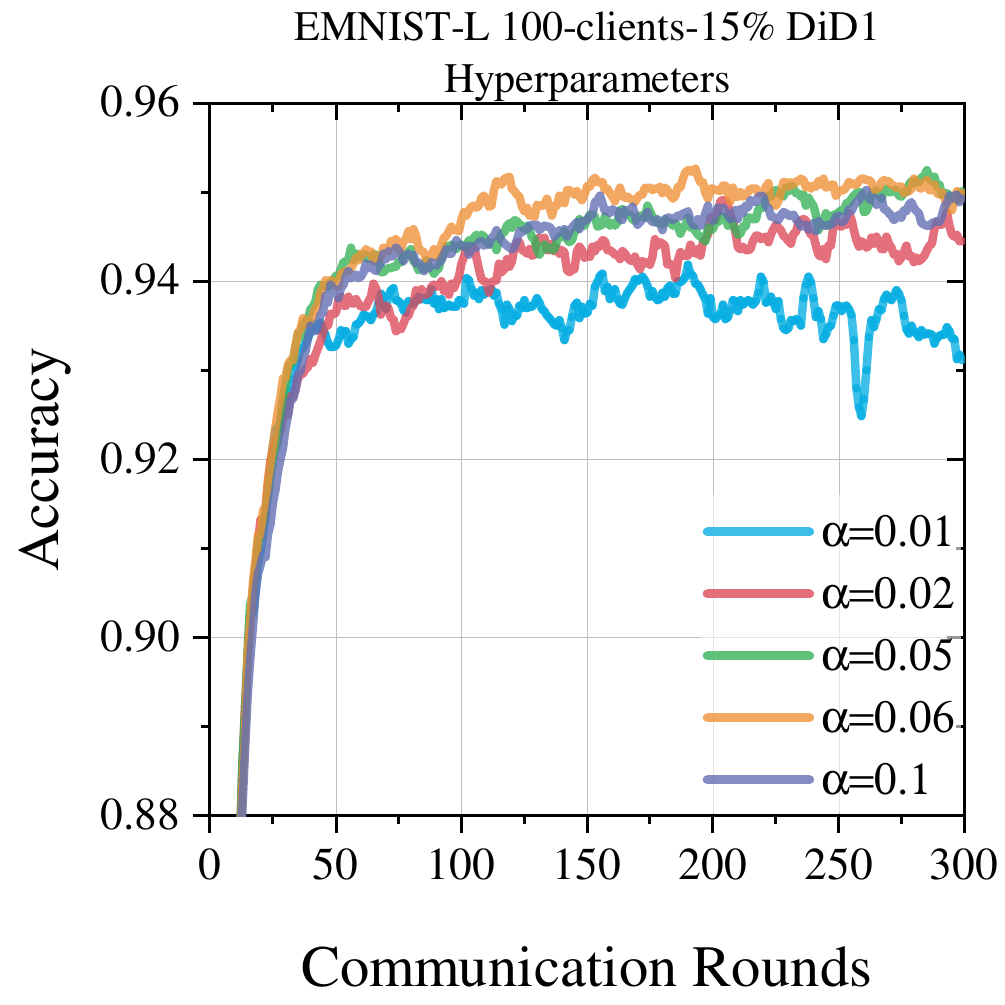}} \hfill
    \subfloat[]{\includegraphics[width=0.23\textwidth]{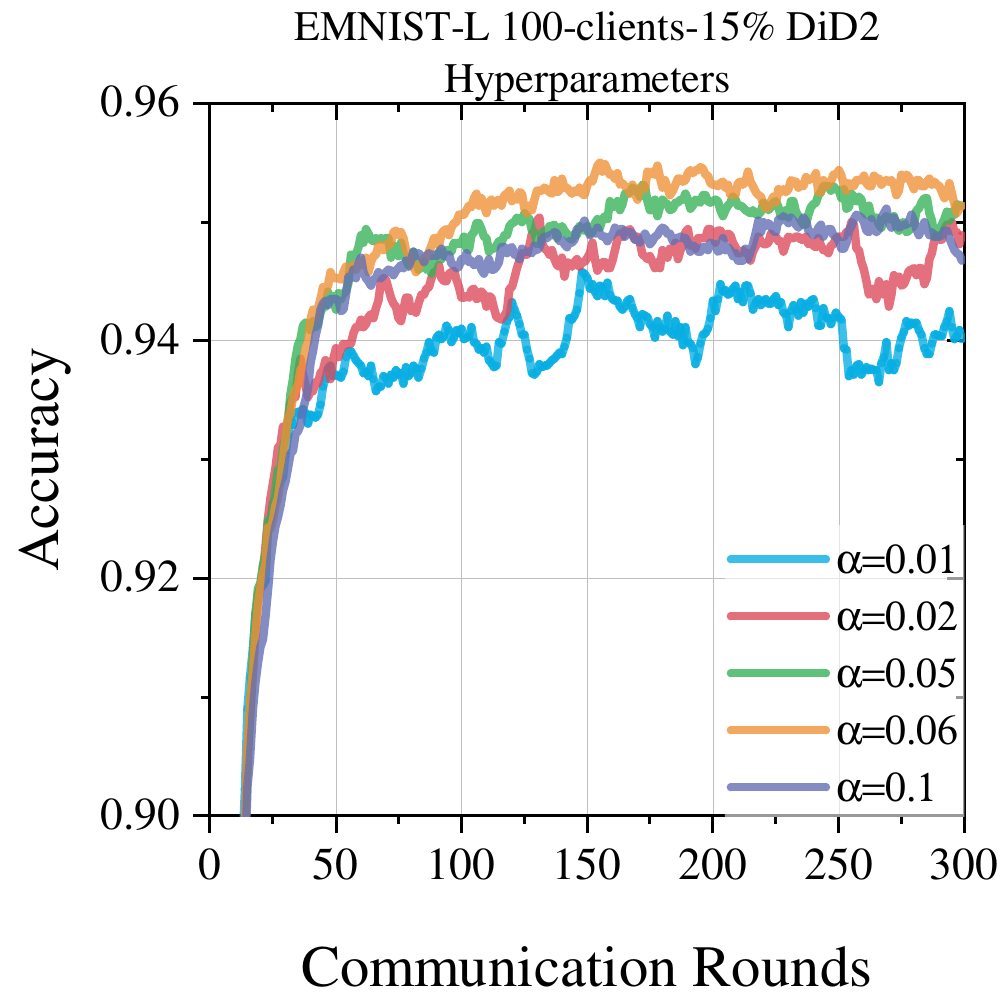}}
    
    
    \caption{Learning curves and loss curve of FedSSG's different hyper-parameters, with 100-clients-15\% settings on EMNIST-L and on different distribution respectively.}
    \label{fig:EP15H}
\end{figure*}

\begin{figure*}[h!]
    \centering
    \subfloat[]{\includegraphics[width=0.23\textwidth]{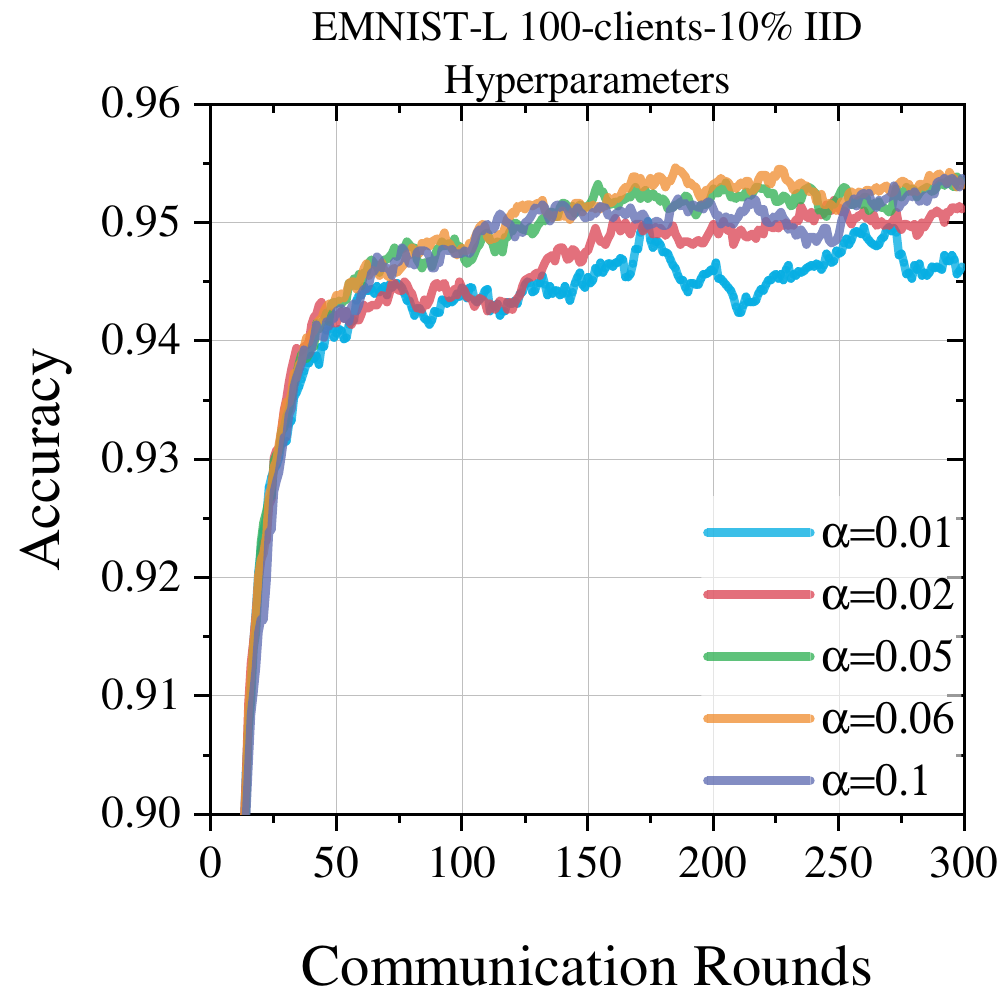}} \hfill
    \subfloat[]{\includegraphics[width=0.23\textwidth]{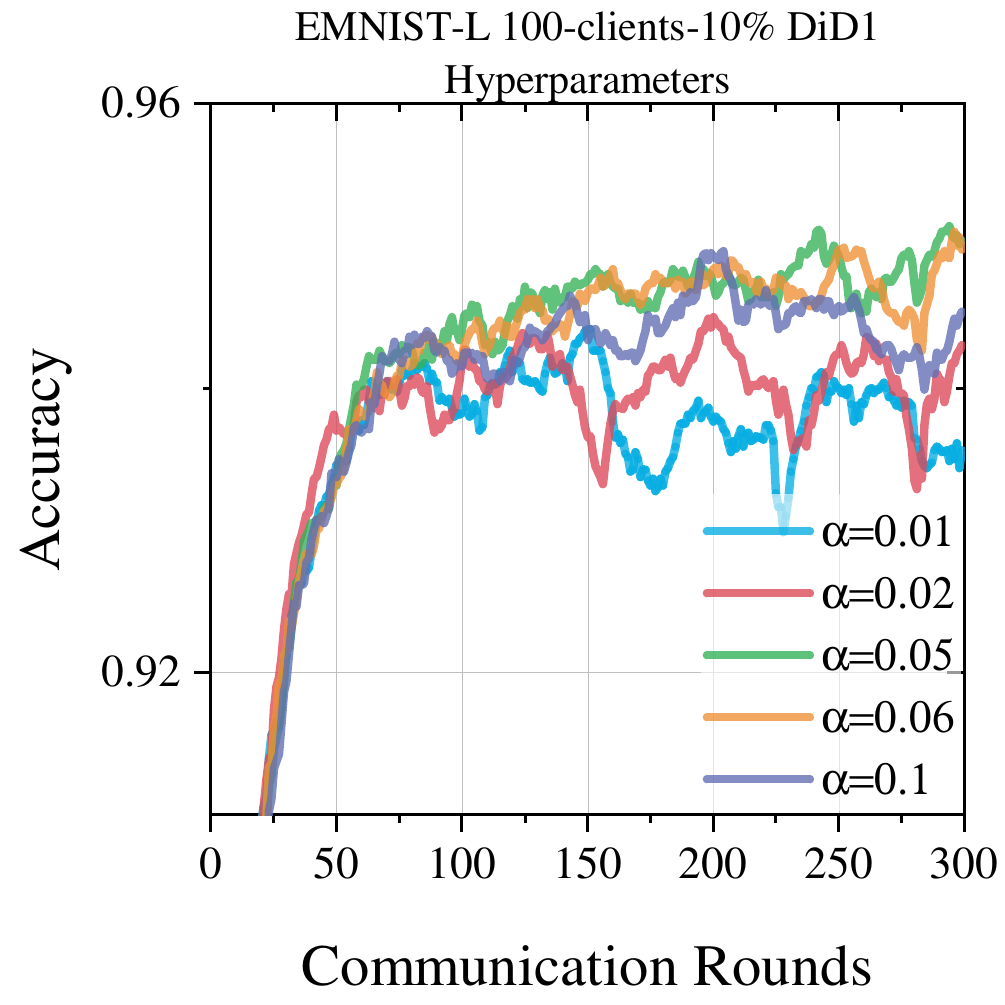}} \hfill
    \subfloat[]{\includegraphics[width=0.23\textwidth]{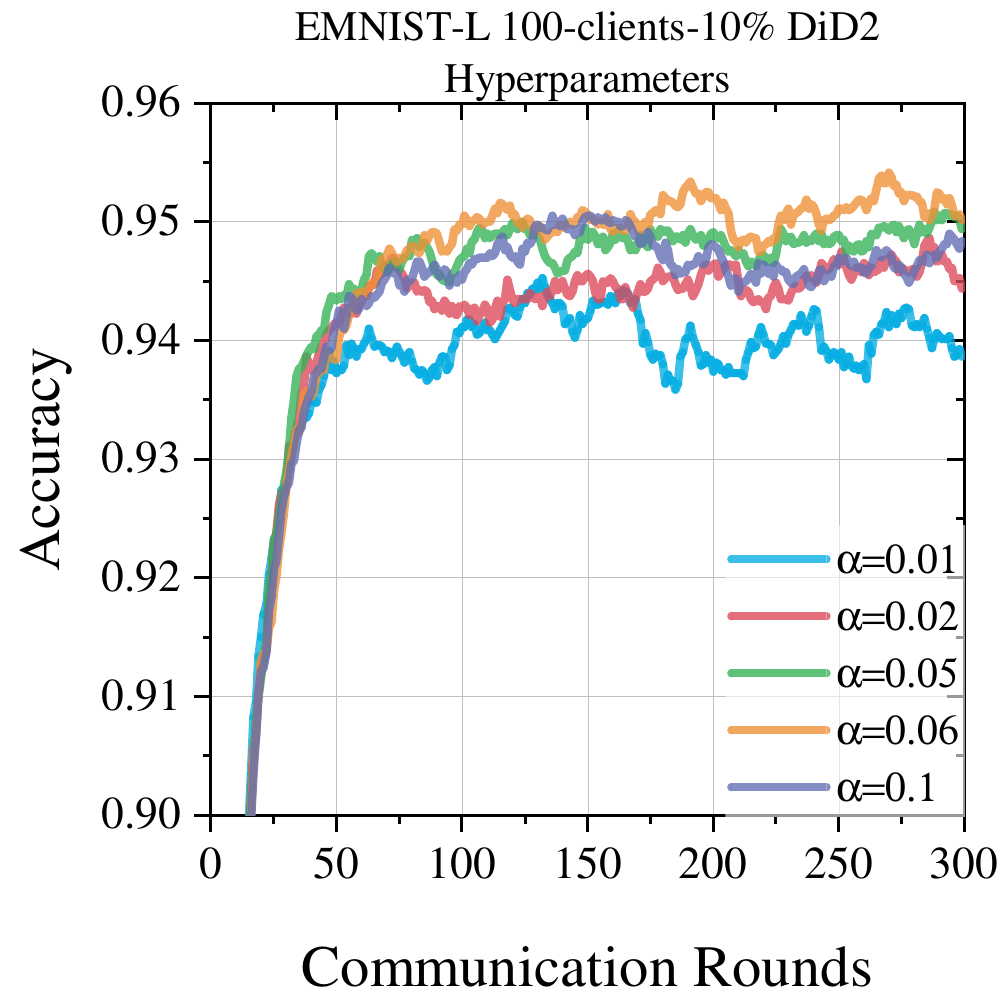}}
    
    
    \caption{Learning curves and loss curve of FedSSG's different hyper-parameters, with 100-clients-10\% settings on EMNIST-L and on different distribution respectively.}
    \label{fig:EP10H}
\end{figure*}

\begin{figure*}[h!]
    \centering
    \subfloat[]{\includegraphics[width=0.23\textwidth]{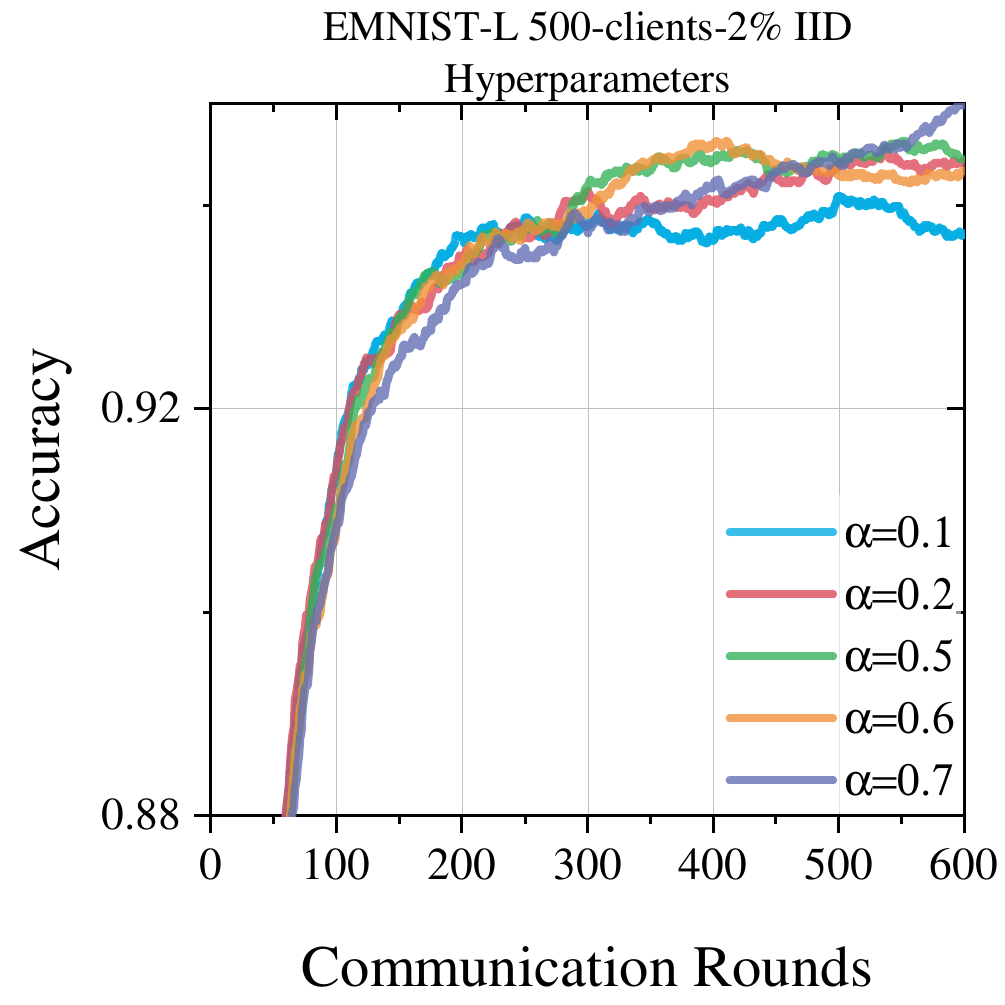}} \hfill
    \subfloat[]{\includegraphics[width=0.23\textwidth]{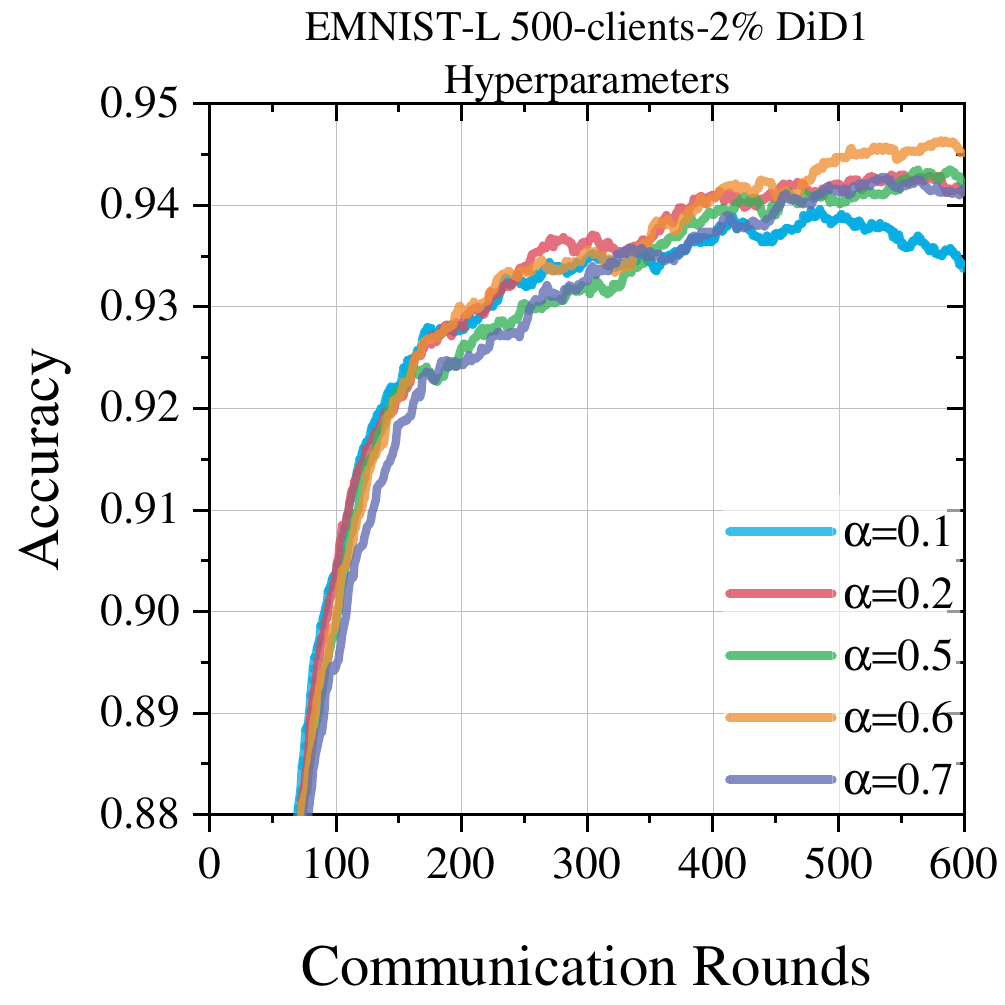}} \hfill
    \subfloat[]{\includegraphics[width=0.23\textwidth]{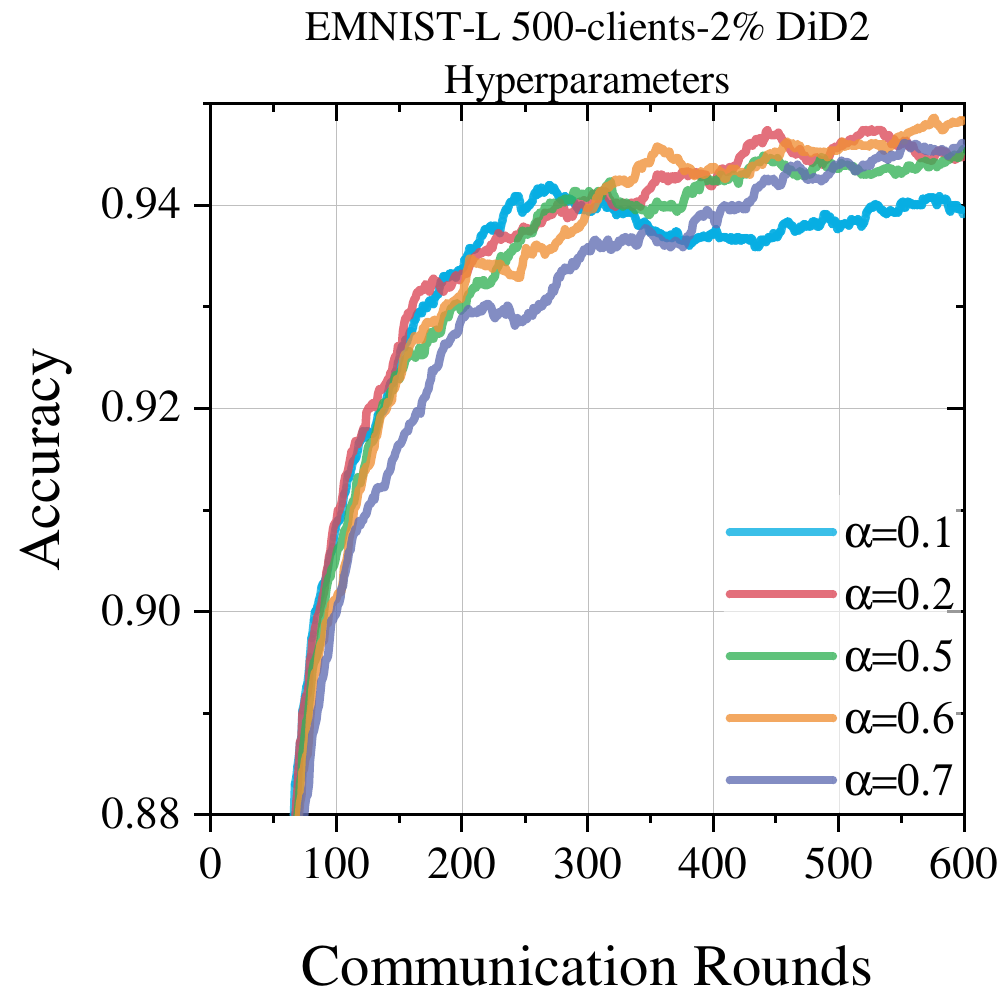}}
    
    
    \caption{Learning curves and loss curve of FedSSG's different hyper-parameters, with 500-clients-2\% settings on EMNIST-L and on different distribution respectively.}
    \label{fig:EP2H}
\end{figure*}

\begin{figure*}[h!]
    \centering
    \subfloat[]{\includegraphics[width=0.23\textwidth]{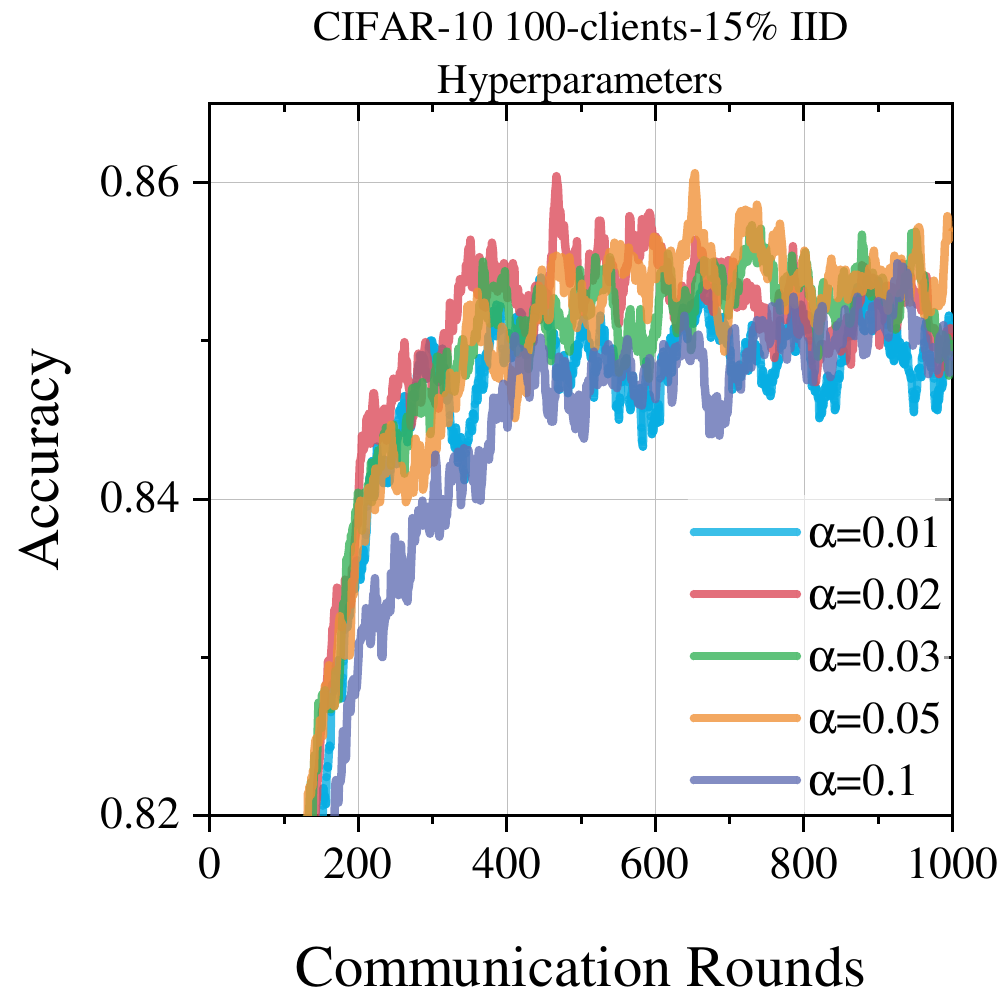}} \hfill
    \subfloat[]{\includegraphics[width=0.23\textwidth]{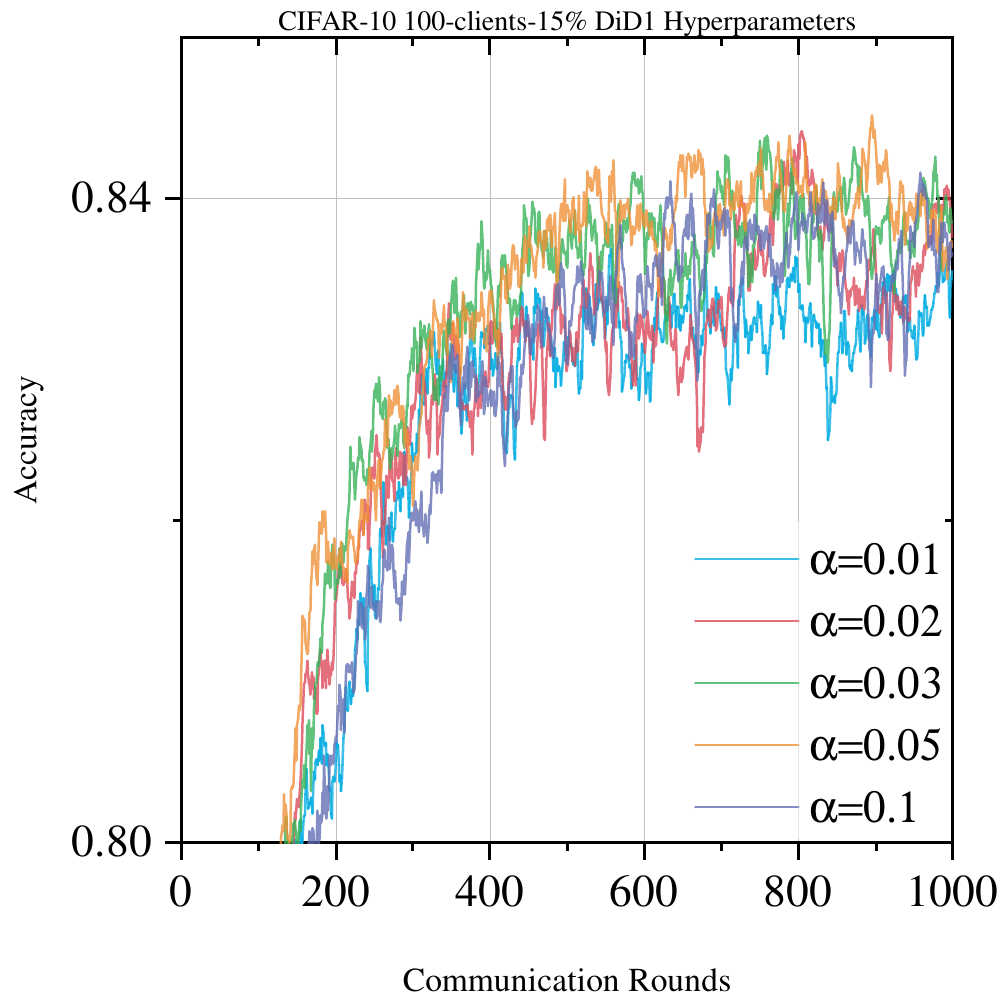}} \hfill
    \subfloat[]{\includegraphics[width=0.23\textwidth]{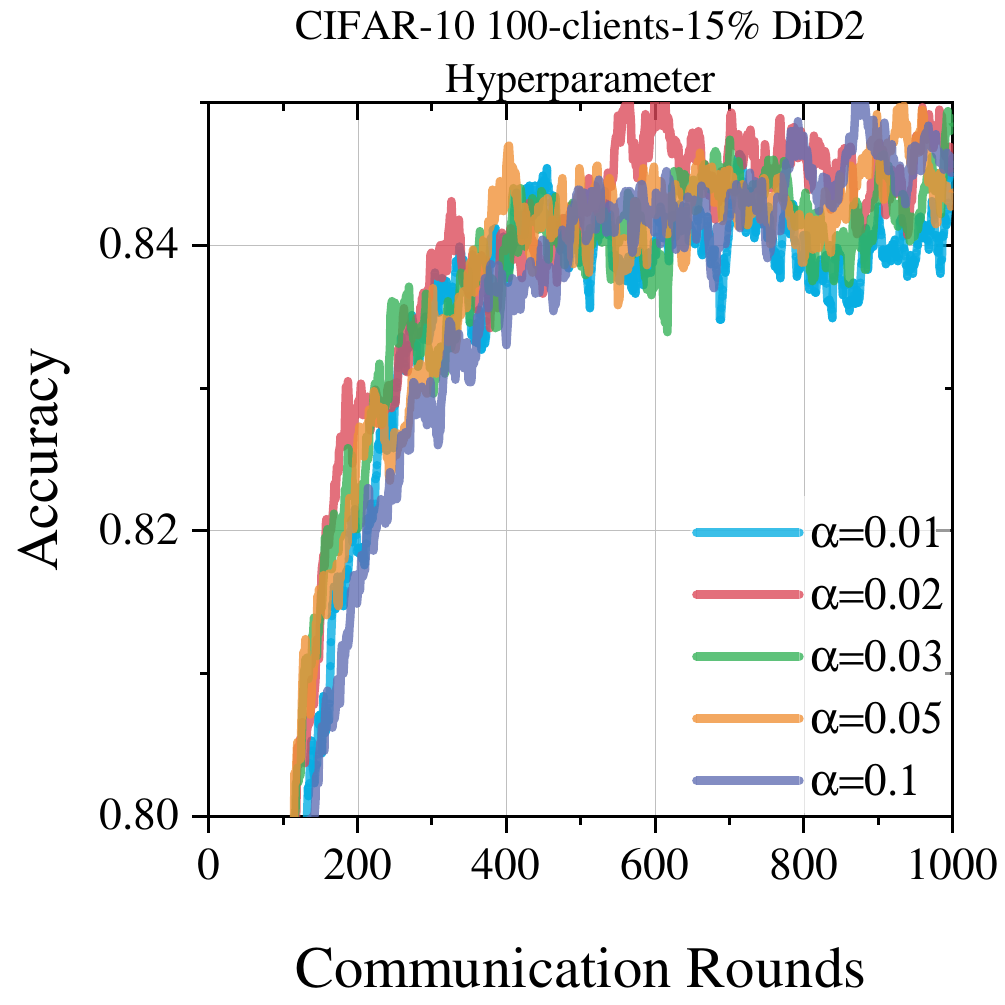}}
    
    
    \caption{Learning curves and loss curve of FedSSG's different hyper-parameters, with 100-clients-15\% settings on CIFAR-10 and on different distribution respectively.}
    \label{fig:C10P15H}
\end{figure*}

\begin{figure*}[h!]
    \centering
    \subfloat[]{\includegraphics[width=0.23\textwidth]{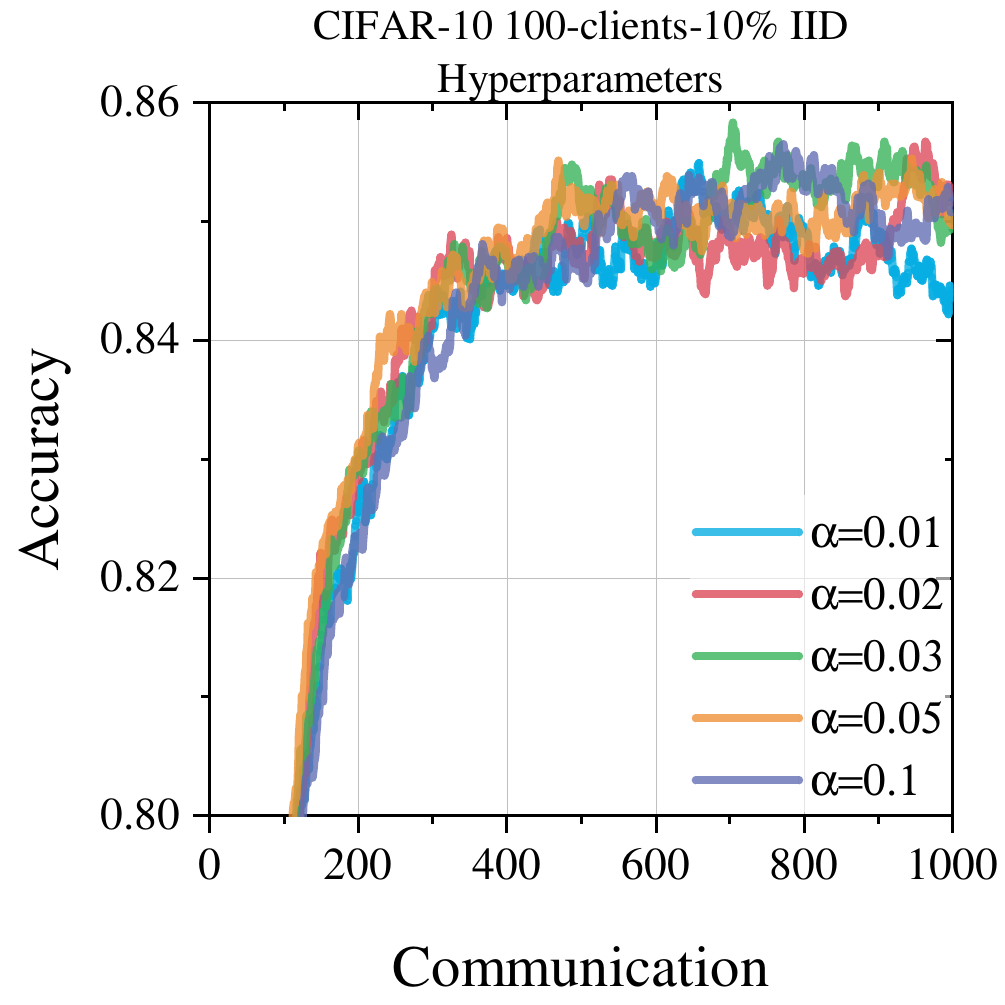}} \hfill
    \subfloat[]{\includegraphics[width=0.23\textwidth]{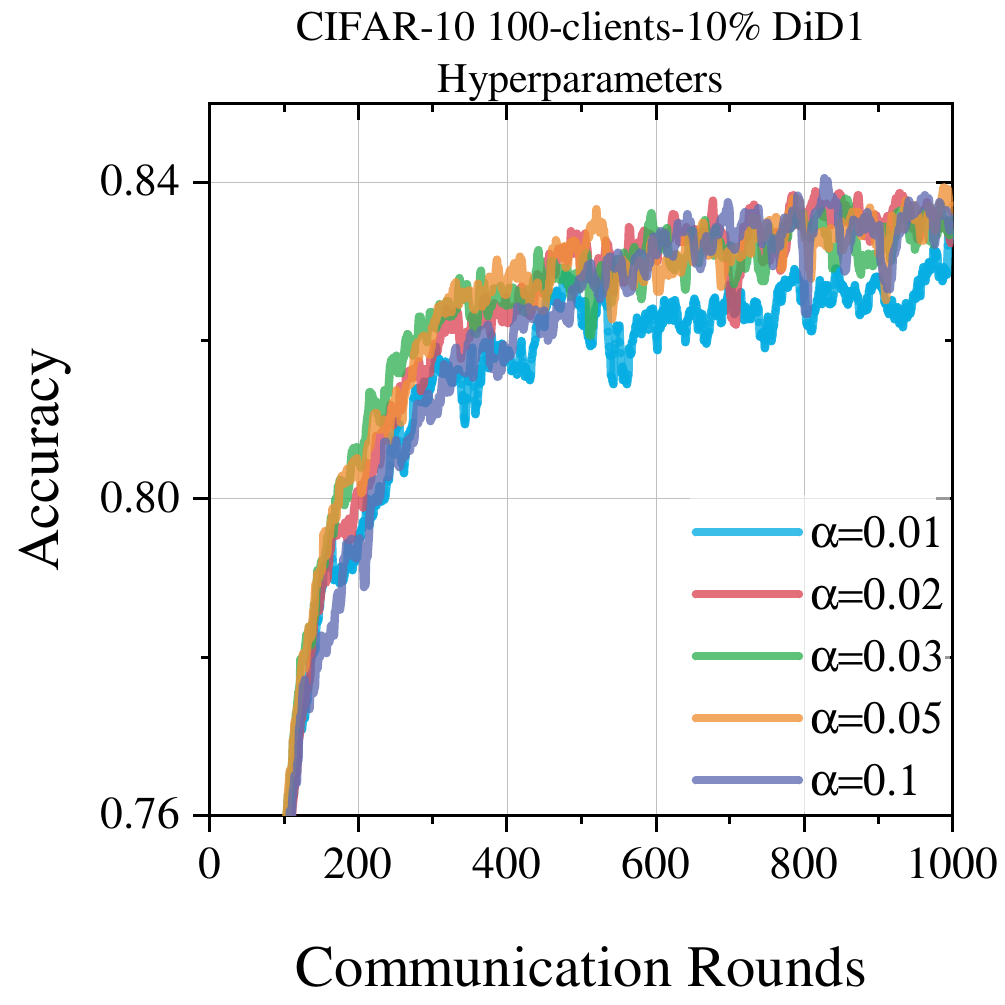}} \hfill
    \subfloat[]{\includegraphics[width=0.23\textwidth]{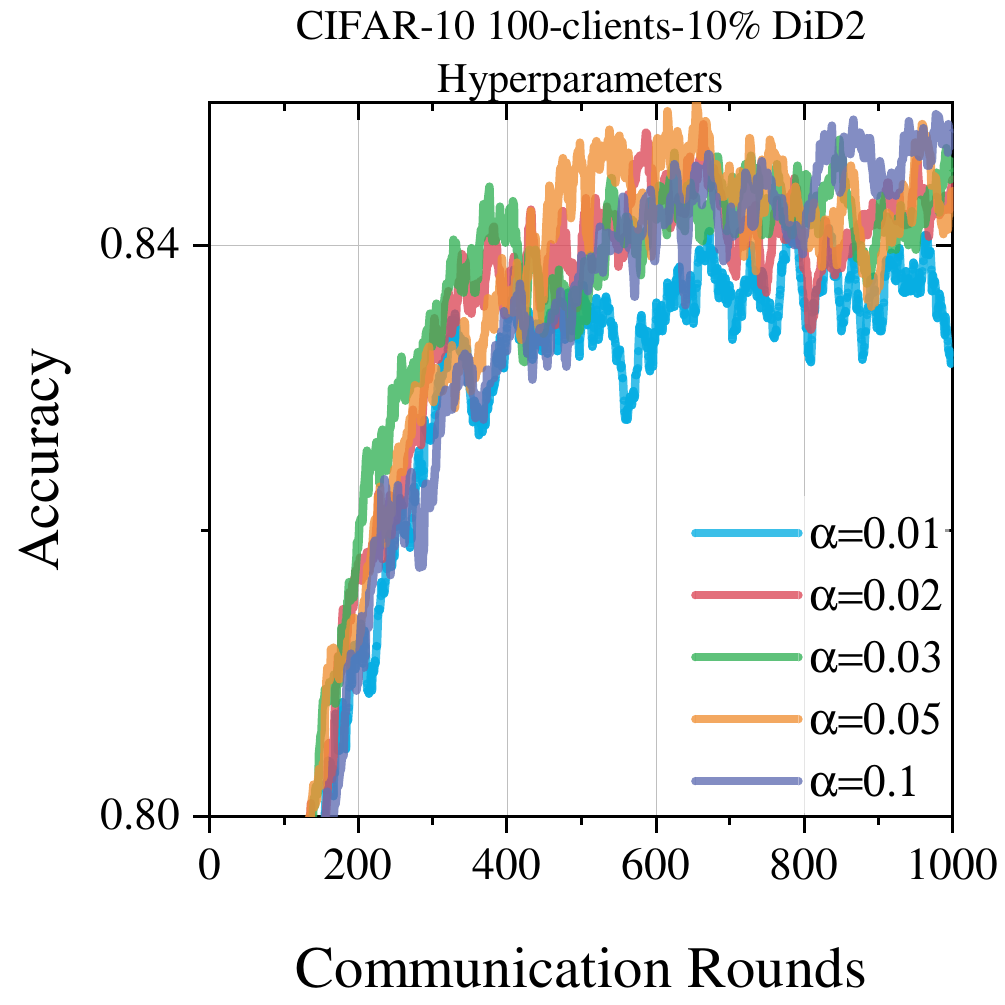}}
    
    
    \caption{Learning curves and loss curve of FedSSG's different hyper-parameters, with 100-clients-10\% settings on CIFAR-10 and on different distribution respectively.}
    \label{fig:C10P10H}
\end{figure*}

\begin{figure*}[h!]
    \centering
    \subfloat[]{\includegraphics[width=0.23\textwidth]{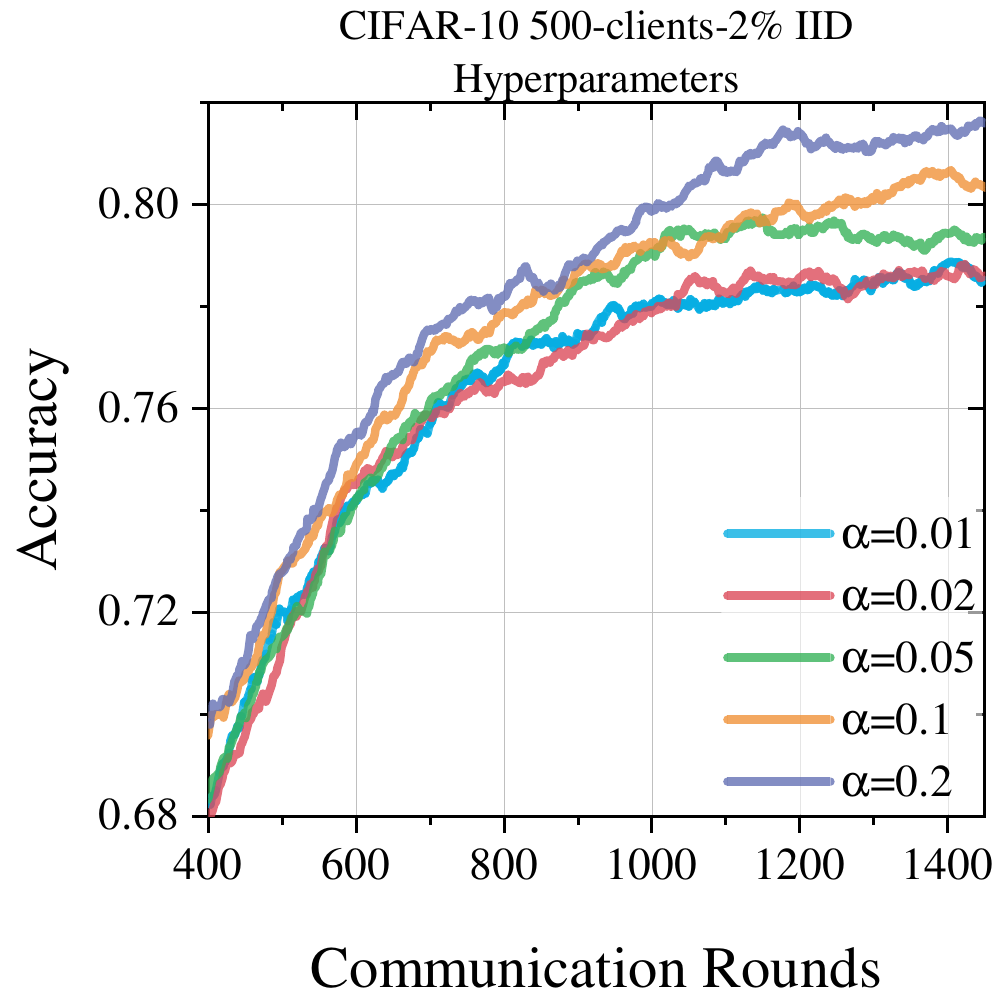}} \hfill
    \subfloat[]{\includegraphics[width=0.23\textwidth]{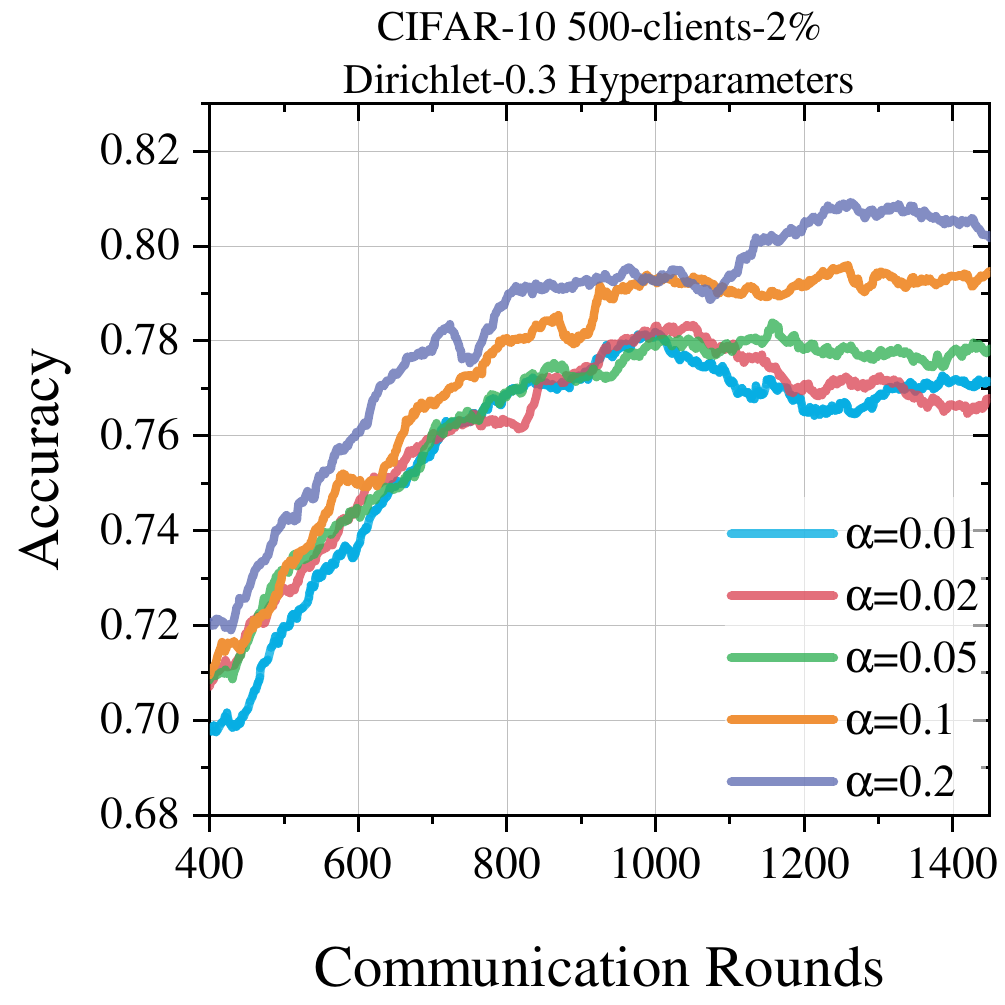}} \hfill
    \subfloat[]{\includegraphics[width=0.23\textwidth]{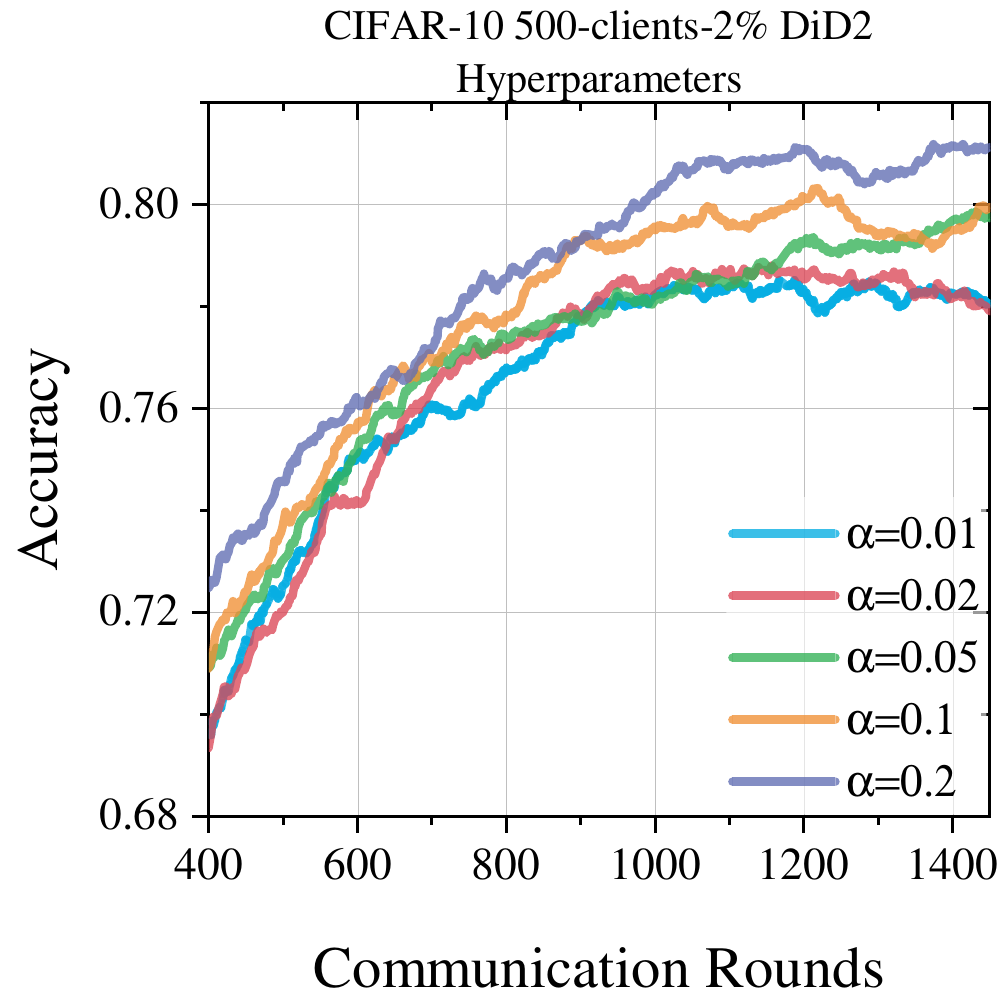}}
    
    
    \caption{Learning curves and loss curve of FedSSG's different hyper-parameters, with 500-clients-2\% settings on CIFAR-10 and on different distribution respectively.}
    \label{fig:C10P2H}
\end{figure*}

\begin{figure*}
    \centering
    \subfloat[]{\includegraphics[width=0.23\textwidth]{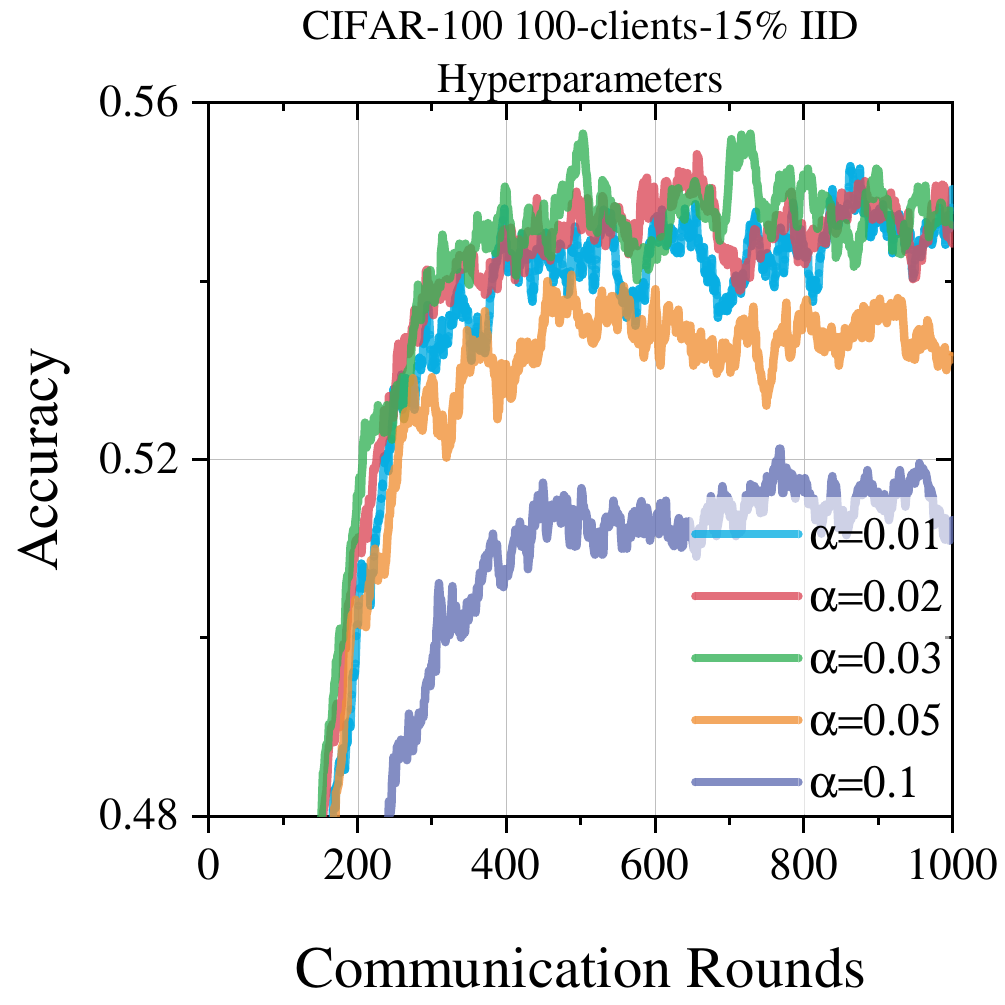}} \hfill
    \subfloat[]{\includegraphics[width=0.23\textwidth]{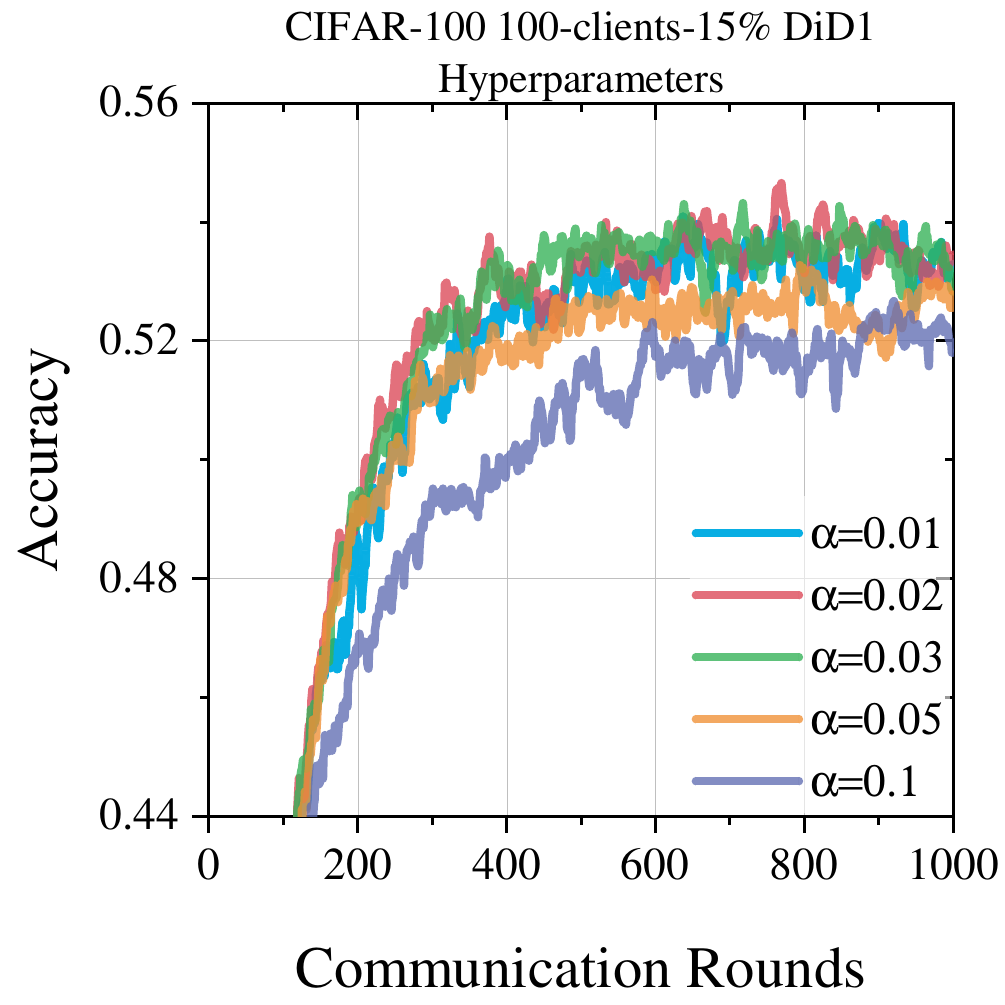}} \hfill
    \subfloat[]{\includegraphics[width=0.23\textwidth]{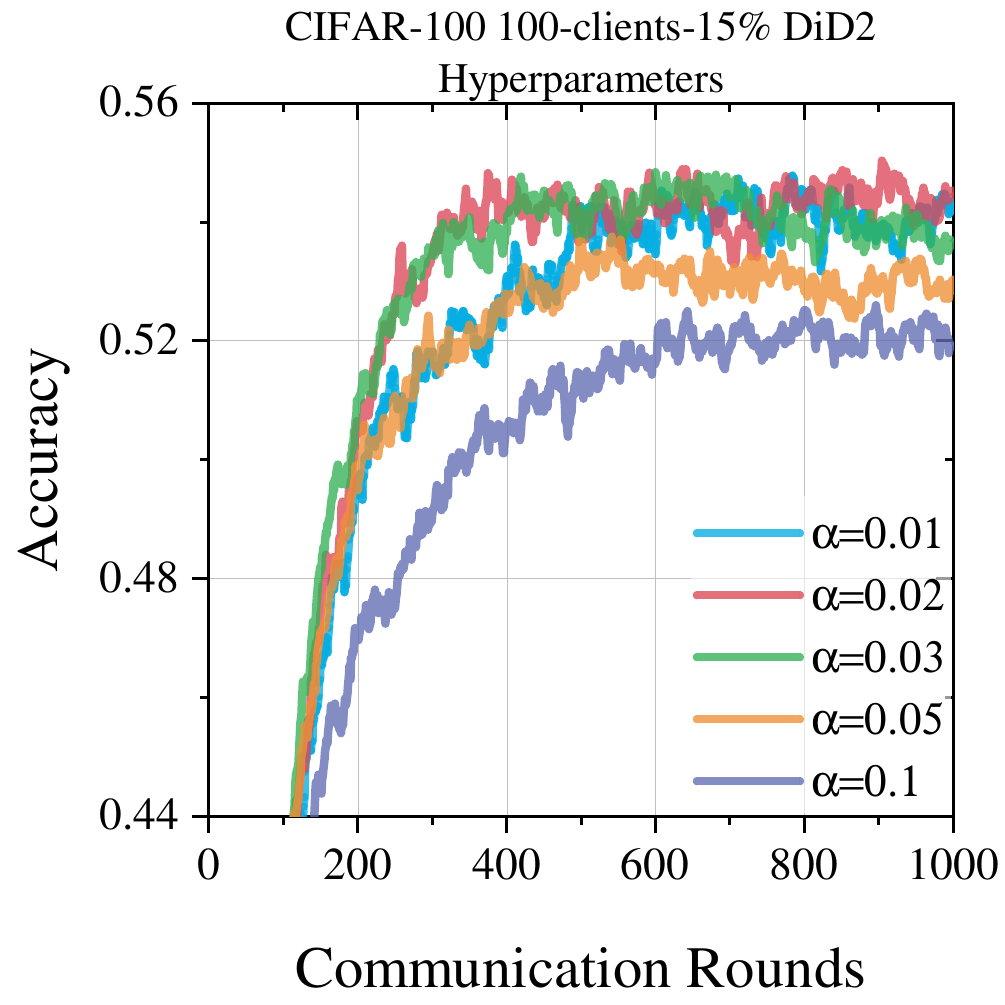}}


    \caption{Learning curves and loss curve of FedSSG's different hyper-parameters, with 100-clients-15\% settings on CIFAR-100 and on different distribution respectively.}
    \label{fig:C100P15H}
\end{figure*}

\begin{figure*}
    \centering
    \subfloat[]{\includegraphics[width=0.23\textwidth]{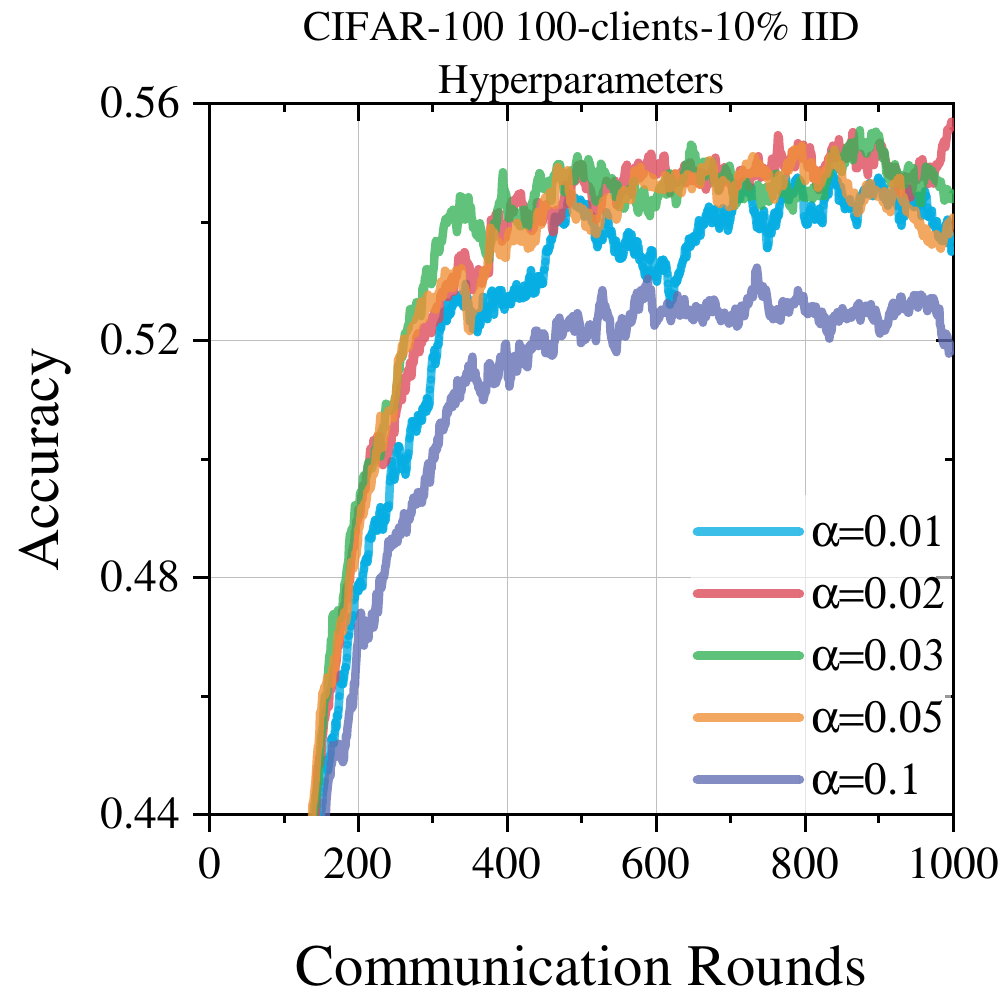}} \hfill
    \subfloat[]{\includegraphics[width=0.23\textwidth]{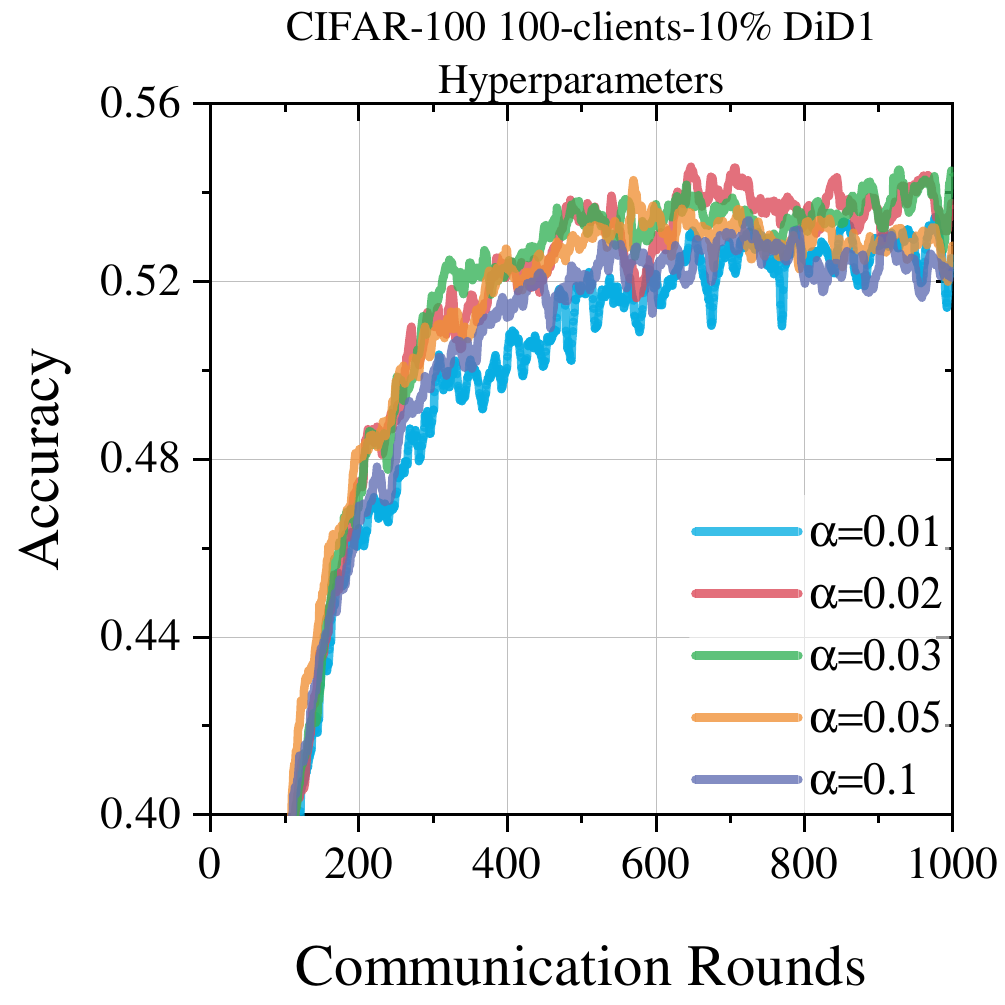}} \hfill
    \subfloat[]{\includegraphics[width=0.23\textwidth]{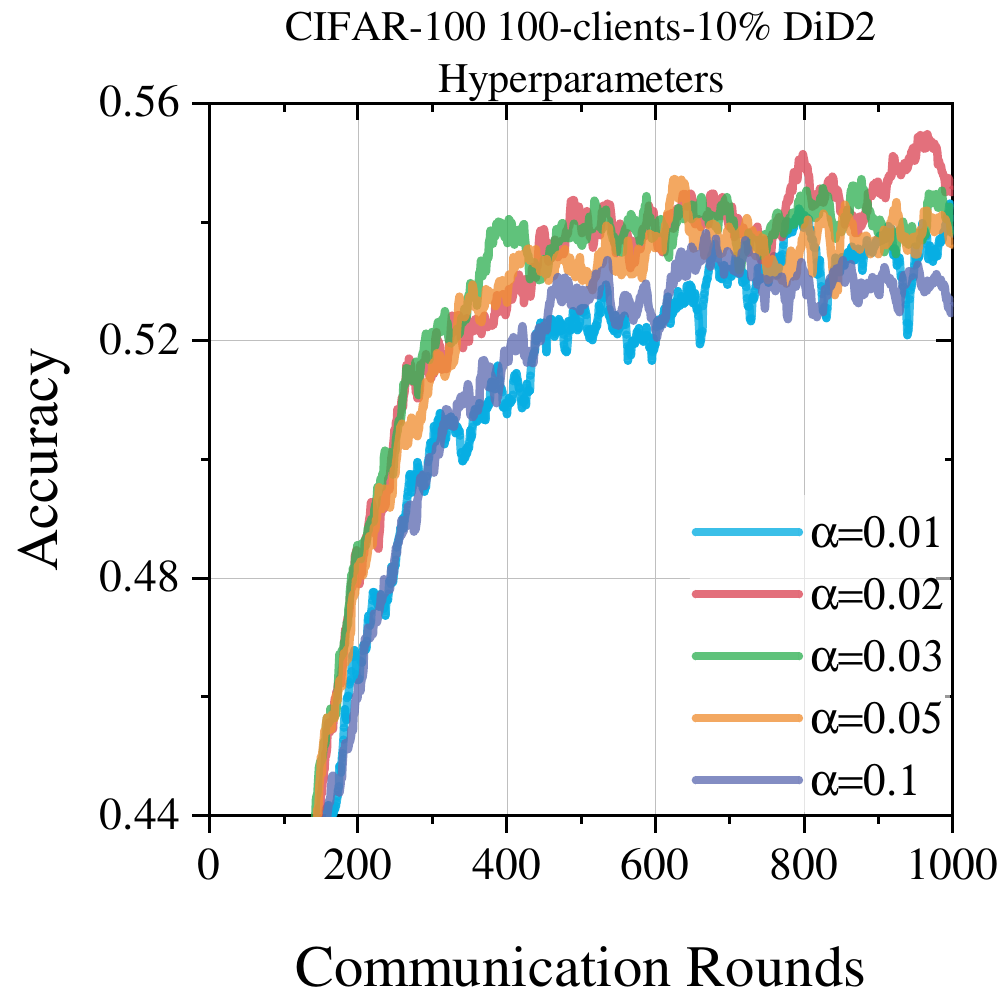}}


    \caption{Learning curves and loss curve of FedSSG's different hyper-parameters, with 100-clients-10\% settings on CIFAR-100 and on different distribution respectively.}
    \label{fig:C100P10H}
\end{figure*}

\begin{figure*}
    \centering
    \subfloat[]{\includegraphics[width=0.23\textwidth]{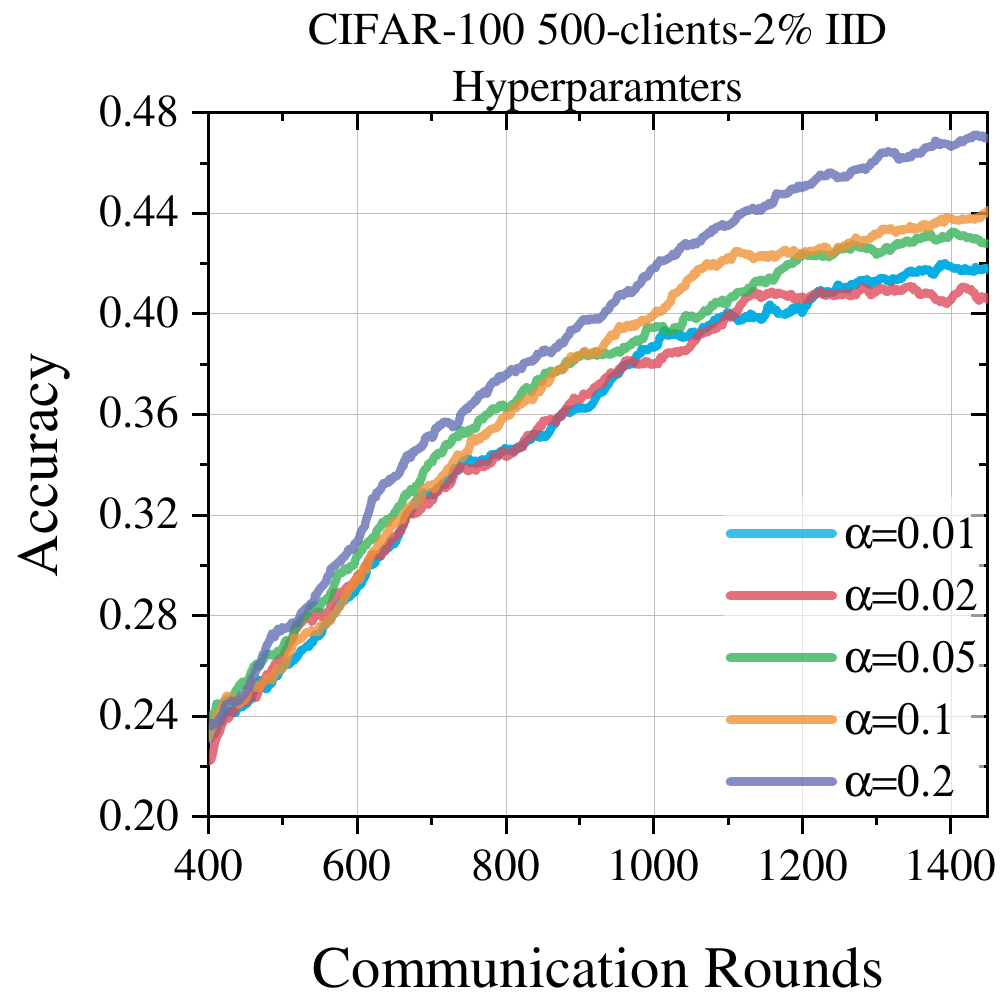}} \hfill
    \subfloat[]{\includegraphics[width=0.23\textwidth]{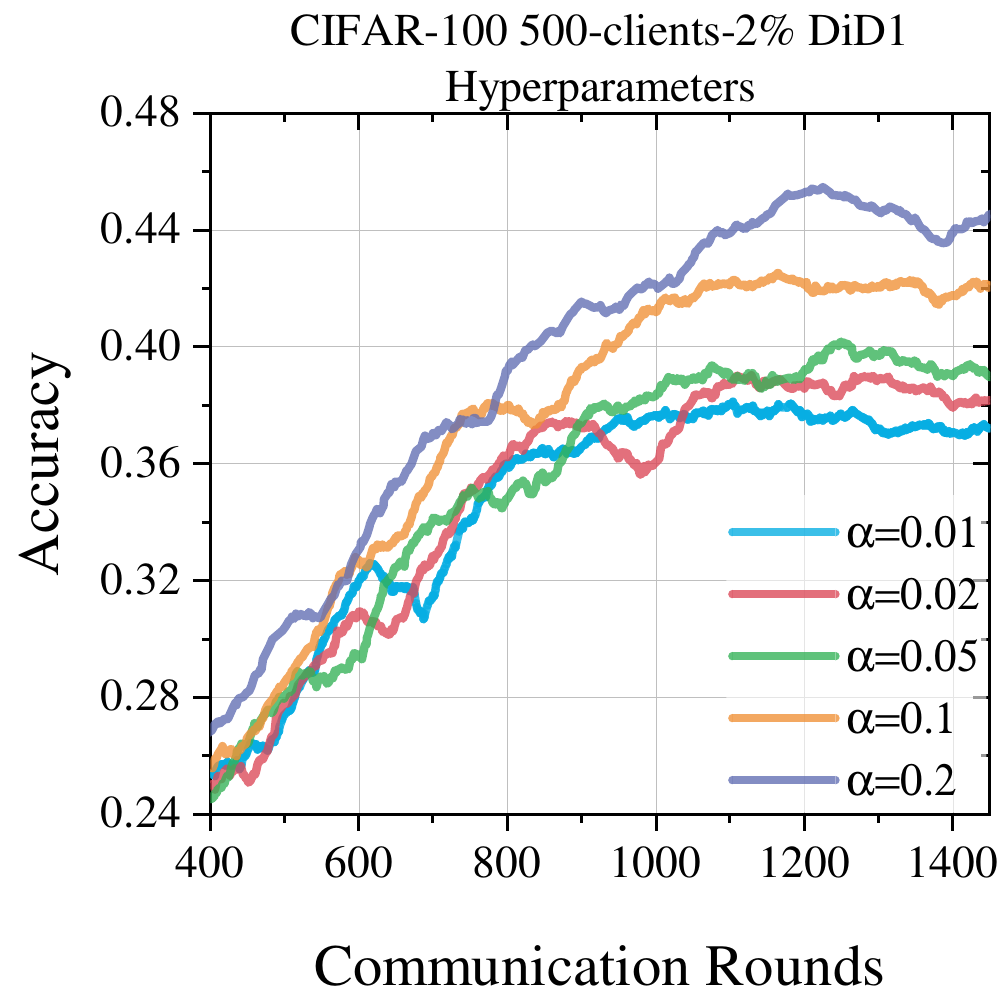}} \hfill
    \subfloat[]{\includegraphics[width=0.23\textwidth]{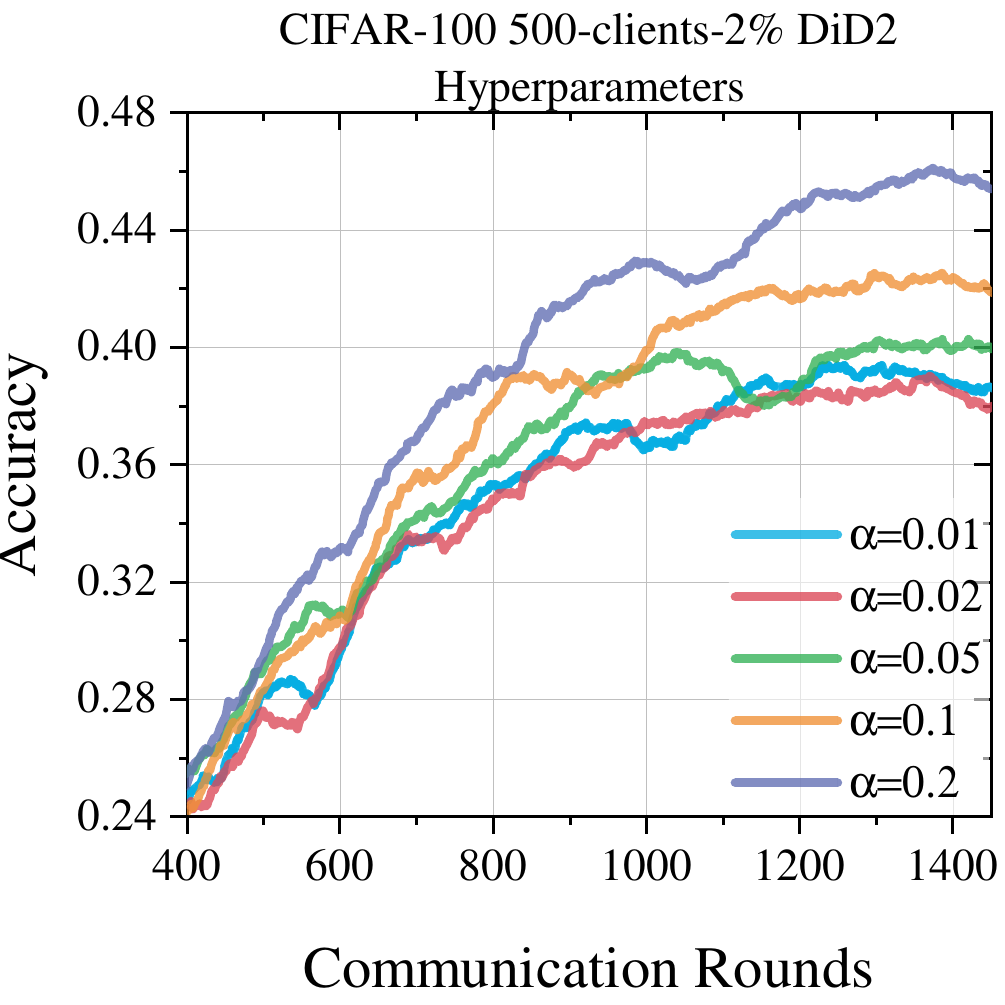}}


    \caption{Learning curves and loss curve of FedSSG's different hyper-parameters, with 500-clients-2\% settings on CIFAR-100 and on different distribution respectively.}
    \label{fig:C100P2H}
\end{figure*}

\clearpage
\section{Discussion and Proof}\label{appendix:b}

\begin{algorithm}[t]
	\caption{FedSSG}
	\label{alg:FedSSG}
	\textbf{Input}: Random initial global model parameter $\omega^0$.\\
	\textbf{Parameters}: Traning round $T$, the number of clients $K$, initial local drift variables as all zero matrix, the learning rate $\eta$, the number of local traing batches $N$, local counter $c_i=0$.\\
	\textbf{Output}: The trained global model $\omega^{T-1}$.\\
	\begin{algorithmic}[1]
		\STATE Initial all parameters.
		\WHILE{$t<T$}
		\STATE Sample the selected client set $\mathcal{S}^t\in\left\{1,2,...,N\right\}$
		\WHILE{each client $i\in{\mathcal{S}^t}$ in parallel}
		\STATE Set the local model parameter $\theta_i^t=\omega^{t-1}$
		\WHILE{$e<E$}
		\colorlet{shadecolor}{yellow!40}
		\STATE Update the local model parameters\\ $\theta_i^t=\theta_i^t-\eta\frac{\partial{f_i(\theta_i^t)}}{\partial{\theta_i^t}}$
		\ENDWHILE
		\STATE Update the local counter $c_i=c_i+1$
		\STATE Update the local gradient drift $\Delta{\theta_i^t}=\theta_i^t-\omega^{t-1}$
		\colorlet{shadecolor}{yellow!40}
		\STATE Update the local drift with correction $h_i=h_i+(\xi_i+\beta)\Delta{\theta^t_i}$, where $\xi_i=\frac{c_i}{T|\mathcal{S}^t|/N}$
		\ENDWHILE
		\STATE Update the global model 
		$\omega=\frac{1}{N}\sum_{i\in{\mathcal{S}^t}} ({\theta^t_i}+h_i)$
		\ENDWHILE
		\STATE \textbf{return} $\omega^{T-1}$
	\end{algorithmic}
\end{algorithm}

\subsection{Rethink of Federated ADMM Algorithm}

For the optimization problem with federated ADMM algorithm, corresponding AL function can be defined as following:

\begin{equation}
	\begin{aligned}
		{f(\omega^t)}=\sum_{i=1}^N{{f_i(\theta_i^t;\omega^{t-1},\lambda_i^{t})}},
		\label{eqn:al}
	\end{aligned}
\end{equation}
where $\theta_i^t$ is client $i$'s local model of communication round $t$ and $\theta_i^{t-1}=\omega^{t-1}$; $\lambda_i$ is the Lagrangian dual variable; and $f_i(\cdot)$ can be defined as Eqn.\ref{eqn:lal}:

\begin{equation}
	\begin{aligned}
		f_i(\theta_i^t;\omega^{t-1},\lambda_i^{t})=L_i(\theta_i^t)&+\langle{\lambda_i^{t},\theta_i^t-\omega^{t-1}}\rangle\\
		&+\frac{\gamma_i}{2}\rVert{\theta_i^t-\omega^{t-1}}\rVert^2,
		\label{eqn:lal}
	\end{aligned}
\end{equation}
where $\gamma_i>0$ is the corresponding penalty parameter. Through client-variance-reduction (CVR), the federated ADMM algorithm mitigate the impact of client drift caused by non-iid data \cite{fedvra}. \cite{feddyn,feddc} utilize these CVR scheme with other forms, the objective function can be summarized as Eqn.\ref{eqn:tldm1}:

\begin{equation}
	\begin{aligned}
		f_i(\theta_i^t;\omega^{t-1},h_i)=L_i(\theta_i^t)+R_i(\theta_i^t;\omega^{t-1},h_i),
		\label{eqn:tldm1}
	\end{aligned}
\end{equation}
where $L_i(\cdot)$ is the empirical loss term; $R_i(\cdot)$ is the AL-regular term; $h_i$ is local drift variables for correct client drift and can be defined as follows:

\begin{equation}
	\begin{aligned}
		h_i=h_i+\Delta{\theta_i^t},
		\label{eqn:tldm2}
	\end{aligned}
\end{equation}

\subsection{FedSSG Algorithm}

For each client with FedSSG, its local objective function of $t$-th communication round can be defined as Eqn.\ref{eqn:4}:

\begin{equation}
	\begin{aligned}
		f_i(\theta^t_i)=L_i(\theta_i^t)&+P_i(\theta^t_i;\omega^{t-1},h_i^{t-1})\\
		&+G_i(\theta^t;\Delta{\theta^{t-1}},\Delta{\omega^{t-1}}),
	\end{aligned}
	\label{eqn:4}
\end{equation}
where $\theta^t_i=\omega^{t-1}$; $h_i$ is the memo-corrected local drift variables; $L_i(\theta^t_i)$ is the empirical loss term; $P_i(\theta^t_i;\omega^{t-1},h_i^{t-1})$ is the penalized term and $G_i(\theta^t_i;\Delta{\theta^{t-1}_i},\Delta{\omega^{t-1}})$ is the gradient correction term. The penalized term uses stochastic sampling-guided local drift to track the parameter gap between the local and the global and break through the inherent performance with non-iid data. The gradient correction term would shorten the convergence time while ensuring convergence of FedSSG. For $e$-th local training iteration, local model updates satisfies as Eqn.\ref{eqn:5}:
\begin{equation}
	\theta_i^{t,e}=\theta_i^{t,e-1}-\eta\nabla_{\omega}{f_i({\theta_i^{t,e-1}})},
	\label{eqn:5}
\end{equation}
where $\eta$ is the learning rate of SGD. It is worth noting that the local drift is source from local model updates. The algorithm of FedSSG is shown in Supplementary B.

\subsubsection{Phase Strategy on Objective Function}

To utilize phase strategy to regularize the clients' model training via SSG local drift as Eqn.\ref{eqn:1}, the penalized term can be expressed as Eqn.\ref{eqn:6}:
\begin{equation}
	P_i(\theta_i^t)=\alpha{\langle \theta_i^t-(\omega^{t-1}-h_i^{t-1}), h_i^{t-1} \rangle},
	\label{eqn:6}
\end{equation}
where $\alpha$ is the hyper-parameters of FedSSG. In numerical terms, we can obtain Ineqn.\ref{eqn:7}:
\begin{equation}
	\begin{aligned}
		{\rVert{h_i^{t-1}}\rVert}^2&\le\rVert{P_i(\theta_i^t)}\rVert\\
		&\le{\rVert{\theta_i^t-(\omega^{t-1}-h_i^{t-1})}\rVert}^2,
	\end{aligned}
	\label{eqn:7}
\end{equation}
To realize the idea proposed above, in initial stage of communication, the penalized term satisfies as Eqn.\ref{eqn:8}:
\begin{equation}
	\begin{aligned}
		P_i(\theta_i^t)\approx\alpha{\langle \theta_i^t-(\omega^{t-1}-\mathit{o}(\omega^{t-1})),\mathit{o}(\omega^{t-1}) \rangle},
	\end{aligned}
	\label{eqn:8}
\end{equation}
When communication round $t$ is small enough, $P_i(\theta^t,\omega^{t-1},h_i^{t-1})\rightarrow0$. And in final stage of communication, the penalized term satisfies as Eqn.\ref{eqn:9}: 
\begin{equation}
	P_i(\theta_i^t)\approx{\rVert{h_i^{t-1}}\rVert}^2,
	\label{eqn:9}
\end{equation}
In this way, we realize the dynamic correction of objective function in different phases of the global training. This phase strategy ensure the consistency of FL's model.


To speed up the convergence time, the gradient correction term can be expressed as Eqn.\ref{eqn:10}:
\begin{equation}
	\begin{aligned}
		G_i(\theta_i^t)=\frac{1}{\eta{E}}{\langle \theta_i^t, \Delta{\theta_i^{t-1}}-\Delta{\omega^{t-1}} \rangle},
	\end{aligned}
	\label{eqn:10}
\end{equation}
where $E$ is the local training epochs of each communication. As Eqn.\ref{eqn:5}, the gradient correction term would correct local model training gradient as Eqn.\ref{eqn:11}:
\begin{equation}
	\nabla_{\omega}{f_i(\theta_i^{t,e})}=\nabla_{\omega}L_i(\theta_i^{t,e})+\frac{1}{\eta{E}}(\Delta{\theta_i^{t-1}}-\Delta{\omega^{t-1}}),
	\label{eqn:11}
\end{equation}
We have proved that corrected gradient is strictly bounded in Supplementary B.

\subsection{Discussion of FedSSG}

\begin{figure}[!t]
	\centering
        \setlength{\belowcaptionskip}{10pt}
	\includegraphics[width=0.46\textwidth]{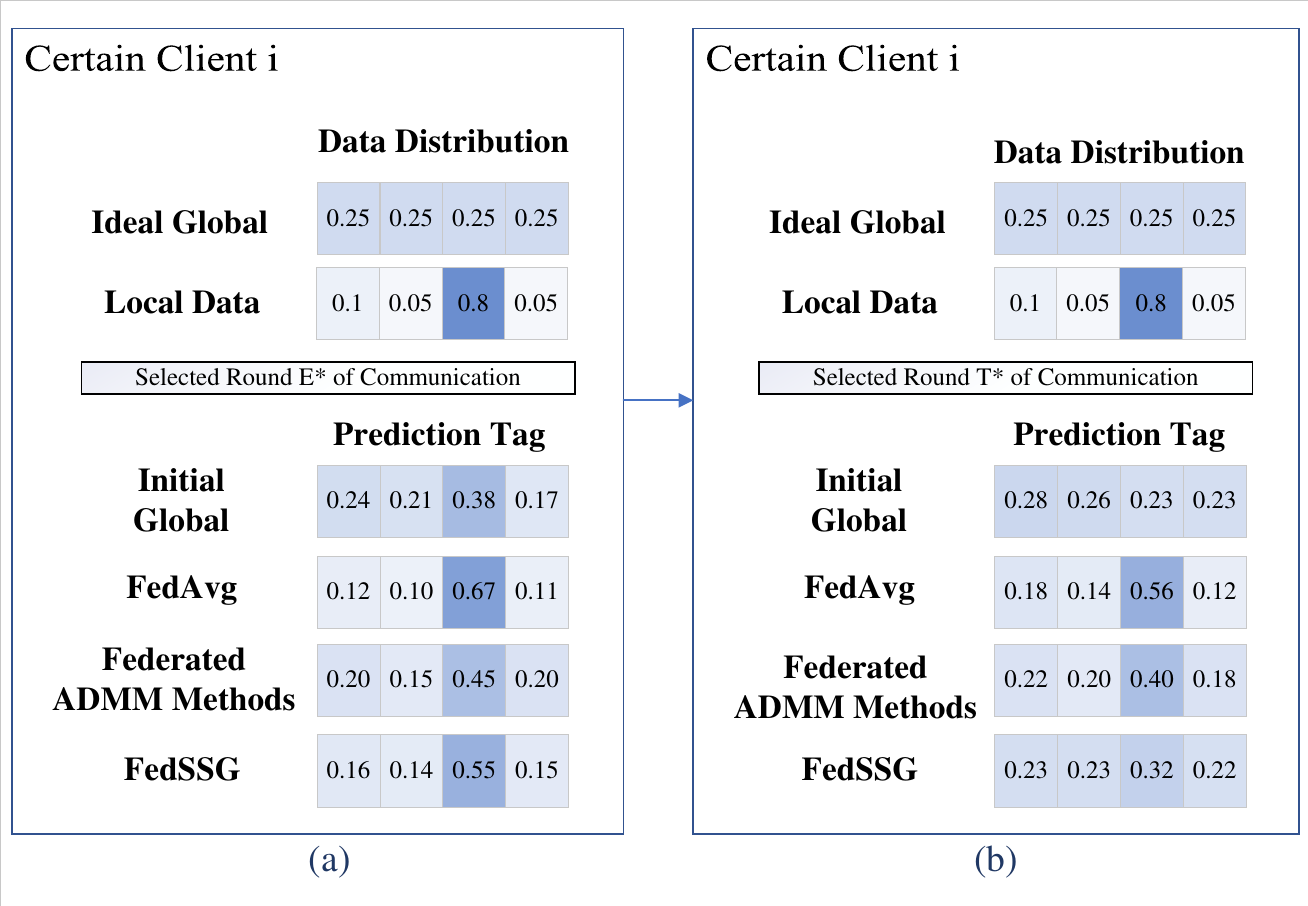}
	\caption{
		To show the phase strategy of FedSSG, (a) and (b) depict the prediction tag of different methods at selected round E*/T*, which means the early/terminal phase of its selected round. 
	}
	\label{fig:memo}
\end{figure}

In algorithm \ref{alg:FedSSG}, we present the algorithm flow of FedSSG, where $f$ is the objective function. In simple terms, for each communication round $t$, the server sample the selected client set $\mathcal{S}^t$, where $\mathcal{S}^t\in\left\{1,2,...,N\right\}$, and boardcasts the global model parameters to those clients selected. Then, for each selected client, it will update their local model parameter as the global and train the model by their own data. Through updating the local drift variables, it corrects the local updates and return it to the server. Finally, the server aggregates the corrected local updates and update the global model parameters. After $T$ times communication rounds, the server will obtain trained model ultimately. The scheme of FedSSG is shown as Figure \ref{fig:framwork}.

\subsubsection{Algorithm Flow and Convergence Process}

Before we begin the discussion, we have the following hypothesis, similar to other works in FL:\\

\textit{H.1: Because of non-iid data of various clients in the training, we suppose the local optimal points $\theta_i^*$ of client $i$ are arbitrarily different from $\theta_j^*$ of client $j$, where $i,j\in[N]$.} \\

\textit{H.2: To facilitate the writing of symbols, we assume that all clients participate in the training in each communication round. The $N$-clients means that all clients have taken part in each communication round. In practical, the number of participants does not affect the validity of the proof.}\\

\textit{H.3: The empirical loss term $L$ and $L_i$ satisfies the smoothness assumption.}\\

Under these hypothesis, we propose FedSSG. Compared with previous algorithms, in order to better mitigate the impact of statistical heterogeneity, we propose the correction term of local drift, and the objective equation after correcting local drift. In the text, we introduce the origin of the idea of correcting items. In this section, we mainly discuss the objective function, and prepare for the convergence proof in the next section. FedSSG has the following objective function as \ref{eqnofapp:1}:
\begin{equation}
	\begin{aligned}
		f_i(\theta^t_i)&=L_i(\theta^t_i)+P_i(\theta^t_i;\omega^{t-1},h_i^{t-1})\\
		&\ \ \ \ \ \ \ \ \ \ \ \ \ \ \ \ \ \ +G_i(\theta^t;\Delta{\theta^{t-1}}_i,\Delta{\omega^{t-1}})\\
		&=\mathbb{E}_{(x,y)\in{D_i}}l(\theta^t_i;(x,y))\\
		&\ \ \ \ \ \ \ \ \ \ \ \ \ \ \ \ \ \ +\alpha{\langle \theta_i^t-(\omega^{t-1}-h_i^{t-1)}), h_i^{t-1} \rangle}\\
		&\ \ \ \ \ \ \ \ \ \ \ \ \ \ \ \ \ \ +\frac{1}{\eta{E}}{\langle \theta_i^t, \Delta{\theta^{t-1}}-\Delta{\omega^{t-1}}, \rangle}
		\label{eqnofapp:1}
	\end{aligned}
\end{equation}
where $\Delta{\theta^{t-1}_i=g_i^{t-1}}$ and $\Delta{\omega^{t-1}}=\mathbb{E}_{i\in\mathcal{S}^t}g_i^{t-1}$, due to $g_i=\Delta{\theta_i}$ and $g=\Delta{\omega}$. Specifically, the objective function of FedSSG is consist of empirical loss term, penalized term and gradient correction term. Each client utilize this objective function to train local model parameters in $t$-th communication round. For each local training iteration $e$-th, the client optimizes local model parameters with local gradient descent as \ref{eqnofapp:2}:
\begin{equation}
	\theta_i^{t,e}=\theta_i^{t,e-1}-\eta\nabla_{\omega}{f_i({\theta_i^{t,e-1}})},
	\label{eqnofapp:2}
\end{equation}
Meanwhile, the local drift update value as \ref{eqnofapp:3}:
\begin{equation}
	\Delta{h_i^t}=-\xi_i^t\eta\nabla_{\omega}{f_i({\theta^{t,e}_i};h_i^{t,e})},
	\label{eqnofapp:3}
\end{equation}
where $\xi_i^t$ is the correction term for local drift variables. And then, clients upload their corrected local model parameters to the server and the server aggregate local model parameters to obtain the global model parameters. The global model updates can be written as \ref{eqnofapp:4}:
\begin{equation}
	\omega^t=\mathbb{E}_{i\in\mathcal{S}^t}(\theta^{t-1}_i+h_i^{t-1}+\Delta{\theta^t_i}+\Delta{h_i^t}),
	\label{eqnofapp:4}
\end{equation}

While global model has been updated like \ref{eqnofapp:4}, one communication round has been completed. After $T$ times communication rounds, the global model will converges to the stationary point when each client approach their own local optimal points. Specifically, $\Delta{\theta_i^t}\rightarrow0$, where $i\in[N]$, means the local model converges to the local optimal point. Ideally, the local model will no longer be updated as \ref{eqnofapp:5}:
\begin{equation}
	\theta^{*,t}_i=\theta^{*,t-1}_i+\Delta{\theta^{*,t}_i}=\theta^{*,t-1}_i,
	\label{eqnofapp:5}
\end{equation}
When all clients approach the local optimal point, namely $\sum_{i=1}^N\Delta{\theta^t}\rightarrow0$, the global model will converge to the stationary point in ideal as \ref{eqnofapp:6}:
\begin{equation}
	\begin{aligned}
		\omega^{*,t}&=\omega^{*,t-1}+\Delta{\omega^{*,t}}\\
		&=\omega^{*,t-1}+\sum_{i=1}^N\Delta{\theta^{*,t}_i}=\omega^{*,t-1},	
		\label{eqnofapp:6}
	\end{aligned}
\end{equation}
At this point, FedSSG will converge. We will discuss the convergence of FedSSG in more detail in the next section and prove it rigorously. 

\begin{figure}[t]
	\centering
        \setlength{\belowcaptionskip}{10pt}
	\includegraphics[width=0.46\textwidth]{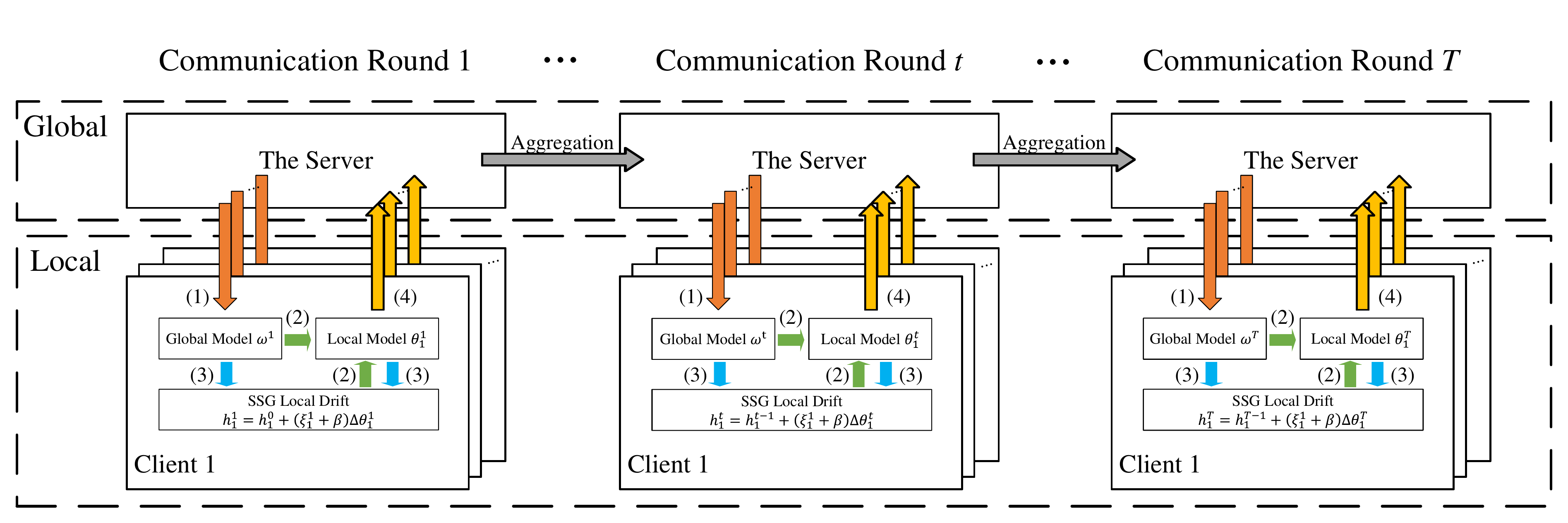}
	\caption{
	The architecture diagram of FedSSG. FedSSG records the model update history during the update and uses the phase-aware strategy based on SSG to enhance the local discrimination ability for heterogeneity.
}
\label{fig:framwork}
\end{figure}

After discussing the algorithmic flow and convergence process of FedSSG, we will discuss the penalty term and the gradient correction term of the objective function in the following pages.

\subsubsection{Penalized Term}

In real-world situation, it is impractical to assume that all clients can train local model to converge to a stationary point with non-iid data. Therefore, we design the penalized term to use corrected local drift variables to track the parameter gap between the local and the global. FedDC\cite{feddc} denotes that parameter deviation from clients' models to the server's model is caused by two factors: one is current round's update drift, and one is residual parameter deviation. FedProx\cite{fedprox} and SCAFFOLD\cite{scaffold} have proved that it is effective to decrease the update drift for reduce convergence time. To ensure the update drift decrement, which statistically equal to local drift, not only can speed up the convergence time but also help it track the parameter gap better, we design the correction term $\xi_i$ and use penalized term $\alpha{\langle \theta-(\omega^{t-1}-h_i^{t-1)}), h_i^{t-1} \rangle}$ to realize mitigate the parameter deviation. In the text, we clearly illustrate the correction term. The penalized term follows the train of correction thought, by using the difference between local drift variables and local gradient drift in the early and late training period to track the training process, adaptive global model decoupling and parameter difference tracking are realized.

\subsubsection{Gradient Correction Term}

From \ref{eqnofapp:1}, we split the objective function as three parts: the empirical loss term, penalized term and gradient correction term. In the process of gradient descend, as\ref{eqnofapp:2}, gradient correction term has corrected local model training gradient. Specifically, with empirical loss term $L_i(\theta^t)$ and $L(\omega)$, the corrected gradient of client $i$ satisfies \ref{eqnofapp:7}:
\begin{equation}
\begin{aligned}
	g_i^{t,e}&=\nabla_\omega{f_i(\theta^t_i)}\\
	&=\nabla_{\omega}L_i(\theta^{t,e}_i)+\frac{1}{\eta{E}}(\Delta{\theta^{t-1}_i}-\Delta{\omega^{t-1}})\\
	&\approx\nabla_{\omega}L_i(\theta^{t,e}_i)+(\nabla_{\omega}L(\theta^{t-1,e}_i)-\nabla_{\omega}L_i(\theta^{t-1,e}_i)),
	\label{eqnofapp:7}
\end{aligned}
\end{equation}
Therefore, the global gradient of $e$-th epoch can be expressed as \ref{eqnofapp:8}:
\begin{equation}
g^{t,e}=\frac{1}{N}\sum_{i=1}^Ng_i^{t,e}\approx\nabla{f(\theta^t_i)},
\label{eqnofapp:8}
\end{equation}
Thus, according to \ref{eqnofapp:7} and \ref{eqnofapp:8}, the gradient variance satisfies \ref{eqnofapp:9}:
\begin{equation}
\begin{aligned}
	\frac{1}{N}\sum_{i=1}^{N}&{\rVert{g_i^{t,e}-g^{t,e}}\rVert}^2\\
	&\approx\frac{1}{N}\sum_{i=1}^N\rVert\nabla_{\omega}L_i(\theta^{t,e}_i)+(\nabla_{\omega}L(\theta^{t-1,e}_i)\\
	&\ \ \ \ \ \ \ \ \ \ \ \ \ \ \ \ \ -\nabla_{\omega}L_i(\theta^{t-1,e}_i))-\nabla_{\omega}L(\theta^{t,e}_i)\rVert^2\\
	&\le\frac{2}{N}\sum_{i=1}^N({\rVert\nabla_{\omega}L_i(\theta^{t,e}_i)-\nabla_{\omega}L(\theta^{t-1,e}_i)\rVert}^2\\
	&\ \ \ \ \ \ \ \ \ \ \ \ \ \ \ \ \ +{\rVert\nabla_{\omega}L_i(\theta^{t-1,e}_i-\nabla_{\omega}L(\theta^{t,e}_i)\rVert}^2),
	\label{eqnofapp:9}
\end{aligned}
\end{equation}
So far, we have shown that the corrected gradient variance is strictly bounded. With H.3, we have proved that it is effective for gradient correction term to reduce gradient drift. As mentioned earlier, such a move would shorten the convergence time.

\subsection{Convergence proof of FedSSG}
\begin{assumption} \label{a1}
($L - Smoothness$). For a convex function f, which satisfies: $||\nabla f(w_1) - \nabla f(w_2)|| \leq L ||w_1 - w_2||$, for all $w_1,w_2 \in \mathbb{R}^n$. 
\end{assumption}

\begin{assumption} \label{a2}
($\beta$ - convex).  For a $\beta - $ convex function $f$ and $\beta > 0$ which satisfies: $f(w_2) \geq f(w_1) + <\nabla f(w_1), w_2 - w_1 > + \frac{\beta}{2} ||w_1 - w_2||^2 $, for all $w_1,w_2 \in \mathbb{R}^n.$
\end{assumption}

\begin{definition} \label{d1}
($B$ - local dissimilarity bounded). The local objective $f_i$ are $B$-local dissimilar at $\theta$ where $\mathbb{E}_i[||\nabla f_i(\theta)||^2] \leq ||\nabla f(w)||^2 B^2.$ Besides, we define $B(w) = \sqrt{\frac{\mathbb{E}||\nabla f_i(\theta)||^2}{||\nabla f(w)||^2}}$. 
\end{definition}

\begin{assumption} \label{a3}
($\psi$ - inexact solution). Define a function $M_i(\theta_i,\hat{\theta}_i) = f_i(\theta_i) + \frac{\alpha}{2} ||\theta_i - \hat{\theta}_i||^2$, where $\alpha \in [0,1]$ and $\hat{\theta}_i$ = $w - h_i$, if $\theta_i^*$ is a $\xi$-inexact solution of $\min_{\theta}M_i(\theta_i,\hat{\theta}_i)$, which satisfies:
$||\nabla M_i(\theta_i^*;\hat{\theta}_i)|| \leq \psi||M_i(\hat{\theta}_i;\hat{\theta}_i)||,$ and $||\nabla M_i(\theta_i;\hat{\theta}_i)|| = \nabla f_i(\theta_i) + \alpha(\theta_i - \hat{\theta}_i).$
\end{assumption}

\begin{assumption} \label{a4}
(Bounded dissimilarity). For some $\epsilon > $ 0, there exists a $B_{\epsilon}$, for all $w$ satisfies $||\nabla f(w)||^2 > \epsilon$ and $B(w) \leq B_\epsilon.$ 
\end{assumption}

\subsection{Detailed convergence proof of FedSSG} \label{sec:b.3}

\begin{theorem} \label{t1}
Given a non-convex and $L$-Lipschitz smooth local objective function $f_i$, and $f_i$ is $B$-dissimilarity, there exists $L\_ > 0$, such that $\nabla^2 f_i \geq -L\_I$ and $\overline{\alpha} = \alpha-L\_ > 0$. The global objective function $f$ satisfies: 
\begin{equation}  \label{eq18}
	\mathbb{E}_{S^t}f(\omega^{t+1})\leq f(\omega^{t})-\frac{p||\nabla f(\omega^r)||^2}{2},
\end{equation}
where $p= (1-\psi B) (\frac{2}{\alpha} ) - \frac{B(1+\psi)\sqrt{2}}{\sqrt{K}} (\frac{2}{\overline{\alpha}} ) - L(1+B)\psi (\frac{2}{\alpha \overline{\alpha}} ) - L B^2 (1+\psi)^2 (\frac{1}{2} + \frac{2\sqrt{2K} + 2}{K}) (\frac{2}{\overline{\alpha}^2}) > 0$, and $S^r$ is selected clients in $t$-th round.
\end{theorem}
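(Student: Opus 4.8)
The plan is to run the perturbed proximal-point argument familiar from the FedProx analysis, specialized to the FedSSG surrogate $M_i(\theta_i,\hat\theta_i)=f_i(\theta_i)+\tfrac{\alpha}{2}\|\theta_i-\hat\theta_i\|^2$ with center $\hat\theta_i=\omega^{t}-h_i$, and then to read off the coefficient of $\|\nabla f(\omega^t)\|^2$ at the very end so that it matches the displayed $p$. First I would record that $M_i(\cdot,\hat\theta_i)$ is $\overline{\alpha}$-strongly convex: from $\nabla^2 f_i\succeq -L\_I$ and the penalty's contribution $\alpha I$ one gets $\nabla^2 M_i\succeq(\alpha-L\_)I=\overline{\alpha}I\succ 0$, hence $M_i$ has a unique exact minimizer $\bar\theta_i$ with $\nabla f_i(\bar\theta_i)+\alpha(\bar\theta_i-\hat\theta_i)=0$, i.e.\ $\bar\theta_i-\hat\theta_i=-\tfrac{1}{\alpha}\nabla f_i(\bar\theta_i)$. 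Combining this with the $\psi$-inexactness of Assumption~\ref{a3} and $\overline{\alpha}$-strong convexity gives the control $\|\theta_i^{t+1}-\bar\theta_i\|\le\tfrac{\psi}{\overline{\alpha}}\,\|\nabla f_i(\omega^t)\|$, once the $\hat\theta_i$ shift is absorbed via $L$-smoothness.

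Next I would introduce the full-participation proxy $\bar\omega^{t+1}=\tfrac1N\sum_{i=1}^{N}\theta_i^{t+1}$, using that under the server's uniform sampler $\mathbb{E}_{S^t}[\omega^{t+1}]=\bar\omega^{t+1}$. Writing $\bar\omega^{t+1}-\omega^t$ as the exact proximal step $-\tfrac{1}{\alpha N}\sum_i\nabla f_i(\bar\theta_i)$ plus the inexactness residual bounded in the previous step, I apply $L$-smoothness of $f$ at $\omega^t$:
\begin{equation*}
f(\bar\omega^{t+1})\;\le\; f(\omega^t)+\big\langle\nabla f(\omega^t),\,\bar\omega^{t+1}-\omega^t\big\rangle+\tfrac{L}{2}\,\|\bar\omega^{t+1}-\omega^t\|^2 .
\end{equation*}
The inner product is lower-bounded with leading term $-\tfrac{1}{\alpha}\|\nabla f(\omega^t)\|^2$; the gradient discrepancies $\nabla f_i(\bar\theta_i)$ vs.\ $\nabla f_i(\omega^t)$ and the inexactness are dominated using $L$-smoothness together with the $B$-dissimilarity bound $\mathbb{E}_i\|\nabla f_i(\theta)\|^2\le B^2\|\nabla f(\omega)\|^2$ of Definition~\ref{d1}, which is the source of the $(1-\psi B)\tfrac{2}{\alpha}$ and the $L(1+B)\psi\,\tfrac{2}{\alpha\overline{\alpha}}$ contributions. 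The quadratic term is bounded by $\tfrac{L}{2}\cdot\tfrac{B^2(1+\psi)^2}{\overline{\alpha}^2}\,\|\nabla f(\omega^t)\|^2$, producing the $LB^2(1+\psi)^2\cdot\tfrac12\cdot\tfrac{2}{\overline{\alpha}^2}$ piece.

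To pass from the proxy to the sampled iterate $\omega^{t+1}$ I would invoke $L$-smoothness once more, now at $\bar\omega^{t+1}$, take $\mathbb{E}_{S^t}$ so the linear term vanishes by unbiasedness, and bound $\mathbb{E}_{S^t}\|\omega^{t+1}-\bar\omega^{t+1}\|^2$ by the variance of a size-$K$ subset average of the per-client steps; the dissimilarity bound turns this into the $\tfrac{1}{\sqrt K}$ and $\tfrac{2\sqrt{2K}+2}{K}$ factors multiplying $\tfrac{B^2(1+\psi)^2}{\overline{\alpha}^2}\,\|\nabla f(\omega^t)\|^2$, plus a cross term giving $\tfrac{B(1+\psi)\sqrt2}{\sqrt K}\cdot\tfrac{2}{\overline{\alpha}}$. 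Adding the two smoothness inequalities and grouping the coefficient of $\|\nabla f(\omega^t)\|^2$ then reproduces exactly $-\tfrac{p}{2}\,\|\nabla f(\omega^t)\|^2$ with $p$ as stated; the hypothesis that $\alpha$ stays in the conservative band is precisely the requirement $p>0$, and the per-round descent telescopes to the $\mathcal{O}(1/T)$ guarantee.

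The hard part will be the sampling step: a crude bound on $\mathbb{E}_{S^t}\|\omega^{t+1}-\bar\omega^{t+1}\|^2$ loses a factor of $K$, so one must exploit the unbiasedness $\mathbb{E}_{S^t}[\omega^{t+1}]=\bar\omega^{t+1}$ jointly with the variance formula for sampling without replacement to recover the correct $\sqrt K$-scaled terms. A secondary, more tedious ingredient is tracking the extra center shift $h_i$ in $\hat\theta_i$ (and, if it is retained rather than folded into $h_i$, the gradient-correction term $G_i$, whose aggregate residual was already shown to be strictly bounded in \eqref{eqnofapp:9}): one has to check that these stay absorbed into the $B$-dissimilarity constants instead of surfacing as new terms in $p$.
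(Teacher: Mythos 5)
Your plan follows essentially the same route as the paper's proof: the FedProx-style inexact proximal-point argument on $M_i(\theta_i;\hat\theta_i)$ with $\overline{\alpha}$-strong convexity, the full-participation proxy $\overline{\omega}^{t+1}=\mathbb{E}_i[\theta_i^{t+1}]$, $L$-smoothness at $\omega^t$, the $B$-dissimilarity bound to convert $\mathbb{E}_i\|\nabla f_i\|$ into $\|\nabla f\|$, and a $\tfrac{1}{K}$-scaled variance bound on $\|\omega^{t+1}-\overline{\omega}^{t+1}\|$ for the sampling step. The one divergence is in passing from $\overline{\omega}^{t+1}$ to $\omega^{t+1}$: the paper does not use unbiasedness to kill the linear term but instead applies a local Lipschitz bound $f(\omega^{t+1})\le f(\overline{\omega}^{t+1})+L_0\|\omega^{t+1}-\overline{\omega}^{t+1}\|$ with $L_0\lesssim\|\nabla f(\omega^t)\|+L(\cdot)$, and it is precisely this first-order product that generates the $\frac{B(1+\psi)\sqrt{2}}{\sqrt{K}}\cdot\frac{2}{\overline{\alpha}}$ term in $p$ --- the "cross term" you gesture at would not survive if the linear term genuinely vanished in expectation, so to reproduce the stated $p$ you should adopt the paper's first-order bound there.
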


\begin{proof}

Define $e_i^{r+1}$, which satisfies:

\begin{equation} \label{eq20}
	\begin{split}
		& \nabla f_i(\theta_i^{t+1}) + \alpha_i (\theta_i^{t+1} - w^t) - e_i^{t+1} = 0 \\
		&\ \ \ \ \ \ \ \ \ \ \ \ \ \ ||e_i^{t+1}|| \leq \psi ||\nabla f_i(w^t)||,
	\end{split}
\end{equation}

Let us define $\overline{w}^{t+1} = \mathbb E_{i\in\mathcal{S}} [\theta_i^{t+1}] $ and have:
\begin{equation} \label{eq21}
	\overline{w}^{t+1} - w^t = \frac{-1}{\alpha_i} \mathbb E_{i\in\mathcal{S}} [\nabla f_i(w^t)] + \frac{1}{\alpha_i} \mathbb E_{i\in\mathcal{S}} [e_i^{t+1}].
\end{equation}
Let us define $\overline{\alpha}_i = \alpha_i - L\_ > 0$ and $\Tilde{\theta}_i^{t+1}$ = $\operatorname*{argmin}_{\theta_i} {M_i(\theta_i;w^t)}$, due to the $\overline{\alpha}_i$-strong convexity of $M_i(\theta_i;w^t)$, we have:
\begin{equation} \label{eq22}
	||\Tilde{\theta}_i^{t+1} - \theta_i^{t+1}|| \leq \frac{\psi}{\overline{\alpha}_i} ||\nabla f_i(w^t)||.
\end{equation}
Use $\overline{\alpha}_i$-strong convexity of $M_i(\theta_i;w^t)$ again and have:
\begin{equation} \label{eq23}
	||\Tilde{\theta}_i^{t+1} - w^t|| \leq \frac{1}{\overline{\alpha}_i}||\nabla f_i(w^t)||.
\end{equation}
Using triangle inequality of Eq.~\ref{eq22} and Eq.~\ref{eq23}, we have:
\begin{equation} \label{eq24}
	||\Tilde{\theta}_i^{t+1} - w^t|| \leq \frac{1+\psi}{\overline{\alpha}_i} ||\nabla f_i(w^t)||.
\end{equation}

Therefore, we have:
\begin{equation} \label{eq25}
	\begin{split}
		&||\overline{w}^{t+1} - w^t|| \leq \mathbb E_{i\in\mathcal{S}} [||\theta_i^{t+1} - w^t||] \\
		&\ \ \ \ \ \ \ \ \ \ \ \ \ \ \ \ \ \ \ \ \ \ \  \leq \frac{1+\psi}{\overline{\alpha}_i} \mathbb E_{i\in\mathcal{S}} [||\nabla f_i(w^t)||] \\
		&\ \ \ \ \ \ \ \ \ \ \ \ \ \ \ \ \ \ \ \ \ \ \  \leq \frac{1+\psi}{\overline{\alpha}_i} \sqrt{\mathbb E_{i\in\mathcal{S}} [||\nabla f_i(w^t)||^2]} \\
		&\ \ \ \ \ \ \ \ \ \ \ \ \ \ \ \ \ \ \ \ \ \ \  \leq \frac{B(1+\psi)}{\overline{\alpha}_i} ||\nabla f(w^t)||,
	\end{split}
\end{equation}
where the first and the third inequalities are due to the nature of expectations, the last inequality is due to the bounded dissimilarity assumption.

Let us define $M_{t+1}$ such that $\overline{w}^{t+1} - w^t = -\frac{1}{\alpha_i} (\nabla f(w^t) + M_{t+1})$, i.e. $M_{t+1} = \mathbb E_i [\nabla f_i(\theta_i^{t+1}) - \nabla f_i(w^t) - e_i^{t+1}]$. Now let us bound $M_{t+1}$:
\begin{equation} \label{eq26}
	\begin{split}
		&||M_{t+1}|| \leq \mathbb E_{i\in\mathcal{S}} [L ||\theta_i^{t+1} - \theta_i^t|| + ||e_i^{t+1}||] \\
		&\ \ \ \ \ \ \ \ \ \ \ \ \ \  \leq ( \frac{L(1+\psi)}{\overline{\alpha}_i} + \psi) \times \mathbb E_{i\in\mathcal{S}} [||\nabla f_i(w^t)||] \\
		&\ \ \ \ \ \ \ \ \ \ \ \ \ \  \leq (\frac{L(1+\psi)}{\overline{\alpha}_i} + \psi) B ||\nabla f(w^t)||,
	\end{split}
\end{equation}
where the last inequality follows from bounded dissimilarity assumption.

Notably, due to $\Tilde{\theta}_i = w - \xi_i h_i$, define $\xi = \frac{1}{N} \sum_{i}^{N} \xi_i$ we have $\mathbb E_{i\in\mathcal{S}}(\theta_i^{t+1} - \theta_i^{t} ) =  \xi \mathbb E_{i\in\mathcal{S}}(h_i^{t+1} - h_i^{t})$, therefore, we have:
\begin{equation} \label{eq35}
	\begin{split}
		& w^{t+1} - w^{t} = \mathbb E_{i\in\mathcal{S}}(\theta_i^{t+1} - \xi_i h_i^{t+1}) - \mathbb E_{i\in\mathcal{S}}(\theta_i^{t} - \xi_i h_i^{t}) \\
		&\ \ \ \ \ \ \ \ \ \ \ \ \ \ \ \ \ \  = (1 + \xi) \mathbb E_{i\in\mathcal{S}}(\theta_i^{t+1} - \theta_i^{t}),
	\end{split}
\end{equation}

Based on $L$-lipschitz smoothness of $f$ and Taylor expansion, we have:
\begin{equation} \label{eq27}
	\begin{aligned}
		& f(\overline{w}^{t+1}) \leq f(w^t) + <\nabla f(w^t), (\overline{w}^{t+1}-w^t)>\\
		&\ \ \ \ \ \ \ \ \ \ \ \  \ \ \ \ \ \ \ \ \ \ \ \  \ \ \ \ \ \ \ \ \ \ \ \  \ \ \ \ \ \ \ \ \ \ \ \   + \frac{L}{2} ||\overline{w}^{t+1} - w^t||^2 \\
		&\ \ \ \ \ \ \ \ \ \ \ \  \leq f(w^t) - \frac{1+\xi}{\alpha_i}||\nabla f(w^t)||^2\\
		&\ \ \ \ \ \ \ \ \ \ \ \  \ \ \ \ \ \ \ \ \ \ \ \  \ \ \ \ \ \ \ \ \ \ \ \  \   - \frac{1+\xi}{\alpha_i}<\nabla f_i(w^t), M_{t+1}> \\
		&\ \ \ \ \ \ \ \ \ \ \ \ \ \ \ \ \ \ \ \ \ \ \ \ \  + \frac{L(1+\psi)^2 B^2 (1+\xi)^2}{ 2 \overline{\alpha}_i^2}||\nabla f(w^t)||^2 \\
		&\ \ \ \ \ \ \ \ \ \ \ \ \ \  \leq f(w^t) - (1+\xi)(\frac{1-\psi B}{\alpha_i} - \frac{LB(1+\psi)}{\overline{\alpha}_i\alpha_i}\\
		&\ \ \ \ \ \ \ \ \ \ \ \ \ \ \ \ \ \ \ \ \ \  - \frac{L(1+\psi)^2 B^2 (1+\xi )}{2 \overline{\alpha}_i^2}) 
		\times ||\nabla f(w^t)||^2,
	\end{aligned}
\end{equation}
So we can get the conclusion that $f(\overline{w}^{t+1} - f(w^t))$ is proportional to $||\nabla f(w^t)||^2$. Besides, for the partial clients participating the training each round, we need to find $\mathbb E[f(w^{t+1})]$.

Using the local Lipschitz continuity of the function $f$, we have:
\begin{equation} \label{eq28}
	f(w^{t+1}) \leq f(\overline{w}^{t+1}) + L_0 ||w^{t+1} - \overline{w}^{t+1}||,
\end{equation}
where $L_0$ is the local Lipschitz continuity constant for function $f$ and we have:
\begin{equation} \label{eq29}
	\begin{split}
		& L_0 \leq ||\nabla f(w^t)|| + L \times \max(||\overline{w}^{t+1} - w^t||, ||w^{t+1} - w^t||) \\
		&\ \ \ \ \ \ \ \ \ \ \ \ \ \ \ \ \ \  ||\nabla f(\theta_i^t)|| + L ( ||\overline{w}^{t+1} - w^t|| + ||w^{t+1} - w^t||).
	\end{split}
\end{equation}
Therefore, tack expectation with respect to selected clients in round $t$, we need to bound:
\begin{equation} \label{eq30}
	\mathbb E_{\mathcal{S}^t} [f(w^{t+1})] \leq f(\overline{w}^{t+1}) + Q_t,
\end{equation}
where $Q_t = \mathbb E_{\mathcal{S}^t} [L_0 ||w^{t+1} - \overline{w}^{t+1}||]$. Using Eq.~\ref{eq29}, we have:
\begin{equation} \label{eq31}
	\begin{split}
		& Q_t \leq \mathbb E_{\mathcal{S}^t} [(||\nabla f(w^t)|| + L ||\overline{w}^{t+1} - w^t|| + ||w^{t+1} - w^t||) \\
		&\ \ \ \ \ \ \ \ \ \ \ \ \ \ \ \ \ \ \ \ \ \ \ \ \ \ \ \ \ \ \ \ \ \ \ \ \ \ \ \ \ \ \ \ \ \ \ \ \ \ \ \ \ \times ||w^{t+1} - \overline{w}^{t+1}||] \\
		&\ \ \ \ \ \ \ \ \ \  (||\nabla f(w^t)|| + L ||\overline{w}^{t+1} - w^t||) \mathbb E_{\mathcal{S}^t} [||w^{t+1} - \overline{w}^{t+1}||] \\
		&\ \ \ \ \ \ \ \ \ \ \ \ \ \ \ \ \ \ \ \ \ \ \ \ \ \ \ + L \mathbb E_{\mathcal{S}^t} [||w^{t+1} - w^t|| * || w^{t+1} - \overline{w}^{t+1}||] \\
		&\ \ \ \ \ \ \ \  (||\nabla f(w^t)|| + 2L ||\overline{w}^{t+1} - w^t||) \mathbb E_{\mathcal{S}^t} [||w^{t+1} - \overline{w}^{t+1}||] \\
		&\ \ \ \ \ \ \ \ \ \ \ \ \ \ \ \ \ \ \ \ \ \ \ \ \ \ \ \ \ \ \ \ \ \ \ \ \ \ \ \ \ \ \ \ + L \mathbb E_{\mathcal{S}^t} [||w^{t+1} - \overline{w}^{t+1}||^2].
	\end{split}
\end{equation}
Due to $||\overline{w}^{t+1} - w^t|| \leq \frac{B(1+\psi)}{\overline{\alpha}_i} ||
\nabla f(w^t)||$, we have:
\begin{equation} \label{eq32}
	\begin{split}
		&\mathbb E_{\mathcal{S}^t} [||w^{t+1} - \overline{w}^{t+1}||^2] \\
		&\ \ \ \ \ \ \ \ \ \ \ \ \ \ \ \ \ \ \  \leq \frac{1}{K} \mathbb E_i [||\theta_i^{t+1} - \overline{w}^{t+1}||^2] \\
		&\ \ \ \ \ \ \ \ \ \ \ \ \ \ \ \ \ \ \  \leq \frac{2}{K} \mathbb E_i[||\theta_i^{t+1} - w^t||^2] \\
		&\ \ \ \ \ \ \ \ \ \ \ \ \ \ \ \ \ \ \  \leq \frac{2}{K} \frac{(1+\psi)^2}{\overline{\alpha}_i} \mathbb E_i [||\nabla f_i(\theta^t)||^2] \\
		&\ \ \ \ \ \ \ \ \ \ \ \ \ \ \ \ \ \ \  \leq \frac{2B^2(1+\psi)^2}{K\overline{\alpha}_i^2} ||\nabla f(w^t)||^2,
	\end{split}
\end{equation}
where the first inequality follows from $|S^t|$ clients being chosen to get $w^t$, the second inequality follows from $\overline{w}^{t+1} = \mathbb E_i [\theta_i^{t+1}]$, the third inequality follows from Eq.~\ref{eq25} and the last inequality comes from th bounded dissimilarity assumption. 

Combining Eq.~\ref{eq31} and Eq.~\ref{eq32}, we have:
\begin{equation} \label{eq33}
	\begin{aligned}
		&Q_t\leq (\frac{\sqrt{2}B(1+\psi)}{\sqrt{K} \overline{\alpha}_i}\\
		&\ \ \ \ \ \ \ \ \ \ \ \ \ \ \ \ \ \  + \frac{LB^2(1+\psi)^2}{\overline{\alpha}_i^2 K }(2\sqrt{2K} + 2)) ||\nabla f(w^t)||^2, 
	\end{aligned}
\end{equation}
and combining Eq.~\ref{eq27},~\ref{eq28},~\ref{eq29} and~\ref{eq33}, we get:
\begin{equation} \label{eq34}
	\begin{split}
		& \mathbb E_{S^t} [f(w^{t+1})] \leq f(w^t) - (1+\xi)(\frac{1}{\alpha_i} - \frac{\psi B}{\alpha_i}\\
		&\ \ \ \ - \frac{B(1+\psi)\sqrt{2}}{\overline{\alpha}_i\sqrt{K}} - \frac{LB(1+\psi)} {\overline{\alpha}_i \alpha_i} -  \frac{L(1+\psi)^2 B^2 (1+\xi)} 
		{2\overline{\alpha}_i^2}\\
		&\ \ \ \ \ \ \ \ \ \ \ \ \ \ \ \ \ \ \ \ \ \ \ \ -  \frac{LB^2(1+\psi)^2}{\overline{\alpha}_i^2 K } (2\sqrt{2K} + 2) ) ||\nabla f(w^t)||^2.
	\end{split}
\end{equation}

\end{proof}
\subsection{Bounded Gradients} \label{sec:b.4}
Let Assumption~\ref{a1} hold, e.g. for each function $f_i(\theta)$, the bounded variance of gradient satisfies: 
\begin{equation} \label{eq39}
\mathbb E_{i\in\mathcal{S}}[||\nabla f_i(\theta_i) - \nabla f(w)||^2] \leq \sigma^2.
\end{equation}
Then, for any $\epsilon > 0$, if follows that $B_\epsilon \leq \sqrt{1 + \frac{\sigma^2}{\epsilon}}$
\begin{proof}
\begin{equation} \label{eq40}
	\begin{split}
		&\mathbb E_{i\in\mathcal{S}} [||\nabla f_i(\theta_i) - \nabla f(w)||^2]\\
		&\ \ \ \ \ \ \ \ \ \ \ \ \ \ \ \ \ \ \ \ \ \  = \mathbb E_{i\in\mathcal{S}} [||\nabla f_i(\theta_i)||^2] - ||\nabla f(w)||^2 \leq \sigma^2  \\
		& \mathbb E_{i\in\mathcal{S}} [||\nabla f_i(\theta_i)||^2] \leq ||\nabla f(w)||^2 + \sigma^2 \\
		& B_\epsilon = \sqrt{\frac{\mathbb E_{i\in\mathcal{S}} ||\nabla f_i(\theta_i)||^2}{||\nabla f(w)||^2}} \leq \sqrt{1+ \frac{\sigma^2}{\epsilon}}.
	\end{split}
\end{equation}
\end{proof}

\subsection{Convergence of FedSSG in non-convex case} \label{sec:b.5}
Given a non-convex and $L$-Lipschitz smooth local objective function $f_i$, and $f_i$ is $B$-dissimilarity, there exists $L\_ > 0$, such that $\nabla^2 f_i \geq -L\_I$ and $\overline{\alpha} = \alpha-L\_ > 0$. The global objective function $f$ satisfies: 
\begin{equation} \label{eq41}
\mathbb{E}_{\mathcal{S}^t}f(\omega^{t+1})\leq f(\omega^{t})-\frac{p||\nabla f(\omega^r)||^2}{2},
\end{equation}
where $\mathcal{S}^t$ is the client-set in $t$th round.

\subsection{Convergence of FedSSG in convex case} \label{sec:b.6}
Suppose $L\_ = 0$, $\overline{\alpha} = \alpha$ in convex case, if $\psi = 0$, which means all clients are accurately, we can get a decrease proportional to $||\nabla f(w^t)||^2$ if $B < \sqrt{K}$. Assume $1<< B \leq 0.5\sqrt{K}$, we get:
\begin{equation} \label{eq42}
\mathbb E_{\mathcal{S}^t} f(w^{r+1}) \leq f(w^t) - ((1+\xi)(\frac{1}{2\alpha} - \frac{3 L B^2}{2\alpha^2}) ) ||\nabla f(w^t)||^2).
\end{equation}
Setting $\alpha = 6LB^2$, we get:
\begin{equation} \label{eq43}
\mathbb E_{\mathcal{S}^t} f(w^{t+1}) \leq f(w^t) - \frac{(1+\xi)}{24LB^2} ||\nabla f(w^t)||^2,
\end{equation}
Using the above inequality, we characterize the FedAGC's convergence rate. Given a threshold $\epsilon$ where $\sum_{t=1}^{T} ||\nabla f(w^t)||^2 \leq \epsilon$, we denote $\Gamma=f(w^0)-L(w^*)$ to represent the optimal point at $T$th round. From Eq.~\ref{eq43}, we have:
\begin{equation} \label{eq44}
\begin{split}
	\mathbb E_{\mathcal{S}^T} f(w^T) - f(w^0) &= \mathbb E_ {\mathcal{S}^t} f(w^*) - f(w^0) \\
	& \leq - \sum_{t=1}^{T} \frac{(1+\xi)}{24LB^2} ||\nabla f(w^t)||^2, 
\end{split}
\end{equation}
from Eq.~\ref{eq44}, we get:
\begin{equation} \label{eq45}
\sum_{t=1}^{T} ||\nabla f(w^t)||^2 \leq \frac{24LB^2}{(1+\xi)} (f(w^0) - f(w^*)).
\end{equation}
Therefore, FedSSG spend $\mathcal{O}(\frac{24LB^2\Gamma}{(1+\xi)\epsilon})$ to achieve global model convergence.

\end{document}